\documentclass[10pt,journal,compsoc]{IEEEtran}
\usepackage{cite}
\usepackage{amsmath}
\usepackage{amsthm}
\usepackage{amsfonts}
\usepackage{url}
\usepackage{graphicx}
\usepackage{subfigure}
\usepackage{caption}
\usepackage{epstopdf}
\usepackage{multirow}
\usepackage{color,xcolor}
\usepackage{cite}
\usepackage[switch]{lineno}
\usepackage{CJK}
\usepackage{mathrsfs}
\usepackage{algorithm}
\usepackage{algorithmic}
\usepackage{multicol}
\usepackage{mathtools}
\usepackage{booktabs}
\usepackage{amssymb}
\usepackage{hyperref}
\usepackage{comment}
\usepackage{makecell}

\def\red{\textcolor{red}}

\newtheorem{theorem}{Theorem}

\def\Xset{\mathbf{X} }
\def\Yset{\mathbf{Y}}

\def\Pset{\mathbf{P} }
\def\Qset{\mathbf{Q} }
\def\Cset{\mathbf{C} }
\def\p{\mathbf{p}}
\def\q{\mathbf{q}}

\def\CD{\rm CD}
\def\PED{\rm PED}
\def\x{\mathbf{x}}
\def\y{\mathbf{y}}

\def\c{\mathbf{c}}

\def\N{\mathcal{N}}

\def\yhat{\hat{\mathbf{y}}}
\newcommand{\R}{\ensuremath{\mathbb{R}}}

\newif\ifannotated

\annotatedtrue

\ifannotated

\newcommand{\delete}[1]{{\color{red}{\sout{#1}}}}

\newcommand{\margincomment}[1]{\marginpar{$\Rightarrow$\color{red}\fbox{\parbox{\linewidth}{\color{black}\scriptsize#1}}}}
\else

\newcommand{\delete}[1]{{\ignorespaces}}

\newcommand{\margincomment}[1]{}
\fi

\begin{document}

\title{MPED: Quantifying Point Cloud Distortion Based on Multiscale Potential Energy Discrepancy}
\author{Qi Yang,
        Yujie Zhang,
        Siheng Chen,
        Yiling Xu,~\IEEEmembership{Member,~IEEE},
        Jun Sun, and
        Zhan Ma,~\IEEEmembership{Senior Member,~IEEE}
        \IEEEcompsocitemizethanks{\IEEEcompsocthanksitem Q. Yang, Y. Zhang, S. Chen, Y. Xu, J. Sun are from Cooperative Medianet Innovation Center, Shanghai Jiao Tong University, Shanghai, 200240, China, (e-mail: \{yang\underline{\hbox to 0.1cm{}}littleqi, yujie19981026, sihengc, yl.xu, junsun\}@sjtu.edu.cn) \\
        \IEEEcompsocthanksitem Z. Ma is with the Nanjing University, Nanjing, Jiangsu, 210093, China, (email: mazhan@nju.edu.cn)}
        \thanks{(Q. Yang and Y. Zhang contributed equally to this work.)}
        \thanks{(Corresponding author: Y. Xu)}
}

\IEEEtitleabstractindextext{
\begin{abstract}
In this paper, we propose a new distortion quantification method for point clouds, the multiscale potential energy discrepancy (MPED). Currently, there is a lack of effective distortion quantification for a variety of point cloud perception tasks. Specifically, in human vision tasks, a distortion quantification method is used to predict human subjective scores and optimize the selection of human perception task parameters, such as dense point cloud compression and enhancement.  In machine vision tasks, a distortion quantification method usually serves as loss function to guide the training of deep neural networks for unsupervised learning tasks (e.g., sparse point cloud reconstruction, completion, and upsampling). Therefore, an effective distortion quantification should be differentiable, distortion discriminable, and have low computational complexity. However, current distortion quantification cannot satisfy all three conditions. To fill this gap, we propose a new point cloud feature description method, the point potential energy (PPE), inspired by classical physics. We regard the point clouds are systems that have potential energy and the distortion can change the total potential energy.  By evaluating various neighborhood sizes, the proposed MPED achieves global-local tradeoffs, capturing distortion in a multiscale fashion. We further theoretically show that classical Chamfer distance is a special case of our MPED. Extensive experiments show that the proposed MPED is superior to current methods on both human and machine perception tasks. Our code is available at https://github.com/Qi-Yangsjtu/MPED.

\end{abstract}

\begin{IEEEkeywords}
	Distortion quantification, Objective quality assessment, Human and machine perception, Potential energy, Point cloud.
\end{IEEEkeywords}}

\maketitle

\IEEEdisplaynontitleabstractindextext

%
\IEEEpeerreviewmaketitle

\section{Introduction}\label{sec:intro}
Three-dimensional (3D) point cloud has become an important geometric data
structure, which precisely presents the external surfaces of objects or scenes in the 3D space. They can be widely used in augmented reality~\cite{lim2020Augmented}, autonomous driving~\cite{ChenLFGW:20}, industrial robotics~\cite{Rusu2011PCL}, art documentation~\cite{Dore2012HBIM} and many others. To provide an immersive user experience and reliable performance, a high-quality point cloud is in high demand. For example, point clouds used in augmented reality provide rich color and detailed geometric structures to better serve human perception; and point clouds used in autonomous driving present informative depth information for accurate 3D object detection  (e.g., pedestrians, trees and buildings). 

To evaluate the distortion of a point cloud based on its reference version, we need a reliable, accurate and objective distortion quantification that can be generally applied to a wide range of scenarios. To reflect human perception, a distortion quantification should be sensitive to both geometric and color distortion. For example, the point cloud ``LongDress''\cite{yang2020inferring} provided by the moving picture experts group (MPEG)  involves the RGB color as additional attributions. After being compressed by geometry-based point cloud compression (G-PCC), the point number might decrease or increase, and points' colors might be deteriorated due to the quantization \cite{javaheri2019point}.

\begin{figure}[pt]
	\centering
		\includegraphics[width=1\linewidth]{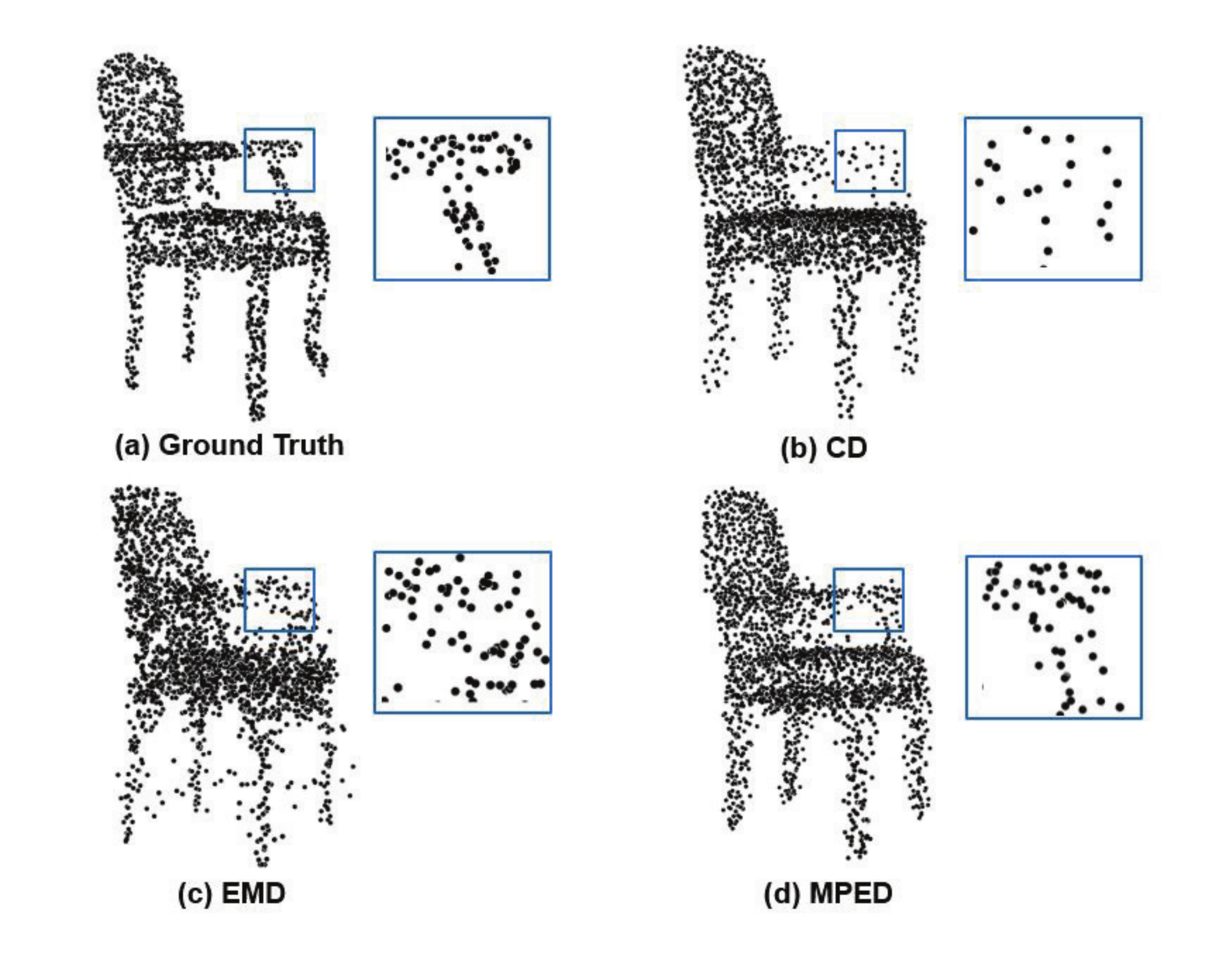}
	\caption{Point cloud reconstruction with LatentNet \cite{achlioptas2018learning}. (a) baseline; (b) CD as loss function; (c) EMD as loss function; (d) MPED as loss function. MPED provides the best reconstruction.}
	\label{fig:point_example}
\end{figure} 

Besides being used for quality assessment in human perception tasks, an objective distortion quantification could also serve for machine perception, which is a loss function to guide the training of deep neural networks. Specifically, unsupervised tasks of machine perception generally aim to extract informative features from point clouds without using human supervision~\cite{zhang2019unsupervised}. Some typical tasks include reconstruction~\cite{achlioptas2018learning,chen2019deep}, completion~\cite{liu2020morphing,huang2020pf}, and upsampling~\cite{qian2019pu}. To train those networks, a distortion quantification usually works as the loss function to supervise the output of the network towards the ground truth. Therefore, a distortion quantification should be differentiable to enable backward propagation. Since machine perception tasks need to train networks with a huge amount of data, a distortion quantification needs to have a low resource overhead to facilitate training.

Based on the above analysis, there are several conditions for the desirable design of the point cloud distortion quantifications. For human perception tasks, a distortion quantification should reflect human visual characteristics and be computationally effective to enable real-time applications; for machine perceptions tasks, a distortion quantification is desired for distortion discrimination, differentiability and low complexity.
Currently, however, there is a lack of a generic distortion quantification for point clouds that can satisfy the above conditions at the same time. For example, Chamfer distance (CD)\cite{achlioptas2018learning}, Earth mover's distance (EMD)\cite{kuhn1955hungarian},  and $\rm PSNR_{YUV}$ \cite{torlig2018novel} use point-wise distance to quantify distortion, which presents unstable performance when faced with multiple distortions. Besides, as the most widely used loss function in machine perception tasks, CD has a shortcoming named ``Chamfer's blindness'' \cite{achlioptas2018learning} or ``point-collapse''\cite{pang2020tearingnet} which can lead to unexpected results (Fig. \ref{fig:point_example} (b)). Point cloud quality metric (PCQM)\cite{meynet2020pcqm} and GraphSIM\cite{yang2020inferring} present obvious performance improvement in human perception tasks compared with CD, but the feature extraction methods (e.g., surface fitting and graph construction) used in these two methods are time-consuming. This work aims to fill this gap and designs a universal distortion quantification that can present robust and reliable performance on point cloud distortion measurement.

{\bf Proposed Distortion Quantification.}
In this paper, we present a new point cloud distortion quantification named multiscale potential energy discrepancy (MPED). 

First, we introduce a new feature extraction method for 3D point cloud distortion quantification; called point potential energy (PPE). The proposed PPE is inspired by the potential energy used in classical physics.  In classic mechanics, the potential energy is the energy that is stored in an object due to its position relative to some zero position, which is a quantitative measure of the object's physical state. For example, the gravitational potential energy relative to the Earth's surface is defined as $E = mgh,$
where $m$ is the mass, $h$ is the Euclidean distance to the origin $\x_0$ and $g$ reflects the gravitational field. The potential energy of a point $\x_i \in \R^6$ is
{\setlength\abovedisplayskip{1pt}
\setlength\belowdisplayskip{1pt}
\begin{equation}
\label{eq:classical_gravitation}
  E_{\x_i} = m_{\x_i} g_{\x_i} h_{\x_i},
\end{equation}}
 where the zero potential point is at $\x_0$.

Considering lossless 3D point cloud is a stable spatial system, the distortion can be regarded as system disturbance caused by the external force~\cite{Valsera1997reconstruction}, and the influence of disturbance can be quantified by the difference of PPE, noted as potential energy discrepancy (PED).

Moreover, the achievement in 2D images (e.g., MSSIM~\cite{wang2003multiscale} and MSEA\cite{yang2019modeling}) and previous research (e.g., MS-GraphSIM \cite{zhang2021ms} and IT-PCQA~\cite{new11-Yang_2022_CVPR}) also inspire us that multiscale features provide a more comprehensive measurement of the overall impact of distortion. Therefore, we fuse the thoughts about distortion-energy inspiration and multiscale representation to design the distortion quantification metric, i.e., MPED.  Fig. \ref{fig:energy} illustrates the scheme of the proposed MPED: i) we first divide a point cloud into multiple spatial neighborhoods and set the center of each neighborhood as zero potential energy plane; ii) then, we compare the differences of PPE of each corresponding neighborhood in the source and target point clouds; iii) we pool the energy discrepancy of multiple neighborhoods into the final PED; iv) in the end we fuse the PED under the different settings of neighborhood-scale to generate the final results.

\begin{figure}[pt]
	\centering
    \includegraphics[width=1\linewidth]{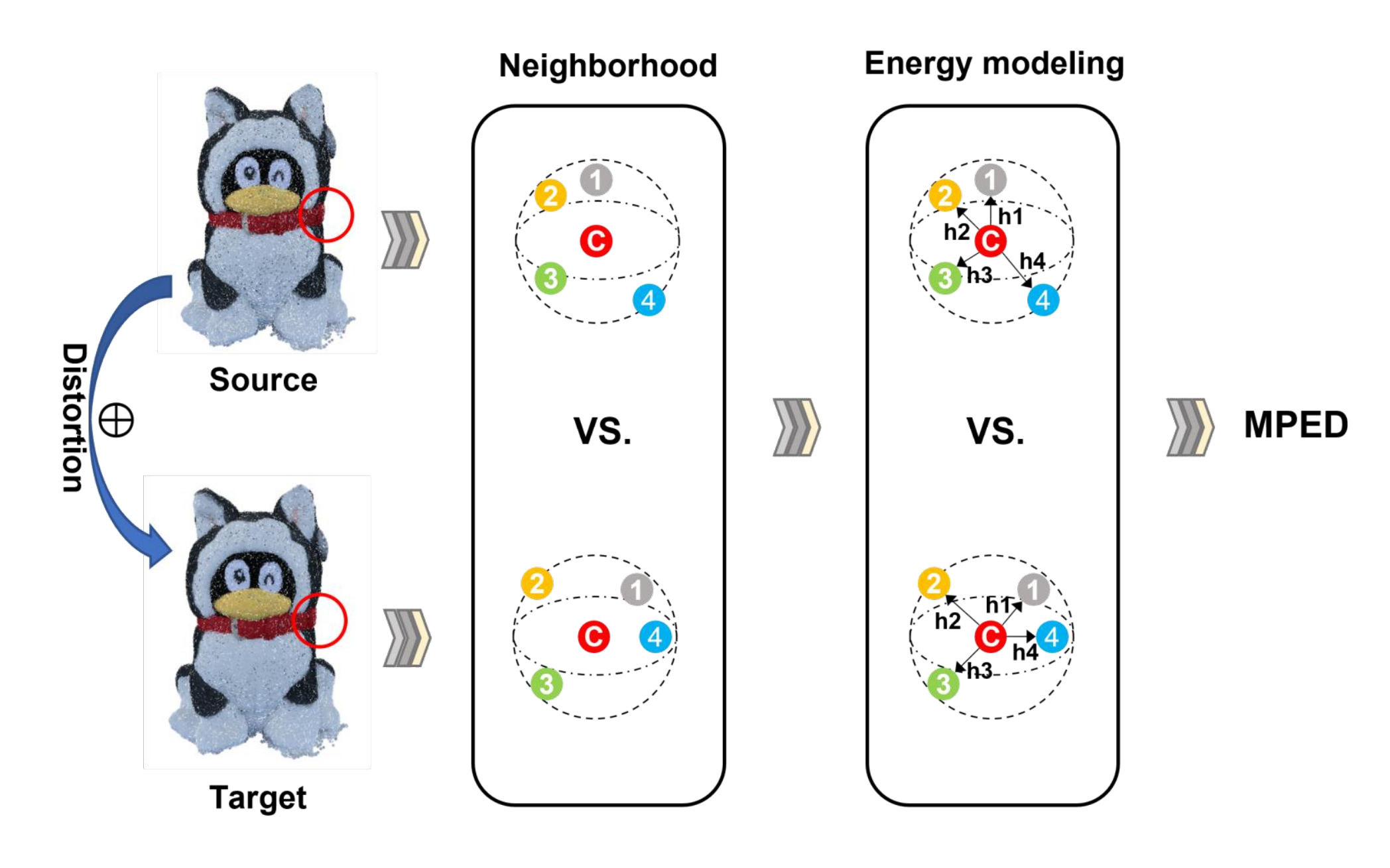}
    \caption{MPED compares the total point potential energies of multiple neighboring point clouds in the source and the target. The region bounded by the red circle represents a neighborhood, the red points labeled with "C" represent the neighborhood center, other color points labeled with numbers represent adjacent points, $h_i$ is the distance between center and its neighbor.}
	\label{fig:energy}
\end{figure}

We present the properties of MPED meticulously. We first prove that CD is a special case of MPED, and MPED satisfies three desired conditions for a universal
objective distortion quantification, i.e., distortion discrimination, differentiability, and low complexity. Next, we respectively analyse the effectiveness of MPED on human and machine perception tasks. For human perception tasks, we use several toy examples to verify the distortion sensitivity of MPED. For machine perception tasks, we analyse the derivative of MPED to show its robustness during the backward propagation.

We validate the performance of MPED based on human and machine perception tasks. In terms of human perception tasks, the proposed MPED presents better performance on perception prediction. Specifically, MPED provides better Pearson linear correlation (PLCC), Spearman rank-order correlation (SROCC) and root mean squared error (RMSE) 
on publicly accessible databases (e.g., SJTU-PCQA~\cite{yang2020predicting}, LS-PCQA~\cite{liu2022point} and WPC \cite{su2019perceptual}). In terms of machine perception tasks, the proposed MPED achieves a better tradeoff between effectiveness and efficiency. Compared with CD, MPED solves the ``point-collapse'' issue; compared with EMD, MPED reduces the computational cost from $O(N^3)$ to $O(N^2+KN)$. Specifically, MPED provides better performance compared to CD and EMD on three unsupervised machine perception tasks (i.e., point cloud reconstruction, shape completion, and upsampling).

The main contributions of this paper are summarized as:

$\bullet$ We present a new universal distortion quantification, MPED, for point clouds. Inspired by classic mechanics, we use potential energy variation to quantify point cloud distortion. 

$\bullet$ MPED satisfies the three desired conditions for a universal objective distortion quantification; that is, MPED is differentiable, low-complexity, and distortion discriminative. 

$\bullet$ MPED shows reliable performance for both human and machine perception tasks.

The rest of the paper is organized as follows.  Section ~\ref{sec:related_work} presents the related work of point cloud distortion quantifications. Section~\ref{sec:problem formulation} formulates the point cloud representation and distortion quantification in different tasks. Section ~\ref{sec:ppe} demonstrates the proposed potential-energy-based feature used for point cloud distortion quantification.
Section~\ref{sec:model} first presents the generic formulation of the proposed MPED, and then presents the implementation details of MPED on both human and machine perception tasks. Section~\ref{sec:property} presents the properties of MPED. Section~\ref{sec:experiment} gives the experiment results. Section~\ref{sec:conclusion} concludes the paper. 

\section{Related Work}\label{sec:related_work}

Point cloud distortion quantifications are first used in human perception tasks. To evaluate the distortion introduced by lossy compression, MPEG first uses p2point as the distortion quantification. Specifically, p2point shares the same calculation method as the CD. However, according to a serial experiments~\cite{alexiou2017subjective,alexiou2017performance,da2019point,alexious2018point,alexiou2018pointt,alexiou2019exploiting,torlig2018novel}, p2point presents unstable results in terms of multiple types of distortion. Studies~\cite{wang2004ssim,wang2003multiscale} in image distortion quantification have proven that the human visual system (HVS) is more sensitive to structural features, rather than point-wise features. Therefore, recently proposed distortion quantifications for human perception tasks, such as p2plane~\cite{tian2017geometric}, Angular Similarity~\cite{new2-alexiou2018point}, Point-to-Distribution~\cite{new3-javaheri2020mahalanobis}, Local Luminance Patterns (LLP) ~\cite{new4-diniz2020local}, BitDance~\cite{9450013}, PointSSIM~\cite{alexiou2020towards},  PCQM~\cite{meynet2020pcqm}, EPES\cite{new1-xu2021epes} and GraphSIM~\cite{yang2020inferring}\cite{zhang2021ms} use the distortion of structural features to replace the distortion of point-wise features. Also, there are some research using statistical features to quantify dense point cloud distortion, such as ~\cite{new6-viola2020color} and \cite{new7-viola2020reduced}. Experimental results demonstrate that these methods provide better performance than the original p2point. For example, experiments in~\cite{yang2020inferring} show that the utilization of structure features (e.g., gradients) presents better performances when predicting human perception in terms of compression-related distortion (e.g., G-PCC and video-based point cloud compression (V-PCC)). Meanwhile, some quantifications have been applied in compression-related works and achieved good performance~\cite{liu2021reduced}. However, most of these previous works (e.g., PCQM and GraphSIM) need to convert scattered points to inerratic representations, such as surface and spatial graph, which are time-consuming.

Besides, point cloud distortion quantifications are commonly used as loss functions to assist in unsupervised machine perception tasks (such as point cloud reconstruction, shape completion, and upsampling).
There are two mainly used quality quantifications in current machine perception tasks: CD and EMD. 
EMD is the solution to a transportation problem that attempts to transform one set into the other.

The advantage of CD is low resource overhead. Since one point can be used multiple times during point matching, this causes a significant drawback for CD, known as ``point-collapse''\cite{pang2020tearingnet} or ``Chamfer's blindness''\cite{achlioptas2018learning}. Specifically, the generated samples that use CD as a loss function suffer from obvious structure deformations. Fig.~\ref{fig:point_example} (b) shows an example in which the points of a chair arm in the reconstruction are sparser than those in the ground truth, while the points of the seat in the reconstruction are denser than those in the ground truth. Points are over-populated around collapse centers\cite{pang2020tearingnet}. The reason is that a reference point may be used multiple times in the point-nearest-neighbor search, and the final results fall into local optima.

EMD is a method that computes the distance between two distributions as the minimum mass that needs to transfer to match two distributions. Compared to CD, the point matching in EMD is stricter as any point that has been used in the point matching cannot be used to match with other points anymore. It means that the pairing of two points considers their spatial distance as well as the global transfer costs. Therefore, EMD is more reliable than CD in machine perception tasks.  A comparison of reconstruction results with CD and EMD as loss functions in~\cite{achlioptas2018learning} reveals that CD can only provide partially good reconstructed results while EMD can provide better global reconstruction. Despite the better performance, many methods avoid using EMD because of its computational complexity~\cite{groueix2018papier,pang2020tearingnet,chen2019deep}. The computational complexity of EMD is $O(N^3)$~\cite{kuhn1955hungarian} and that of CD is $O(N^2)$ for $N$ points. 

There are some other loss functions used for unsupervised point cloud learning and most of them are variants of CD or EMD. Wu et.al. \cite{ wu2021density} proposed the density-aware CD to better detect the disparity of density distributions for point cloud completion and Liu et.al. \cite{liu2020morphing} provides a lightweight version of EMD with complexity $O(N^2k)$, where $k$ is the number of iterations to search for the best-matched point pair. Another related work is by Urbach et. al. \cite{urbach2020dpdist},  which proposed DPDist that compares point clouds by measuring the distance between the surfaces that they were sampled on. However, these works mostly focus on the optimization of one specific task (e.g., point cloud completion or registration) and lack validation in various vision tasks. In comparison, we start from the view of classic mechanics and validate the effectiveness of our method via theoretical proof and experiments on multiple unsupervised tasks.

\section{Problem Formulation}\label{sec:problem formulation}
In this section, we first introduce some basic properties of 3D point clouds. Then we present the details on the application of distortion quantification in human and machine tasks. Finally, we introduce the commonly used distortion quantification in unsupervised learning, i.e., Chamfer Distance.
\subsection{3D Point Cloud}\label{sec:problem_formulation_1}

Let $\Pset$ be a 3D point cloud with $N$ points: $\Pset = \{\x_1, \ldots, \x_N\} \in \mathbb{R}^{N\times6}$, where each $\x_i \in \mathbb{R}^6$ is a vector with 3D coordinates and three-channel color attributes. We only consider color attributes of point cloud in this paper; therefore, $\x_i=[x, y, z, R, G, B]\equiv[\x_i^O, \x_i^I]$, where $\x_i^O=[x, y, z]$ and $\x_i^I=[R, G, B]$. The superscript ``O'' stands for geometric {\it occupancy}, and ``I'' stands for color {\it intensity}.

\subsection{Distortion Quantification of 3D Point Clouds}\label{sec:problem_formulation_2}
An objective distortion quantification aims to measure the difference between two point cloud samples. For two point clouds $\Pset$ and $\Qset$, the difference between them can be formulated as 
$d(\Pset, \Qset)$, where $d(\cdot)$ represents an objective distortion quantification. 

For human or machine perception tasks, point cloud distortion quantification can have different implementations and requirements, which is mostly due to the differences in processing objects and task formulations. Therefore, we respectively reveal the details of distortion quantification in human and machine perception tasks as follows.  

\begin{table}
\caption{Task properties of human and machine perception.}\label{TABLE:task comparison}
\begin{center}
\begin{tabular}{|c|c|c|c|}
  \hline
   \multicolumn{2}{|c|}{} & Human Perception & Machine Perception \\  \hline
  \multirow{2}{0.8cm}{\centering {Point}\\ {Clouds}} & Density & dense &  sparse \\ \cline{2-4}
  & Color & colored &  uncolored \\ \hline
  \multicolumn{2}{|c|}{\thead{Task\\ Formulation}} & $d(\Pset,M(\Pset))=s$ & $\min_{\Psi} d(\Pset, \Psi(M(\Pset)))$ \\ \hline

  \multicolumn{2}{|c|}{\thead{Specific\\ Requirements }} & \thead{ 1. human visual \\oriented distortion \\sensitivity \\ 2. low complexity \\(in real \\time applications)} & \thead{1. any distortion\\ sensitivity \\2. differentiable \\3. low complexity}\\

  \hline
\end{tabular}
\end{center}

\end{table}

{\bf Distortion Quantification in Human Perception Tasks.}
The point clouds used for human perception tasks target for visualizations, which usually require dense point distributions and contain rich color information. Therefore, distortion quantification aims to evaluate the quality degree of a distorted point cloud $M(\Pset)$ corresponding to its reference $\Pset$ refers to the mechanism of HVS, where $M(\cdot)$ represents an operator that influences the quality of point clouds, such as compression and transmission\cite{new9-liuqiPCC}. 

Mathematically, through subjective experiment, a quality score can be derived as
{\setlength\abovedisplayskip{1pt}
\setlength\belowdisplayskip{1pt}
\begin{equation}
     s=\Omega(\Pset,M(\Pset)),\nonumber
\end{equation}}
$\Omega(\cdot)$ is the evaluation criterion of experimenters, which can be influenced by acquired learning\cite{BT500}. The purpose of distortion quantification for human perception is to find a model $d(\cdot)$ that satisfies
{\setlength\abovedisplayskip{1pt}
\setlength\belowdisplayskip{1pt}
\begin{equation}\label{eq:hv_formulation}
  d(\Pset,M(\Pset)))=s=\Omega(\Pset,M(\Pset)).
\end{equation}}
It means the designing of objective distortion quantification needs to consider the characteristics of the human visual system and subjective criteria simultaneously.

{\bf Distortion Quantification in Machine Perception Tasks.} 
The point clouds used for 3D machine perception are usually sparser than those for human perception and do not have color information because they are not consumed by human beings. These samples are often used in unsupervised learning and distortion quantification can serve as a loss function to guide the training of deep neural networks. 

Specifically, in unsupervised learning tasks, we usually aim to design an architecture that can produce a 3D point cloud to approximate an original 3D point cloud. Mathematically, let $\Pset = \{ \p_i \in \R^3 \}_{i=1}^{N}$ be an original 3D point cloud and $M(\cdot)$ be an generic operator that produces a partial, noisy or subsampled 3D point cloud, $M(\Pset)$. A network $\Psi$ takes $M(\Pset)$ as input and produces a reconstruction, $\Psi(M(\Pset))$, which is desired to approximate the original 3D point cloud $\Pset$. To train such a network, a typical paradigm is to solve the following optimization problem,
{\setlength\abovedisplayskip{1pt}
\setlength\belowdisplayskip{1pt}
\begin{eqnarray}\label{eq:cv_formulation}
 \min_{ \Psi} & d(\Pset, \Psi(M(\Pset))), \nonumber
\end{eqnarray}}
where the distortion quantification $d(\cdot)$ is a loss function to guide the optimization of network. It is clear that the design of a loss function would significantly influence the network training and the corresponding reconstruction performances.

{\bf Conclusion.} For unsupervised learning of 3D point clouds, a desirable loss function should satisfy the following conditions:

$\bullet$ Differentiability;

$\bullet$ High computation efficiency and low complexity;

$\bullet$ Sensitive to any geometrical distortion.

For human perception prediction, we do not require the quantification is differentiable, but it needs to further consider the characteristics of HVS when evaluating the influence of geometry and color distortion, and low complexity is necessary for some real-time applications (e.g., VR game).

We compare the task properties of human and machine perception in Table \ref{TABLE:task comparison}.

\subsection{Loss Function in Unsupervised Learning}\label{sec:problem_formulation_3}
In machine perception tasks of point clouds, one commonly used loss function is CD. Therefore, in this part, we use CD as an example to illustrate the importance of the loss function. 

We first present the calculation method of CD. For each point in one 3D point cloud, we find its correspondence in the other 3D point cloud via nearest neighbor search and measure their Euclidean distance. CD considers the aggregated Euclidean distances; that is,
{\setlength\abovedisplayskip{1pt}
\setlength\belowdisplayskip{1pt}
\begin{equation}
\label{eq:CD}
d_{\mathrm{CD}}(\Pset,\Qset) =\sum\limits_{\p \in \Pset} \min\limits_{\q \in \Qset} \parallel \p-\q \parallel_{2}^{2} + \sum\limits_{\q\in \Qset} \min\limits_{\p \in \Pset} \parallel \p-\q\parallel_{2}^{2}.
\end{equation}}
The reason that CD calculate the two-way distance between $\Pset$ and $\Qset$ is to ensure all the point will be used at least one time, which means each point will be optimized at least once during neural network training. 

However, CD purely relies on the geometric information measured by the distances between 3D point pairs and could trap the network at a local minimum during the training. Suppose that we have two distinct reconstruction results, i.e., case 1 and case 2 in Fig. \ref{fig:toy_example}. Red circles represent the ground truth, while  blue and green circles represent reconstructed points. The numbers near edges represent the square of Euclidean distance between reconstructed and reference points matched by nearest neighbors. Referring to Eq.~\eqref{eq:CD}, $\mathrm{d_{CDcase1}}=(2+2+2+2)\times2=16$, and $\mathrm{d_{CDcase2}}=(1+1+1+5)\times2=16$. This means when we use CD as the loss function, the network can converge to different geometric structures with the same loss value. Therefore, this network easily falls into a local minimum and is hard to improve further under the supervision of CD. Unfortunately, EMD could encounter a similar problem as it only considers the point-wise features.

Therefore, an effective loss function is extreme important for these unsupervised learning tasks, which drives us to design the MPED to solve above problem. 
\begin{figure}[tt]
\setlength{\abovecaptionskip}{0.cm}
\setlength{\belowcaptionskip}{-0.cm}
	\centering
    \includegraphics[width=1\linewidth]{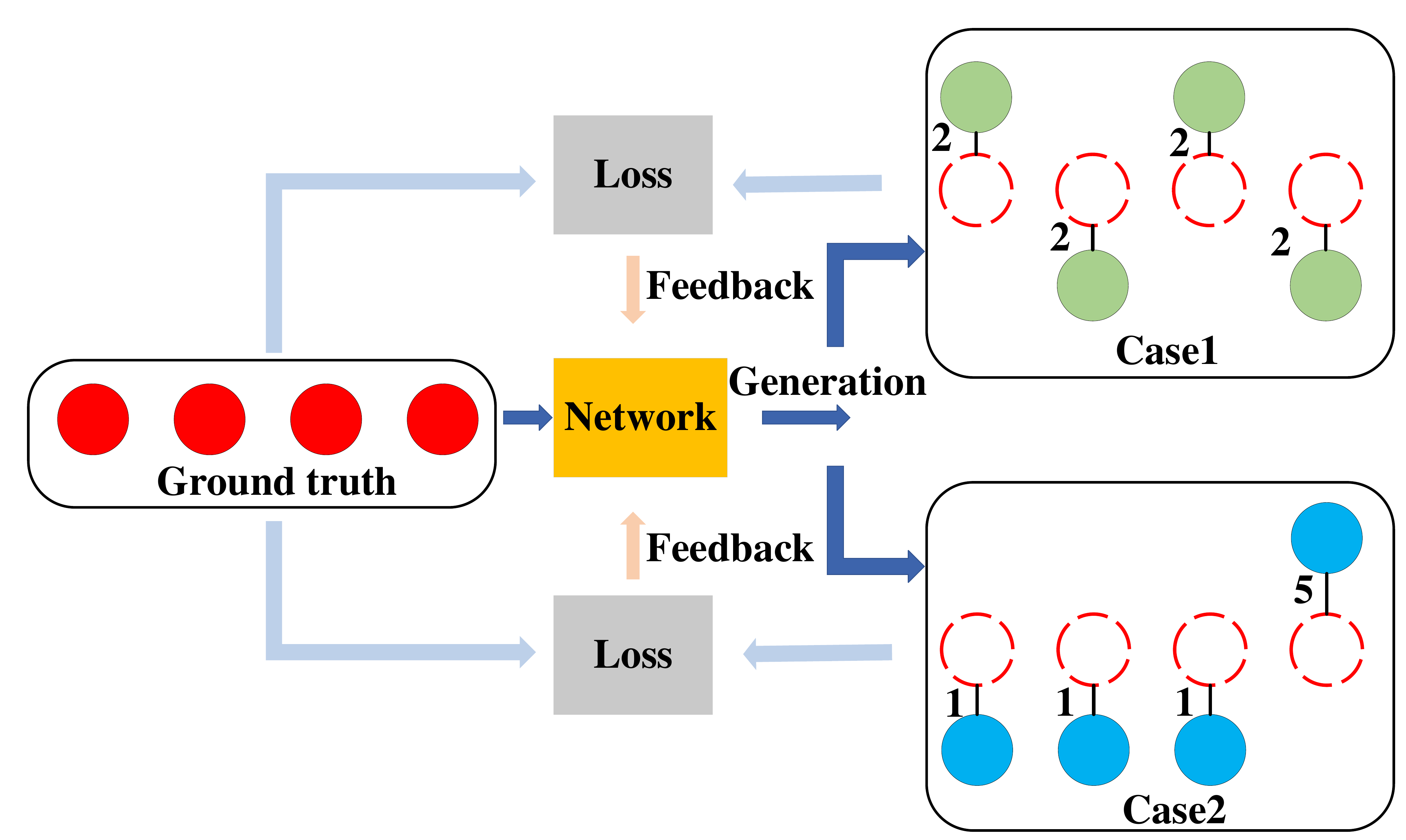}%
	\caption{A toy example of CD limitation. Different geometries may have the same CD value. The numbers near edges in case 1 and case 2 represent
the square of Euclidean distance between reconstructed
and reference points matched by nearest neighbors.}
	\label{fig:toy_example}
\end{figure}

\section{PPE: Point Potential Energy}\label{sec:ppe}
Before presenting our objective distortion quantification between two 3D point clouds, we first introduce a novel feature, i.e., PPE, to quantify the spatial distribution of 3D points. This feature is inspired by the gravitational potential energy in physics and lays a foundation for the proposed distortion quantification. Let  $\Pset_{\x_0} = \{\x_i \in \R^6\}_{i=1}^K$ be a set of $K$ 3D points with $\x_0$ being the origin. We aim to propose a function $\phi(\cdot)$ so that $E_{\Pset_{\x_0}} = \phi(\Pset_{\x_0})$ reflects the spatial distribution of $\Pset_{\x_0}$.

Point cloud distortion might be related to both geometric and attribute information.
Therefore, for a generalized potential energy field, we consider that the mass $m_i$ is related to the point attributes, i.e., $m_{\x_i}=f(\x_0^I, \x_i^I)$; and the gravitational field $g_{\x_i}$ and distance $h_{\x_i}$ is related to the point geometric coordinates, reflecting the spatial location relative to the zero potential plane, e.g., $g_{\x_i}=g(\x_0^O, \x_i^O)$, $h_{\x_i}=h(\x_0^O, \x_i^O)$. Note for a generalized field, $h_{\x_i}$ might not be limited to the Euclidean distance.  We will give specific formulations of $m_{\x_i}$, $g_{\x_i}$ and $h_{\x_i}$ for human and machine perception tasks in Section \ref{sec:imple_hv} and Section \ref{sec:imple_cv}.

For the entire point cloud $\Pset_{\x_0}$, refer to the Eq.\eqref{eq:classical_gravitation} in Section \ref{sec:intro}, we define the total point potential energy as
{\setlength\abovedisplayskip{1pt}
\setlength\belowdisplayskip{1pt}
\begin{eqnarray}
\label{eq:total_point_potential_energy}
 E_{\Pset_{\x_0}} \ = \   \sum_{ \x_i \in \Pset_{\x_0}} E_{\x_i},
\end{eqnarray}}
which aggregates the potential energies of all the points in $\Pset_{\x_0}$. The proposed total point potential energy is a quantitative measure of the state of a 3D point cloud, reflecting the geometric, attributive, and contextual information in this point cloud.

An advantage of the potential energy is its sensitivity to the isometrical distortion, which plays an import role in machine perception tasks to generate high-fidelity samples.

\noindent{\bf Definition 1: isometrical distortion distinguishability.}  
Let $\Pset$ be a reference point cloud and $M(\Pset) = \Pset \oplus \Delta$ be a distorted point cloud, $\oplus$  represents the element-wise addition and $\Delta$ represents some perturbation that distorts the 3D coordinates or attributes of one or multiple points. A measure $d(\cdot)$ is isometrical distortion indistinguishable when we can find a mapping function $Q(\cdot)$ that results in $d(\Pset, M(\Pset)) = Q(\Delta)$ for an arbitrary perturbation $\Delta$.

Intuitively, the isometrical distortion indistinguishability reflects that the measure $d(\cdot)$ is only aware of the distortion and is invariant to the ego property of a point cloud. This is unfavorable because in this case multiple perturbations could lead to the same measure value; that is, $d(\cdot)$ might not distinguish a specific perturbation applied on a point cloud. Thus a good distortion quantification between two point clouds should be isometrical distortion distinguishable.

\noindent{\bf Toy example.}
We illustrate a toy example of the isometrical distortion in Fig. \ref{fig:isometrical}. Assuming $\x_0$ is the origin, the yellow line is a coordinate axis, which represents the spatial distance to the origin or color intensity difference compared with the origin. When the axis means the Euclidean distance to the origin, for a point $\x_1$, $d_{1}(\cdot)$ is a measure that calculates the Euclidean distance differences after applying perturbation $\Delta$. Mathematically, 
{\setlength\abovedisplayskip{1pt}
\setlength\belowdisplayskip{1pt}
\begin{align}
d_{1}(\x_1, \x_1 \oplus \Delta) = &|{\|\x_1-\x_0\|}_2 - {\|\x_1^{'}-\x_0\|}_{2}|\nonumber \\ 
=& |{\|\x_1-\x_0\|}_2 - {\|\x_1^{''}-\x_0\|}_{2}|=\Delta.\nonumber
\end{align}}
We see that $d_{1}(\cdot)$ is not sensitive to the direction of perturbation. The distorted locations $\x_{1}^{'}$ and $\x_{1}^{''}$ play the same role for $d_{1}(\cdot)$.

Moreover, for a point $\x_2\neq \x_1$, 
{\setlength\abovedisplayskip{1pt}
\setlength\belowdisplayskip{1pt}
\begin{align}
d_{1}(\x_2,\x_2 \oplus \Delta) = &|{\|\x_2-\x_0\|}_2 - {\|\x_2^{'}-\x_0\|}_{2}|\nonumber \\ 
=& |{\|\x_2-\x_0\|}_2 - {\|\x_2^{''}-\x_0\|}_{2}|\nonumber \\
=&\Delta  = d_{1}(\x_1,\x_1 \oplus \Delta). \nonumber
\end{align}}
We see that $d_{1}(\cdot)$ is not sensitive to the initial location of perturbation. The initial locations $\x_1$ and $\x_2$ play the same role for  $d_{1}(\cdot)$.

When the axis means the color intensity difference compared with the origin, and $d_{1}(\cdot)$ is a measure that captures the color differences after perturbation, the above derivations are still true. 

We consider six point clouds, $\Pset_{1} = [\x_0, \x_1]$, $\Pset_{2} = [\x_0, \x_1^{'}]$, $\Pset_{3} = [\x_0, \x_1^{''}]$, $\Pset_{4} = [\x_0, \x_2]$, $\Pset_{5} = [\x_0, \x_2^{'}]$, and $\Pset_{6} = [\x_0, \x_2^{''}]$. Each point cloud has two points, and CD as $d_{1}$.

Refer to \eqref{eq:CD}, we have
{\setlength\abovedisplayskip{1pt}
\setlength\belowdisplayskip{1pt}
\begin{align}
{\mathrm{CD}}(\Pset_1, \Pset_2)=&{\mathrm{CD}}(\Pset_1, \Pset_3)={\mathrm{CD}}(\Pset_4, \Pset_5)\nonumber \\ =&{\mathrm{CD}}(\Pset_4, \Pset_6)=2{\Delta}^2.\nonumber
\end{align}}

We see that CD cannot detect the influence of the same perturbation on different point clouds. The reason is that the CD only considers individual point features.

On the other hand, PPE takes contextual information into consideration, which can also help distinguish isometrical distortion. Specifically, we use $m_{\x_i}$, $g_{\x_i}$ and $h_{\x_i}$ to describe the unique role of each point for the whole point cloud. Under the influence of the same perturbation, each point will give a different response to facilitate discrepancy quantification. We will validate the isometrical distortion distinguishability of point potential energy through multiple toy examples in Section~\ref{sec:property} and appendix.

\begin{figure}[pt]
	\centering
    \includegraphics[width=1\linewidth]{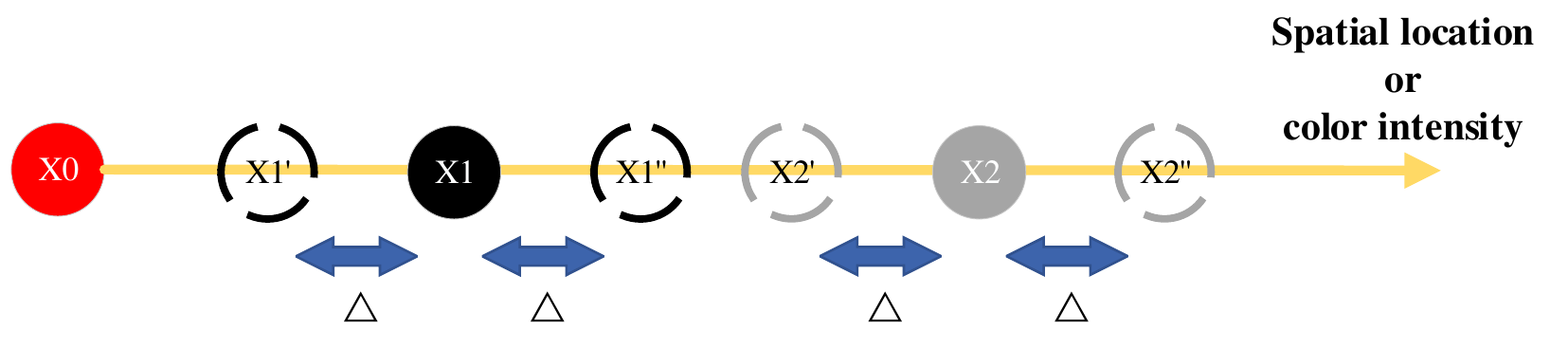}
	\caption{Toy example of isometrical distortion.}\label{fig:isometrical}
\end{figure}

\section{MPED: Measuring Point Cloud Distortion via Multiscale Potential Energy Discrepancy}\label{sec:model}

In this part, we first introduce a general point cloud distortion quantification method, i.e., MPED, based on the proposed PPE, then we give the specific form of MPED for both human and machine perception tasks. 

\subsection{General Form of MPED}\label{sec:imple_hv}

As illustrated in Fig. \ref{fig:energy}, we present the proposed MPED in several steps: i) neighborhood construction. Both source and target point clouds are divided into multiple local neighborhoods based on a set of points as the neighborhood centers; ii) for one neighborhood, the neighborhood center is set to be the zero potential energy plane. We then compute the potential energy for each point, and the total potential energy of a neighborhood is the summation of the potential energies of all the points inside the neighborhood; iii) we propose the PED to quantify the difference between the source and the target point clouds, leading to the single-scale potential energy discrepancy; and iv) we finally extend PED to a multiscale form, i.e., MPED.

{\bf Neighborhood Construction.} Let $\Xset = \{\x_i \in \R^6\}_{i=1}^N$ and $\Yset = \{\y_j \in \R^6\}_{j=1}^M$ be source and target 3D point clouds with $N$ and $M$ points, respectively. Assuming a set of points as neighborhood center $\Cset=\{\c_l \in \R^6\}_{l=1}^{L}$, its neighboring point set in the source point cloud is $\mathcal{N}_{\c_l, K}^{\Xset} \subset \Xset$ with $\c_l$ being the origin, which collects $\c_l$'s $K$ closest points in $\Xset$. Similarly, $\mathcal{N}_{\c_l,K}^{\rm \Yset} \subset \Yset$ denotes $\c_l$'s $K$ closest points in $\Yset$. Now, each center point has two local neighboring point sets that reflect its contextual roles in source and target point clouds, respectively. We then can use the total point potential energy, Eq. ~\eqref{eq:total_point_potential_energy} to quantify the spatial distribution properties in two neighboring point sets; that is,
$E_{\mathcal{N}_{\c_l, K}^{\Xset}}$ and $E_{\mathcal{N}_{\c_l, K}^{\Yset}}$, respectively.

{\bf Point Potential Energy Discrepancy.}
As we demonstrate in Section \ref{sec:intro}, the distortion of point clouds is related to both geometric and attributive information.  To capture this distortion, we can leverage the PPE proposed in Section \ref{sec:ppe}, which quantifies the spatial distribution and color of 3D points via Eqs.  \eqref{eq:classical_gravitation} and \eqref{eq:total_point_potential_energy}. Considering a point $\x_i$ in $\Xset$ corresponding to the origin $\c_l$, we give the mass, the spatial field, and the distance as follows,

\begin{align}
m_{\x_i}& = 
  \begin{cases}
  f({\c_l^I,\x_i^I}),& \text{if $\c_l^I$, $\x_i^I \neq \O$}  \\
  1, & \text{otherwise.}
  \end{cases} \label{eq:m1_g}  \\
  g_{\x_i} &=   
  \begin{cases}
  g(d_{\c_l^O, \x_i^O}), & \text{if $K>1$}  \\
  1, & \text{if $K=1$.}
  \end{cases} \label{eq:g1_g} \\
  h_{\x_i} &= d_{\c_l^O, \x_i^O}. \label{eq:h1_g}
\end{align}
 Note $d_{\mathbf{p}, \mathbf{q}}=\left\|\mathbf{p} - \mathbf{q} \right\|_p^p$. Points in $\Yset$ have similar energy formulations as above. Mass $m_{\x_i}$ is related to the point color information. Considering some point clouds do not have color information, we set: i) when $\c_l^I, \x_{i}^I \neq \phi$, $m_{\x_i}=f({\c_l^I, \x_i^I})$; ii) when $\c_l^I, \x_{i}^I= \O$, $m_{\x_i}=1$ to avoid PPE becomes 0. The distance $h_{\x_i}$ is related to point coordinate information and we set $h_{\x_i}$ as the Minkowski distance between $\x_i^O$ and $\c_l^O$. The spatial field $g_{\x_i}$ can be regarded as a weighting factor related to $h_{\x_i}$ when pooling all the point energies via Eq. (\ref{eq:total_point_potential_energy}), which is determined according to the task characteristics.  A benefit of introducing proper $g_{\x_i}$ is to distinguish isometrical distortion and we set $g_{\x_i}=1$ when $K=1$ to achieve the consistency with $\CD$.  We will give specific formulations of $m_{\x_i}$, $g_{\x_i}$ and $h_{\x_i}$ for human perception tasks in Section \ref{sec:imple_hv} and for machine perception tasks in Section \ref{sec:imple_cv}. Based on above three components, we use Eq. (\ref{eq:total_point_potential_energy}) to calculate  $E_{\mathcal{N}_{\c_l, K}^{\Xset}}$ and $E_{\mathcal{N}_{\c_l, K}^{\Yset}}$, respectively.

Referring to classic mechanics, for the object with the same mass, the farther the object is from the zero potential energy plane, the lower the gravitational accelerated speed becomes. Therefore, we make $g_i$ to satisfy the following requirement:

\noindent{\bf Requirement 1.} For two points $\x_1$ and $\x_2$ with same mass, if $h_{\x_1}\leq h_{\x_2}$, $g_i$ should satisfy $g_{\x_1}\geq g_{\x_2}$.

With $g_{\x_i}$ satisfies {\bf Requirement 1}, potential energy can better detect the isometrical distortion, which will be proved in Section \ref{sec:property} {\bf Theorem \ref{thm:theorem2}}. 

We now propose a potential energy discrepancy, i.e., PED, to measure the difference between the source and the target point cloud. Specifically, the PED is defined as
\begin{equation}\label{eq:PED}
  {\rm PED}_K =  \sum_{\c_l\in\Cset}\left|E_{\mathcal{N}_{\c_l, K}^{\Xset}}-E_{\mathcal{N}_{\c_l, K}^{\Yset}}\right|,
\end{equation}
We will select proper neighborhood center set $\Cset$ according to the corresponding application characteristics in Section \ref{sec:imple_hv} and \ref{sec:imple_cv}.

{\bf Multiscale Point Potential Energy Discrepancy.}
A hyperparameter $K$ is introduced in PED to establish a spatial neighborhood. Inspired by MSSIM \cite{wang2003multiscale}, we extend PED to a multiscale form, i.e., MPED via

{\setlength\abovedisplayskip{1pt}
\setlength\belowdisplayskip{1pt}
\begin{equation}\label{eq:MPED}
   {\rm MPED}_{\Psi} = \frac{1}{|\Psi|}\sum_{K \in \Psi}{\rm PED}_{K},
\end{equation}}
in which $\Psi$ represents a collection of $K$, $|\Psi|$ represents the number of elements in $\Psi$.

\subsection{MPED Implementation Details on Human Perception Tasks}\label{sec:imple_hv}
Here we present the detailed implementation of MPED on human perception tasks. To better predict human subjective perception, the characteristics of HVS need to be considered when applying MPED for human perception tasks, e.g., high-frequency sensitivity\cite{yang2020inferring},  Stevens's power law\cite{teghtsoonian1971exponents} and multiscale visual perception \cite{wang2003multiscale}. Therefore, we inject the above human visual characteristics into the three steps of MPED.

{\bf Neighborhood Construction.} The human visual system is more sensitive to high-frequency information \cite{wang2004image}\cite{yang2019modeling}, such as edge and contour. Therefore, human perception tasks need to pay more attention to the distortion of high-frequency features. Therefore, we use the high-pass filter to select a set of high-frequency points as neighborhood centers. The total point potential energy of these neighborhoods can reflect the characteristics of point cloud high-frequency structures.

Given the source $\Xset$ and the target $\Yset$, refer to the proposal in \cite{yang2020inferring}, we use the method proposed in \cite{chen2017fast} to filter high-frequency points as element of $\Cset$. Specifically,
{\setlength\abovedisplayskip{1pt}
\setlength\belowdisplayskip{1pt}
\begin{equation}
\Cset=\lfloor\Upsilon(\Xset, \beta)\rfloor_{L} \in \mathbb{R}^{L\times6}, L\ll N,\nonumber
\end{equation}}
where $\Upsilon(\cdot)$ is a Haar-like filter, $\beta$ is filter length, and $L$ is the number of points for $\Cset$. For more details, please check \cite{chen2017fast}\cite{yang2020inferring}.

{\bf Point Potential Energy Discrepancy.}
To better reflect the properties of human perception, we inject \textit{Stevens's power law} \cite{teghtsoonian1971exponents} into the PPE formulation. Stevens's power law characterizes an exponential relationship between the magnitude of a physical stimulus and its perceived magnitude, which can be formulated as $P\propto S^{\gamma}$, where $S$ denotes the actual intensity of one stimulus (such as sound, lighting, distance, et al.) and $P$ represents the perceived intensity of human beings; the value of $\gamma$ measures human's sensitivity to stimulus. The larger the value of $\gamma$ is, the higher human's sensitivity to stimulus is. We inject Stevens's power law into the calculation of mass $m_{\x_i}$ and spatial field  $g_{\x_i}$ respectively considering they are non-interference.

Specifically, for the formulation of the mass $m_{\x_i}$, we first introduce the color difference $d_{color}$ between $\x_i^I$ and $\c_l^I$ as follows,
{\setlength\abovedisplayskip{1pt}
\setlength\belowdisplayskip{1pt}
\begin{equation}
    d_{color} =  \sum_{j}^3  k_j |(\x_i^I)_j - (\c_l^I)_j|+1,
\end{equation}}
where $k_i$ represents the weighting factors between different color channel, e.g., if $\x^I\in \mathrm{RGB}$, $k_R: k_G: k_B = 1:2:1$; if $\x^I\in \mathrm{YUV}$, $k_Y: k_U: k_V = 6:1:1$\cite{yang2020inferring}. The color difference reflects the color variation within the neighborhood, which can be regarded as one color stimulus for our perception. 
Following hints in Stevens's power law and previous researches \cite{huang2015power} , we give the formulation of the mass by raising the above computed distances to the power of $\alpha$, i.e., $m_{\x_i}=(d_{color})^{\alpha}$.  Based on the preliminary experiments we find that $\alpha=0.5$ provides the best performance, under which we have $m_{\x_i}=\sqrt{d_{color}}$.

Similarly, for the formulation of $g_{\x_i}$, we view the geometry difference between $\x_i^O$ and $\c_l^O$, i.e., $d_{\c_l^O, \x_i^O}$, as the geometry stimulus, which reflects the geometry shifting within the neighborhood. Because both $g_{\x_i}$ and $h_{x_i}$ are related to $d_{\c_l^O, \x_i^O}$, we consider the perceived shifting based on Steven's power law as $g_{\x_i}h_{\x_i}=g(d_{\c_l^O, \x_i^O})d_{\c_l^O, \x_i^O}=(d_{\c_l^O, \x_i^O})^{\beta}$, where $g_{\x_i}$ adjusts the distance $h_{\x_i}$ to make it correlate with human perception. Based on the preliminary experiments we find that $\beta=0.5$ provides the best performance, under which $g_{\x_i}=1/\sqrt{d_{\c_l^O, \x_i^O}}$.

Finally, we write the mass, the spatial field, and the distance as follows, 
\begin{align}
  m_{\x_i} &= f(\c_l^I, \x_i^I) = \sqrt{\sum\nolimits_{j=1}^3  k_j |(\x_i^I)_j - (\c_l^I)_j|+1},  \nonumber\\
  g_{\x_i} &=   
  g(d_{\c_l^O, \x_i^O})= \frac{1}{\sqrt{\|\x_i^O -\c_l^O \|_{p}^{p}+\sigma}}, \label{g1} \\
  h_{\x_i} &= d_{\c_l^O, \x_i^O} = \| \x_i^O - \c_l^O \|_{p}^{p}, \nonumber
\end{align}
where $\sigma$ is no-zero constants to prevent numerical instability. $p\in\{1,2\}$ represents 1-norm or the square of 2-norm. We only consider the situation of $K>1$ in Eq. \ref{eq:g1_g} given the fact that human visual perception is highly adapted for extracting structural information \cite{wang2004ssim}.

For the human visual system, the final score is derived based on the whole object, and the sizes and the filtered high-frequency point number of different point cloud (in our experiment we choose $L=N/10000$) in subjective databases is usually different. Therefore, we introduce the \textit{intrinsic resolution} \cite{java2020resolution} and the number of high-frequency point (i.e., $L$) to normalize the PED value. The intrinsic resolution for $\PED_{K}$ can be formulated as follows,
\begin{equation}
    {\rm IR} = (\frac{{\sum\nolimits_{\c_l\in\Cset}\sum\nolimits_{\x_i\in N_{\c_l,K}^{\Xset}}d_{\c_l^O,\x_i^O}}}{L\cdot K})^{\beta}.
\end{equation}
where $\beta = 0.5$ is consistent with the above setting.  Then, We normalize the functions proposed in Eq. (\ref{eq:PED}) as follows,
\begin{equation}
    {\PED_K^{normalized}}=\frac{\sum_{\c_l\in\Cset}\left|E_{\mathcal{N}_{\c_l, K}^{\Xset}}-E_{\mathcal{N}_{\c_l, K}^{\Yset}}\right|}{L\cdot \rm{IR} }.
\end{equation}

{\bf Multiscale Point Potential Energy discrepancy.}
Finally, to introduce the multiscale characteristics of HVS, we replace $\PED$ in Eq. (\ref{eq:MPED}) with $\PED_K^{normalized}$ to get normalized MPED.

\subsection{MPED Implementation Details on Machine Perception Tasks}\label{sec:imple_cv}
In this section, we present the detailed implementation of MPED on machine perception tasks. 
The point clouds used for machine vision tasks are usually sparser than those for human perception. For instance,  samples in ShapeNet~\cite{chang2015shapenet} and ModelNet~\cite{wu20153d} consist of thousands of points that only reflect rough shapes of objects (e.g., Fig. \ref{fig:point_example} (a)). Meanwhile, the task formulation of machine perception is a typical optimization problem, which demands that the specific design of MPED should consider its derivative during the backward propagation.

{\bf Neighborhood Construction.} Different from human perception tasks, machine perception tasks usually aim to generate samples that are exactly the same as the reference samples. Therefore, distortion quantifications used in machine perception tasks need to equally detect the shape deformation at any location.

We make PED equally detect the shape deformations at any location via applying the following neighborhood center selection strategy: both source and target samples are used as references to calculate the geometrical distortion. Therefore, given the ground truth $\Xset$ and the reconstructed sample $\Yset$, we set $$\Cset = \Xset\bigcup\Yset.$$ 

{\bf Point Potential Energy Discrepancy.}
To derive the formulation of PPE for machine vision tasks, we need to determine the spatial field function $g(\cdot)$ in Eq. (\ref{eq:g1_g}).

Considering the neighborhood center selection, the PED can be rewritten as follows to be consistent with the form of CD (i.e., Eq. \eqref{eq:CD}),  
{\setlength\abovedisplayskip{1pt}
\setlength\belowdisplayskip{1pt}
\begin{equation}\label{eq:PED_1}
\begin{aligned}
   {\rm PED} &= \sum_{\x_i\in\Xset}\left|E_{\mathcal{N}_{\x_i, K}^{\Xset}}-E_{\mathcal{N}_{\x_i, K}^{\Yset}}\right|+\sum_{\y_j\in\Yset}\left|E_{\mathcal{N}_{\y_j, K}^{\Xset}}-E_{\mathcal{N}_{\y_j, K}^{\Yset}}\right| \\
   &={\rm PED}_{\Xset2\Yset}+{\rm PED}_{\Yset2\Xset}.
\end{aligned}
\end{equation}}

 The derivative of $\PED_{\Xset2\Yset}$ for one optimizable point $\yhat$ can be solved as Eq.  \eqref{eq:x2y_grad_g} 
{\setlength\abovedisplayskip{1pt}
\setlength\belowdisplayskip{1pt}
\begin{small}
\begin{equation}\label{eq:x2y_grad_g}
    \begin{aligned}
       \frac{\partial\PED_{\Xset2\Yset}}{\partial\yhat} &=
       2\sum_{\x_i \in \N_{\Xset2\yhat}} 
        k(d_{\x_i,\yhat})
       \left( \yhat - \x_i^{pse}\right) \\
       &=2 \left[\sum_{\x_i \in \N_{\Xset2 \yhat}} 
         k(d_{\x_i,\yhat})\right]
       \left[ \yhat - \frac{\sum\limits_{\x_i \in \N_{\Xset2 \yhat}} 
        k(d_{\x_i,\yhat})\x_i^{pse}}{\sum\limits_{\x_i \in \N_{\Xset2 \yhat}} 
        k(d_{\x_i,\yhat})} \right],
    \end{aligned}
\end{equation}
\end{small}}
where $\N_{\Xset2\yhat}=\left\{\x_i|\yhat \in \N_{\x_i,K}^{\Yset} \right\}$ represents the point collection in $\Xset$ that can find $\yhat$ in their K-nearest neighborhoods. $d_{\x_i,\yhat}=\|\x_i-\yhat\|_2^2$; the formulation of $k(\cdot)$ can be written as
{\setlength\abovedisplayskip{1pt}
\setlength\belowdisplayskip{1pt}
\begin{equation}\label{eq:kt}
    k(t) = g(t)+\frac{\partial g(t)}{\partial t}t;
\end{equation}}
$\x_i^{pse}$ represents a pseudo point of $\x_i$ which has the same or opposite location of $\x_i$ corresponding to $\yhat$, which can be formulated as
{
\begin{equation}\label{eq:pseudo_point}
    \begin{aligned}
        \x_i^{pse}&=\yhat-{\rm sgn}( E_{\N_{\mathbf{x}_i,K}^{\Yset}}-E_{\N_{\x_i,K}^{\Xset}})\left( \yhat - \x_i\right) \\
        &=
        \begin{cases}
            \x_i,& E_{\N_{\x_i,K}^{\Yset}}>E_{\N_{\x_i,K}^{\Xset}}, \\
            \yhat, & E_{\N_{\x_i,K}^{\Yset}}=E_{\N_{\x_i,K}^{\Xset}}, \\
            2\yhat-\x_i, & E_{\N_{\x_i,K}^{\Yset}}<E_{\N_{\x_i,K}^{\Xset}}. \\
        \end{cases}
    \end{aligned}
\end{equation}}

According to Eq. (\ref{eq:x2y_grad_g}), we can see that the derivative of $\PED_{\Xset2\Yset}$ descends towards the result that $\yhat$ approaches to the weighted average of $\N_{\Xset2\yhat}$. The derivative of $\PED_{\mathbf{Y}2\mathbf{X}}$ can be also rewritten as a similar formulation (Please check Section \ref{sec:analysis for machine} for more details on the derivative of MPED). Therefore, we can first consider the selection of $k(t)$ and then solve the corresponding $g(t)$. 

For the selection of $k(t)$, an intuitive thought is that for the points in $\N_{\Xset2\yhat}$ farther from the $\yhat$ should have smaller weights, which can increase the robustness of generation model since the peripheral points are the least reliable \cite{comaniciu2003kernel}. Considering  $t$ represents the distance in Eq. (\ref{eq:x2y_grad_g}), it is desired that $k(t)$ is decreasing. Therefore, we add one constraint for $k(t)$ as follows
{\setlength\abovedisplayskip{1pt}
\setlength\belowdisplayskip{1pt}
\begin{equation}
        \frac{\partial k(t)}{\partial t}=2\frac{\partial g(t)}{\partial t}+\frac{\partial^2 g(t)}{\partial t^2}t\leqslant0.
\end{equation}}
A choice of $g(t)$ satisfying the above constraint and \textbf{Requirement 1} is $g(t)=1/{\sqrt{t}}$. Furthermore, such function has two properties that are expected. One is that $k(t)\propto g(t)$, thus we can easily compare the influence of points at different positions corresponding to the center. 

The other property is related to the goal of machine perception tasks. For our generation work, it is desired that more points in the reconstructed point cloud have the same position corresponding to their ground truth in the reference point cloud. Therefore, if one point in the ground truth, $\x_k$, is significantly closer to $\yhat$ than other points, $\yhat$ is expected to move towards $\x_k$.
For $g(t)=1/{\sqrt{t}}$, we can conclude from Eq. (\ref{eq:x2y_grad_g}) and (\ref{eq:kt}) that
{\setlength\abovedisplayskip{1pt}
\setlength\belowdisplayskip{1pt}
\begin{equation}
    \frac{\sum\limits_{\x_i \in \N_{\Xset2 \hat{\mathbf{y}}}} 
        k(d_{\x_i,\yhat})\x_i^{pse}}{\sum\limits_{\x_i \in \N_{\Xset2 \yhat}}
        k(d_{\x_i,\yhat})} \approx \x_k^{pse}\approx\x_k,\quad \text{if\quad $d_{\x_k,\yhat}\rightarrow0$},
\end{equation}}
which states that $\yhat$ will quickly converge to $\x_k$ instead of other locations when they are very close.

Finally, considering $t=d_{\c_l^O, \x_i^O}$, we write the mass, the spatial field, and the distance as follows,  
\begin{align}
  m_{\x_i} &= 1,  \nonumber \\
  g_{\x_i} &= \begin{cases}
  g(d_{\c_l^O, \x_i^O})= \frac{1}{\sqrt{\|\x_i^O -\c_l^O \|_{p}^{p}+\sigma}}, & \text{if $K>1$}  \\
  1, & \text{if $K=1$.}
  \end{cases} , \label{g1}\\
  h_{\x_i} &= d_{\c_l^O, \x_i^O} = \| \x_i^O - \c_l^O \|_{p}^{p}, \nonumber
\end{align}
where $\sigma$ is a no-zero constant to prevent numerical instability and we set $p=2$.

{\bf Multiscale Point Potential Energy Discrepancy.}
To better guide the training of networks, point clouds in many machine vision tasks are normalized into the same size and point number during the preprocessing. Therefore, we directly use $\rm MPED$ in Eq. (\ref{eq:MPED}) as the loss function in our experiments.

\section{Properties of MPED}\label{sec:property}
In this section, we will present the properties of the proposed MPED. Considering MPED is only the arithmetic mean of PEDs at multiple scales, we  focus on the properties of PED at a single scale. Meanwhile, as mentioned in Section \ref{sec:problem_formulation_2}, there are some differences between human and machine perception tasks when applying point cloud distortion quantifications. Especially, distortion quantifications used in human perception tasks aim to evaluate the distortion degree of a target corresponding to its source, which is a \textit{forward prediction} process. In contrast, in machine perception tasks, distortion quantifications usually aim to assist network optimization, which includes the backward propagation process. 

Therefore, we will first give the general properties of PED in Section \ref{sec:property}. Then, we respectively compare the PED and other point cloud distortion quantifications in human and machine perception tasks, which are given in Section \ref{sec:analysis for human} and \ref{sec:analysis for machine}. Especially, our analysis reveals that the most popular distortion quantification, CD, is a special case of PED, and PED shows potential superiority over CD in both human and machine perception tasks.

\subsection{General Properties}
In this section, we discuss the general properties of the proposed PED. Specifically, we first demonstrate that CD is a special case of PED (i.e., {\bf Theorem \ref{thm:theorem1}}). Then, we demonstrate the effect of {\bf Requirement 1} when encountering isometrical perturbation (i.e., {\bf Theorem \ref{thm:theorem2}}).
Finally, we show that the proposed PED satisfies all three conditions for a desirable objective distortion quantification (i.e., differentiability, low complexity, and distortion discrimination).

\begin{theorem}\label{thm:theorem1}
For $\Xset$, $\Yset$,  if $\forall \x_i \in \Xset$, $\y_j \in \Yset$ satisfy $\x_i^I, \y_j^I = \O$ (i.e., $\Xset$, $\Yset$ only have coordinate information), $g(\cdot)=1$, $\Psi={[1]}$, $p=2$, $\Cset = \Xset\bigcup\Yset$, ${\rm PED} = {\mathrm{CD}}$.
\end{theorem}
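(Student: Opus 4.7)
The plan is a direct substitution-and-bookkeeping argument: plug the specialized hyperparameters into the PED formula from Eq.~\eqref{eq:PED}, simplify the potential energy of each one-point neighborhood to the squared distance from its center to its nearest neighbor, and then split the sum over centers into the two halves that recover the two terms of Eq.~\eqref{eq:CD}.

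First I would instantiate the ingredients of the PPE formula. Under the assumption $\x_i^I,\y_j^I=\O$, Eq.~\eqref{eq:m1_g} gives $m_{\x_i}=1$ for every point. With $K=1$, Eq.~\eqref{eq:g1_g} forces $g_{\x_i}=1$ (and this is also consistent with the hypothesis $g(\cdot)=1$). With $p=2$, Eq.~\eqref{eq:h1_g} gives $h_{\x_i}=\|\x_i^O-\c_l^O\|_2^2$. Substituting these into Eq.~\eqref{eq:classical_gravitation} and then into Eq.~\eqref{eq:total_point_potential_energy}, and observing that $\mathcal{N}_{\c_l,1}^{\Xset}$ is the single nearest neighbor of $\c_l$ in $\Xset$, I obtain
\begin{equation*}
E_{\mathcal{N}_{\c_l,1}^{\Xset}} \;=\; \min_{\x\in\Xset}\|\x^O-\c_l^O\|_2^2,
\qquad
E_{\mathcal{N}_{\c_l,1}^{\Yset}} \;=\; \min_{\y\in\Yset}\|\y^O-\c_l^O\|_2^2.
\end{equation*}
Since $\Psi=[1]$, MPED reduces to a single PED term, so it suffices to analyze PED.

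Next I would split the outer sum in Eq.~\eqref{eq:PED} according to the (multiset) partition $\Cset=\Xset\sqcup\Yset$. For a center $\c_l=\x_i\in\Xset$, the nearest neighbor of $\c_l$ inside $\Xset$ is $\x_i$ itself, so $E_{\mathcal{N}_{\c_l,1}^{\Xset}}=0$ and the absolute value collapses to $\min_{\y\in\Yset}\|\y^O-\x_i^O\|_2^2$. Symmetrically, for $\c_l=\y_j\in\Yset$, we get $E_{\mathcal{N}_{\c_l,1}^{\Yset}}=0$ and the term becomes $\min_{\x\in\Xset}\|\x^O-\y_j^O\|_2^2$. Summing the two halves gives exactly the right-hand side of Eq.~\eqref{eq:CD}, completing the identification $\mathrm{PED}=\mathrm{CD}$.

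There is no real mathematical obstacle here; the proof is essentially a definition-chase. The only point that needs care is treating $\Cset=\Xset\cup\Yset$ as a multiset union (i.e., preserving all $N+M$ centers even if $\Xset$ and $\Yset$ share coordinates), since otherwise the count of terms would not match the two summations in Eq.~\eqref{eq:CD}. I would state this convention explicitly at the start of the proof to avoid any ambiguity, and then the rest is a clean chain of substitutions.
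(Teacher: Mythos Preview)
Your proposal is correct and follows essentially the same approach as the paper's own proof: instantiate $m=1$, $g=1$, $h=\|\cdot\|_2^2$, observe that for $\c_l\in\Xset$ the self-neighbor makes $E_{\mathcal{N}_{\c_l,1}^{\Xset}}=0$ (and symmetrically for $\Yset$), then split the sum over $\Cset=\Xset\cup\Yset$ to recover the two halves of $\mathrm{CD}$. Your explicit remark about treating the union as a multiset is a nice clarification that the paper leaves implicit.
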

\begin{proof}
Refer to Eq. (\ref{eq:m1_g}), if point clouds do not have additional attributes except for spatial coordinate, we have  $m_{\x_i} =1$.
Considering $c_i\in \Xset$, the nearest neighbor for $\c_l$ in $\Xset$ is exactly $\c_l$. Therefore, $\mathrm{E}_{\mathcal{N}^{\Xset}_{\c_l, 1}}=0$. The same when $c_l\in \Yset$, e.g.,  $\mathrm{E}_{\mathcal{N}^{\Yset}_{\c_l, 1}}=0$. Therefore,
\begin{equation}
    \begin{aligned}
     {\rm PED}_K &= \sum_{\c_l\in\Cset}\left|E_{\mathcal{N}_{\c_l, 1}^{\Xset}}-E_{\mathcal{N}_{\c_l, 1}^{\Yset}}\right|,\\
  &= \sum_{\x_i\in\Xset}\left|E_{\mathcal{N}_{\x_i, 1}^{\Xset}}-E_{\mathcal{N}_{\x_i, 1}^{\Yset}}\right|+ \sum_{\y_j\in\Yset}\left|E_{\mathcal{N}_{\y_j, 1}^{\Xset}}-E_{\mathcal{N}_{\y_j, 1}^{\Yset}}\right|\\
  &= \sum_{\x_i\in\Xset}\left|0-E_{\mathcal{N}_{\x_i, 1}^{\Yset}}\right|+ \sum_{\y_j\in\Yset}\left|E_{\mathcal{N}_{\y_j, 1}^{\Xset}}-0\right|\\
  &= \sum_{\x_i\in\Xset}\min_{\y_j\in \Yset}d_{\x_i,\y_j}+ \sum_{\y_j\in\Yset}\min_{\x_i\in \Xset}d_{\x_i,\y_j}\\
  &=\CD\nonumber
    \end{aligned}
\end{equation}

\end{proof}
\begin{figure}[t]
  \centering
  {\includegraphics[width=0.8\linewidth]{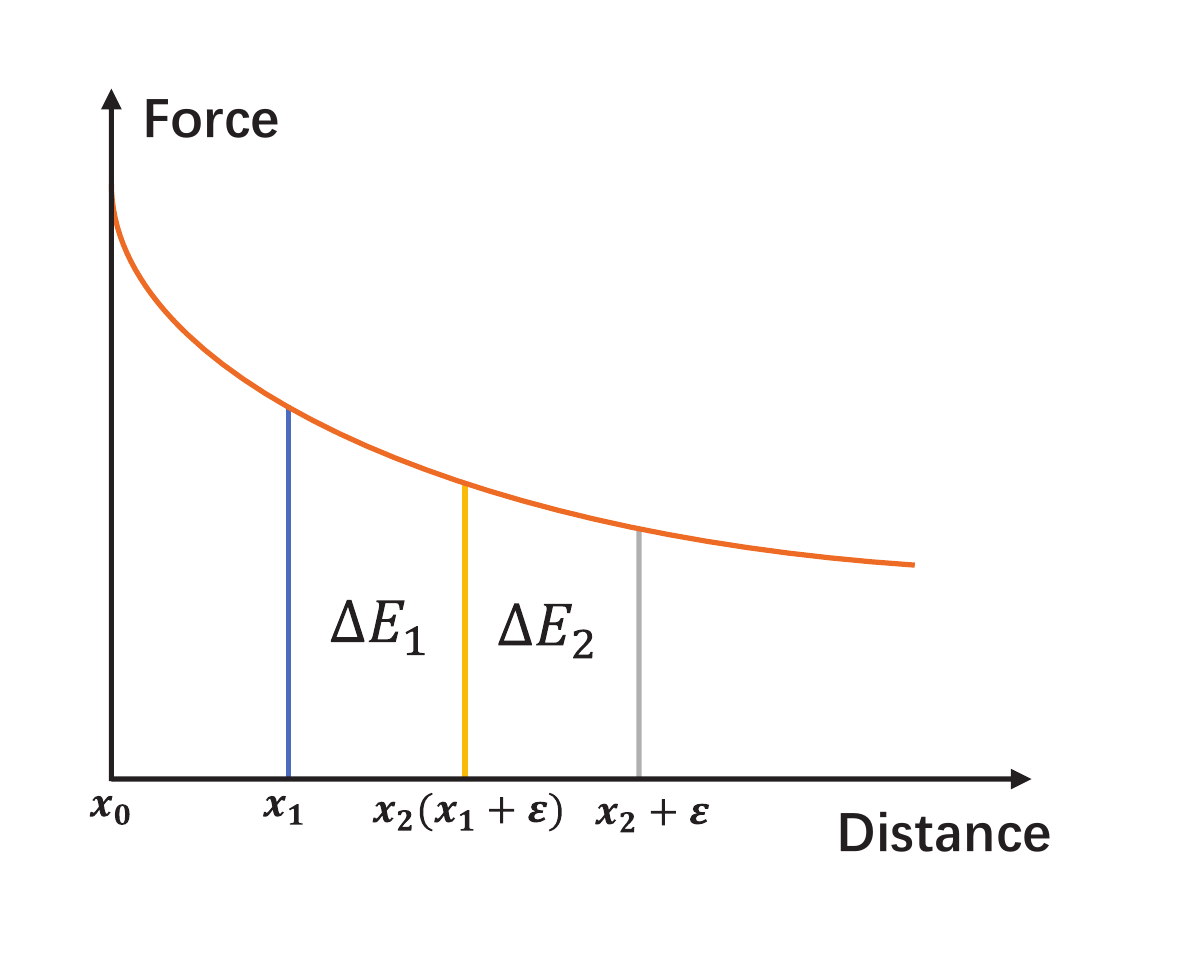}}
  \centering
  \caption{Force-distance curves. }
  \label{fig:proof}
  \end{figure}

{\bf Theorem 1} explains that CD is a special case of PED. We will make more detailed analysis about the comparison between CD and PED in both human and machine perception tasks, i.e., Section \ref{sec:analysis for human} and Section \ref{sec:analysis for machine}.  
\begin{theorem} \label{thm:theorem2}
When $m_1 = m_2$, $\left\| \x_1^O-\x_0^O \right\|_2 \leq \left\| \x_2^O-\x_0^O \right\|_2$, $g$ satisfies {\bf Requirement 1}. For an isometrical perturbation $\epsilon \in \R$, we have
\begin{equation*}
E_{\x_1 + \epsilon(\x_1-\x_0)} - E_{\x_1}  \geq  E_{\x_2 + \epsilon (\x_2-\x_0)} - E_{\x_2}.
\end{equation*}
\end{theorem}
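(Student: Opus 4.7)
The plan is to reduce Theorem~\ref{thm:theorem2} to a one-dimensional comparison on the scalar function $f(h) := g(h)\,h$, and to close it through the force--distance picture suggested by Fig.~\ref{fig:proof}. First I would exploit the radial form of the perturbation: since $\x_i + \epsilon(\x_i - \x_0) - \x_0 = (1+\epsilon)(\x_i - \x_0)$ and $h = \|\cdot\|_p^p$ is $p$-homogeneous, the perturbed distance becomes $h_{\x_i + \epsilon(\x_i - \x_0)} = |1+\epsilon|^p\, h_{\x_i}$, while the mass is unchanged. Substituting into $E_{\x_i} = m_{\x_i} g_{\x_i} h_{\x_i}$, cancelling the common $m_1 = m_2 =: m$, and writing $c := |1+\epsilon|^p$, the claim reduces to
\begin{equation*}
f(c\, h_1) - f(h_1) \;\geq\; f(c\, h_2) - f(h_2), \qquad h_1 \leq h_2.
\end{equation*}

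Next, I would interpret both sides as signed areas under the force curve $F(s) := m\, g(s)$, exactly as drawn in Fig.~\ref{fig:proof}. Each $\Delta E_i$ becomes the region between $F$ and the horizontal axis over the interval $[h_i, c\, h_i]$, oriented by the sign of $\epsilon$. Requirement~1 forces $F$ to be non-increasing, so after the dilation $s = h_i\, u$ which maps both integration windows onto the common dimensionless range $[1, c]$, the rescaled force over the $\x_1$-window sits pointwise above the rescaled force over the $\x_2$-window. The theorem would then follow by integrating the pointwise inequality in $u$ and unfolding the rescaling; the $\epsilon < 0$ regime is handled by reversing the interval, which leaves the signed-area comparison invariant.

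The main obstacle is the competition inside $\int_{h_i}^{c\, h_i} F(s)\,ds$ between the larger integrand at smaller $h_i$ (guaranteed by Requirement~1) and the shorter interval length $(c-1)\, h_i$ at smaller $h_i$; Requirement~1 alone does not arbitrate between these two opposing tendencies. To make the pointwise comparison survive the rescaling I would invoke a concavity-type compatibility condition on $f = g \cdot h$, which is naturally satisfied by the explicit choice $g(h) = 1/\sqrt{h+\sigma}$ adopted in Sections~\ref{sec:imple_hv} and \ref{sec:imple_cv}. With this in place, the decomposition into labelled sub-regions indicated by Fig.~\ref{fig:proof} matches the two competing areas and leaves a residual whose sign is pinned down purely by Requirement~1, which closes the argument.
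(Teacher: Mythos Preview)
Your reduction to comparing $f(ch_1)-f(h_1)$ with $f(ch_2)-f(h_2)$, where $f(h)=g(h)\,h$, $c=|1+\epsilon|^p$ and $h_1\le h_2$, is the natural reading of the displayed perturbation $\x_i+\epsilon(\x_i-\x_0)$, and you correctly diagnose that Requirement~1 alone cannot arbitrate between the ``larger integrand'' and ``shorter window'' effects. The gap is in the proposed cure. Concavity of $f$ does \emph{not} make $h\mapsto f(ch)-f(h)$ non-increasing: for the very choice $g(h)=1/\sqrt{h}$ you cite, $f(h)=\sqrt{h}$ is concave, yet $f(ch)-f(h)=(\sqrt{c}-1)\sqrt{h}$ is strictly increasing in $h$ whenever $c>1$, so under your multiplicative reading the claimed inequality is actually reversed rather than established, and no concavity-type hypothesis on $f$ can salvage it. (A secondary slip: the identity $\Delta E_i=\int_{h_i}^{ch_i} m\,g(s)\,ds$ presumes $E(h)=m\int_0^h g(s)\,ds$, whereas the paper's definition is $E=m\,g(h)\,h$; the two happen to be proportional for $g=1/\sqrt{h}$ but differ for general $g$.)

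The paper's argument sidesteps the multiplicative scaling altogether. In Fig.~\ref{fig:proof} the force curve $F=mg$ is plotted against $h$, and $\Delta E_1$, $\Delta E_2$ are read off as areas over two windows of \emph{equal} length---``isometrical'' being taken as the same additive increment in distance, consistent with the toy example around Fig.~\ref{fig:isometrical}. With equal-length windows the ``shorter interval'' effect you worry about simply does not arise, and monotonicity of $F$ (Requirement~1) gives $\Delta E_1\ge\Delta E_2$ directly from the picture. If you want your concavity observation to do work, it belongs to this additive reading: there the one-variable statement becomes $f(h_1+\epsilon)-f(h_1)\ge f(h_2+\epsilon)-f(h_2)$, which for $\epsilon>0$ is exactly concavity of $f$.
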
\label{thm:theorem_1}
\begin{proof}
$F=ma=mg_\x$, $F$ represents force, and $a$ represents the accelerated speed.  Assuming in Euclidean distance space, we plot the variation of $F$ corresponding to distance $h$ in Fig. \ref{fig:proof},

\noindent$\because \left\| \x_1-\x_0 \right\|_2 \leq \left\| \x_2-\x_0 \right\|_2$, $\epsilon$ is an isometrical perturbation.

\noindent$\therefore E_{\x_1 + \epsilon(\x_1-\x_0)} - E_{\x_1}-(E_{\x_2 + \epsilon(\x_2-\x_0)} - E_{\x_2}) = \Delta E_1 -\Delta E_2 \geq  0. $
\end{proof}

{\bf Theorem 2} explains that with two points share the same mass while different distances to zero potential plane, PED can detect the geometrical isometrical distortion via total energy variation.

\begin{figure}[pt]
	\centering
    \includegraphics[width=0.9\linewidth]{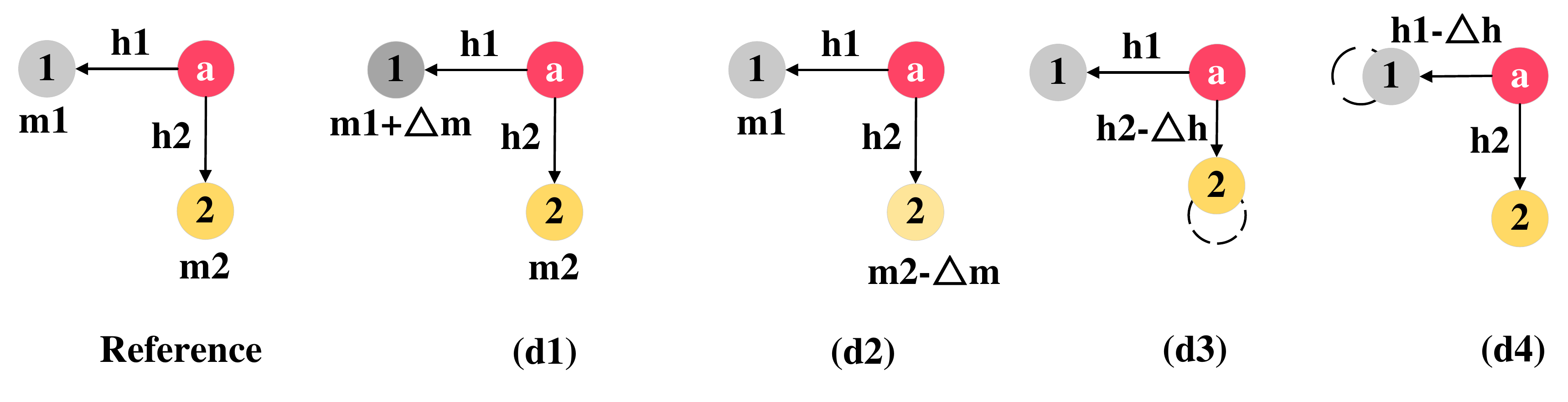}%
	\caption{Toy examples of geometrical and attributive distortion sensitivity.}
	\label{fig:case}
\end{figure}

We further check the three conditions for an ideal distortion quantification proposed in Section \ref{sec:intro}. 

$\bullet$ Differentiability. Refer to specific implementations illustrated in Section \ref{sec:imple_hv} and Section \ref{sec:imple_cv}, all the steps adopted in PED are clearly differentiable;

$\bullet$ Low complexity. The computational bottleneck of both
PED and CD is the $K$-nearest-neighbor searching. The complexity of CD is $O(N^2+N)$. In contrast, The complexity of PED is $O(N^2+KN)$, which is comparable to CD. Note the complexity of EMD is $O(N^3)$;

$\bullet$ For distortion discrimination, {\bf Theorem 2} states that PED is sensitive to isometrical distortion, which can not be detected by CD and EMD.


\begin{figure*}[pt]
	\centering
    \includegraphics[width=0.9\linewidth]{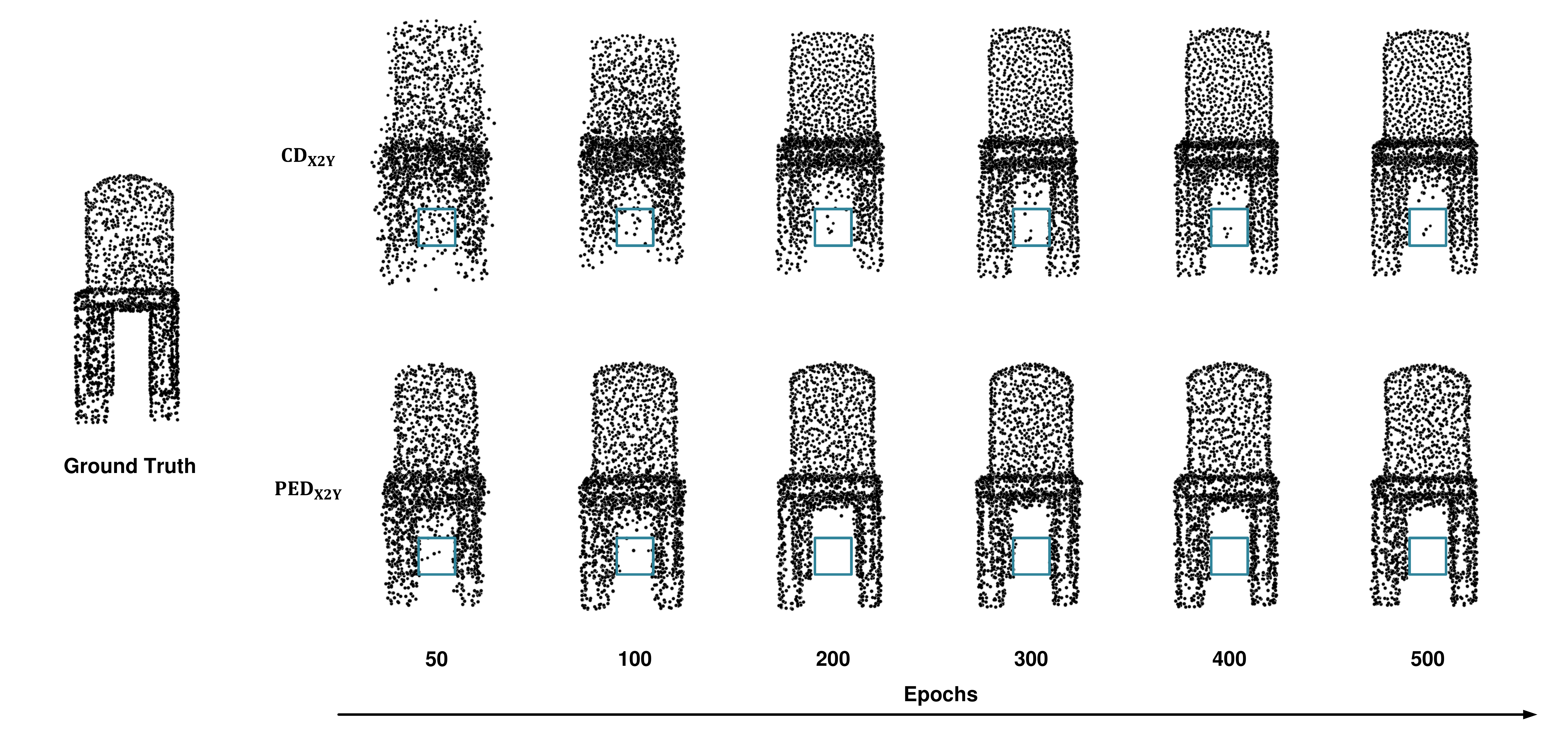}%
 	\caption{An example of point cloud reconstruction task with $\CD_{\Xset2\Yset}$ and $\PED_{\Xset2\Yset}$ as loss functions. When using $\CD_{\Xset2\Yset}$ as the loss function, several points bounded by the blue box are almost not be optimized during the training. }
	\label{fig:X2Y_case}
\end{figure*}

\begin{figure}{}
    \centering
    \subfigure[]{\includegraphics[width=0.48\linewidth]{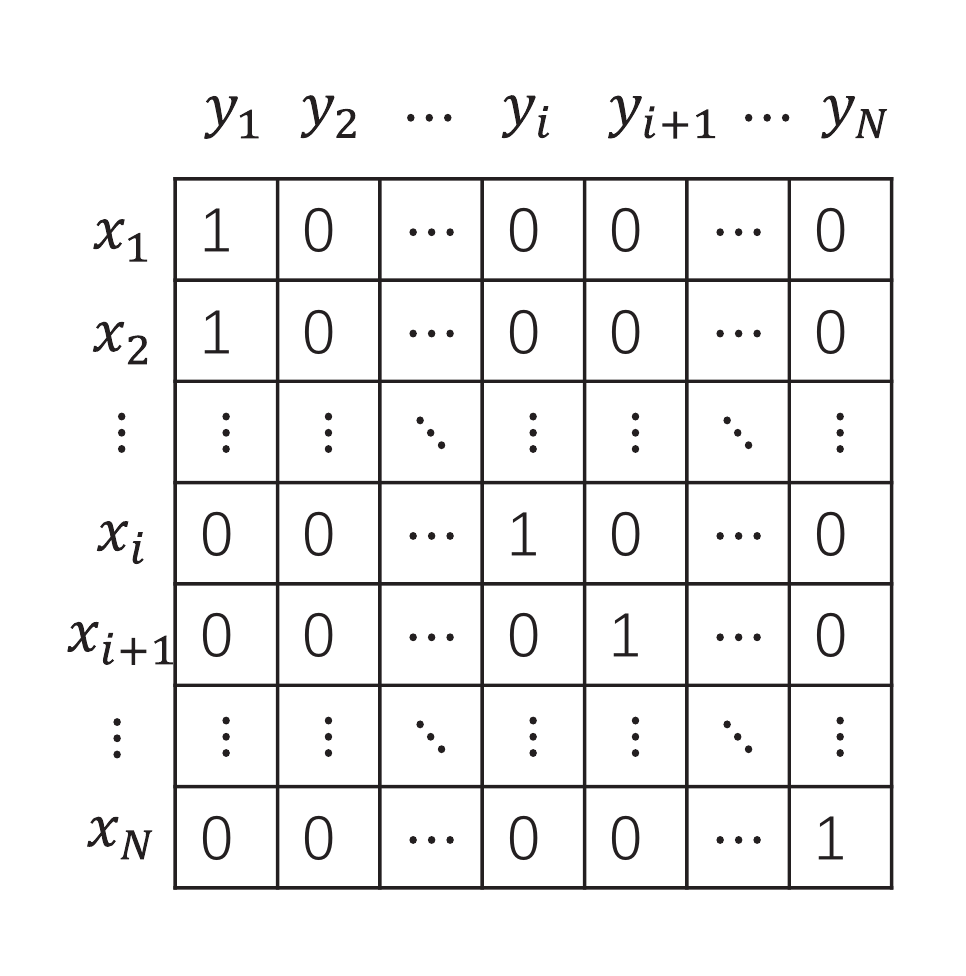}}
    \subfigure[]{\includegraphics[width=0.48\linewidth]{image/X2Y_matrix-eps-converted-to.pdf}}
    \caption{The mutual-incidence matrix for $\CD_{\Xset2\Yset}$ and $\CD_{\Yset2\Xset}$ when $N=M$.}
    \label{fig:incidence matrix}
\end{figure}
\subsection{PED for Human Perception Tasks}\label{sec:analysis for human}

For human perception tasks, we hope that PED can reflect both geometry and attribute distortions mentioned in Section \ref{sec:related_work}. Therefore, we use several toy examples to validate and demonstrate the sensitivity of PED to different distortion. Specifically, we use CD, EMD and $\mathrm{PSNR_{YUV}}$ \cite{torlig2018novel} as comparison, which are  widely used in human perception tasks. For $\mathrm{PSNR_{YUV}}$, it first matches points via Euclidean distance like CD, i.e., $\x_i\in\Xset$, $\y_j\in\Yset$, ${\rm min}(\|\x_i^O-\y_j^O\|_2)$. Then calculating the color difference between paired $\x_i$ and $\y_j$.

Compared with CD, $\mathrm{PSNR_{YUV}}$ can better detect attribute-lossy distortion existed in point clouds.

{\bf Case 1: Sensitivity to attributive isometrical  distortion.} As illustrated in Fig. \ref{fig:case} d1 and d2, $\Delta m$ means a perturbation on three color channels simultaneously.

$\diamond$ $\mathrm{CD_{d1}}=0 =\mathrm{CD_{d2}}$;

$\diamond$ $\mathrm{EMD_{d1}}=0 =\mathrm{EMD_{d2}}$;

$\diamond$ $\mathrm{PSNR_{YUV}^{d_1}}=\mathrm{PSNR_{YUV}^{d_2}}$;

$\diamond$ Because of $h_1 \neq h_2$, $\mathrm{PED}^{d_1}\neq\mathrm{PED}^{d_2}$.

{\bf Case 2: Sensitivity to geometrical isometrical distortion.} As illustrated in Fig. \ref{fig:case} d3 and d4, $\Delta h$ means a perturbation on distance.

$\diamond$ $\mathrm{CD_{d3}}=\Delta h =\mathrm{CD_{d4}}$;

$\diamond$ $\mathrm{EMD_{d3}}=\Delta h =\mathrm{EMD_{d4}}$;

$\diamond$ $\mathrm{PSNR_{YUV}^{d_3}}=0=\mathrm{PSNR_{YUV}^{d_4}}$;

$\diamond$ Because of $h_1 \neq h_2$, $m_1 \neq m_2$,  $\mathrm{PED}^{d_1}\neq\mathrm{PED}^{d_2}$ under most cases.

From the above cases, we can conclude that MPED is more sensitive to different distortions than previous distortion quantifications, which is necessary in human perception tasks. More cases can be found in appendix.

\subsection{PED for Machine Perception Tasks}
\label{sec:analysis for machine}

As mentioned in Section \ref{sec:related_work}, CD is the most favored loss function due to its low computational cost. Therefore, we will focus on the comparison between CD and PED in this section. 

Refer to Eq. (\ref{eq:PED_1}), we rewrite PED as $\PED=\PED_{\Xset2\Yset}+\PED_{\Yset2\Xset}$, and divide CD into two parts, i.e., ${\rm CD}={\rm CD}_{\Xset2\Yset}+{\rm CD}_{\Yset2\Xset}$. We first summarize the impact of $\Xset2\Yset$ and $\Yset2\Xset$ parts, and then give detailed analysis for better understanding and comparison.

In summary, our analysis show that, for $\Xset2\Yset$ part, $\PED_{\Xset2\Yset}$ can reduce the possibility that the points cannot be optimized during the backward propagation compared to $\CD_{\Xset2\Yset}$; for $\Yset2\Xset$ part, $\PED_{\Yset2\Xset}$ can better prevent generated samples from point-collapse.

{\bf Theorem 1} have shown the relation between CD and PED. For the sake of fair comparison, we set $p=2$ and $\Cset = \Xset\bigcup\Yset$ for PED, which is consistent with the conditions in {\bf Theorem 1} and implementation details in Section \ref{sec:imple_cv}.

\subsubsection{Analysis for $\Xset2\Yset$ Part}\label{X2Ypart}
To better illustrate the relationship between points of $\Xset$ and $\Yset$, we introduce a \textit{mutual-incidence matrix} $\mathbf{B}=\left\{b_{ij}\right\}\in\R^{N\times M}$, where
\begin{equation}
\begin{aligned}
   b_{ij} &= 
   \begin{cases}
   1, \quad \text{if\quad $\y_j \in {\N_{\x_i,K}^{\Yset}}$},\\
   0, \quad \text{otherwise}.
   \end{cases}
\end{aligned}\nonumber
\end{equation}
According to {\bf Theorem 1}, we know CD is a special case of PED when $K=1$. Therefore, for either ${\rm CD}_{\Xset2\Yset}$ or ${\rm PED}_{\Xset2\Yset}$, we can have
$\sum_{j}b_{ij} = K$,
that is, the row sum of the incidence matrix indicates the neighborhood size $K$.

Although the row sum of the incidence matrix is constant, the column sum is indeterminate. Actually, the $j$-th column sum indicates the number of points in $\Xset$ that can find $\y_j$ in their $K$-nearest neighborhoods. We formulate the $j$-th column sum as  $\sum_{i}b_{ij}=\left| \N_{\Xset2\y_j}\right|$, where
$\N_{\Xset2\y_j}=\left\{\x_i|\y_j \in \N_{\x_i,K}^{\Yset} \right \}$.

Fig. \ref{fig:incidence matrix} (a) shows an example of the mutual-incidence matrix of ${\rm CD}_{\Xset2\Yset}$ when $N=M$. In Fig. \ref{fig:incidence matrix} (a), the row sum of $\mathbf{B}_{\rm CD}$ is equal to $1$. However, when we focus on the column sum of $\mathbf{B}_{\rm CD}$, we can see the column sum of $\mathbf{B}_{\rm CD}$ is inconstant. For instance, for $1$-th column, which corresponds to $\y_1$, we have $\sum_{i}b_{i1}=2$; while for the $2$-th column which corresponds to $\y_2$, we have $\sum_{i}b_{i2}=0$. Similar to $\CD_{\Xset2\Yset}$, the mutual-incidence matrix of $\PED_{\Xset2\Yset}$, noted as $\mathbf{B}_{\rm PED}$, also has constant row sum $K$ and inconstant column sum.

{\bf Derivative Calculation.}
To better compare the effects of CD and PED in the optimization, we respectively explore their derivatives. For a reconstructed point $\yhat$ in $\Yset$, the partial derivative of $\CD_{\Xset2\Yset}$ w.r.t. $\yhat$ is  
\begin{equation}
    \begin{aligned}
       \frac{\partial \CD_{\Xset2\Yset}}{\partial{\yhat}}=
       \begin{cases}
       2\sum\limits_{\x_i \in \N_{\Xset2\yhat}}\left( \yhat - \x_i\right),&\sum\limits_{i}b_{ij}> 0  \\
       0, &\sum\limits_{i}b_{ij}=0. \\
       \end{cases}
    \end{aligned}\nonumber
\end{equation}

For the partial derivative of $\rm PED_{\Xset2\Yset}$ w.r.t. $\yhat$, we can set, without loss of generality, the spatial field function $g(\cdot)=1$. Then we have
\begin{equation}\label{eq:x2y_grad}
    \begin{aligned}
        \frac{\partial \PED_{\Xset2\Yset}}{\partial{\yhat}}
       =\begin{cases}
       2\sum\limits_{\x_i \in \N_{\Xset2\yhat}} \left( \yhat - \x_i^{pse}\right),&\sum\limits_{i}b_{ij}>0  \\
       0,&\sum\limits_{i}b_{ij}=0. \\
       \end{cases}
    \end{aligned}
\end{equation}
The specific formulation of $\x_i^{pse}$ has been shown in Eq. (\ref{eq:pseudo_point}).
Obviously, when $K=1$, we have $\x_i^{pse}=\mathbf{x}_i$, which leads to $ {\partial {\PED_{\mathbf{X}2 \mathbf{Y}}}/{\partial{\hat{\mathbf{y}}}}=\partial \CD_{\mathbf{X}2 \mathbf{Y}}}/{\partial{\hat{\mathbf{y}}}}$. 

{\bf Analysis and Validation.}
For all points in $\mathbf{Y}$, we can conclude the derivative formulation of CD or PED as 
\begin{equation}\label{eq:x2y_gradient}
     \frac{\partial {\rm CD}({\rm PED})_{\Xset2 \Yset}}{\partial{\Yset}}=2(\mathbf{D}\mathbf{Y}-\mathbf{B}^T\mathbf{X}^{pse}),
\end{equation}
where $\mathbf{D}={\rm diag}(\mathbf{B}^T\mathbf{1})\in\R^{M\times M}$.

According to Eq. (\ref{eq:x2y_gradient}), we can see that the derivative of $\mathbf{y}_j$ is equal to $0$ when $\sum_{i}b_{ij}=0$, which means that $\mathbf{y}_j$ can not be optimized during the backward propagation. We regard such points  as \textit{isolated points} in the optimization and other points as \textit{optimizable points}. The possibility that $\y_j$ is not an isolated point in one iteration can be formulated as $p(\sum_{i}b_{ij}>0)$. Obviously, we have
{\setlength\abovedisplayskip{1pt}
\setlength\belowdisplayskip{1pt}
\begin{equation}\label{eq:x2yproof}
    \begin{aligned}
    p_{\CD}(\sum_{i}b_{ij}>0)&=1-(\frac{M-1}{M})^N\\
    \leqslant 1-(\frac{M-K}{M})^N&=p_{\PED}(\sum_{i}b_{ij}>0).
    \end{aligned}
\end{equation}}
Eq. (\ref{eq:x2yproof}) shows that for $\Xset2\Yset$ part, PED can better avoid the existence of isolated points by selecting proper neighborhood size $K$, which endows the generated samples with better spatial distribution.

To validate our analysis, we illustrate the training process of point cloud reconstruction task in Fig. \ref{fig:X2Y_case}. Specifically, we respectively use $\CD_{\Xset2\Yset}$ and $\PED_{\Xset2\Yset}$ ($K=10$ and $g(\cdot)=1$) as the loss functions to train the LatentNet \cite{achlioptas2018learning} and observe the generated samples with the increase of training epochs. For $\CD_{\Xset2\Yset}$, we can see that some points in the generated samples always locate under the seat part (i.e., the region bounded by the blue box in Fig. \ref{fig:X2Y_case}), which states that they are almost not be optimized during the training. We regard these point as isolated points. In contrast, generated samples using $\PED_{\Xset2\Yset}$ as the loss function avoid such trap.

\subsubsection{Analysis for $\Yset2\Xset$ Part}
Similar to the analysis of $\Xset2\Yset$ part, we also introduce a mutual-incidence matrix $\mathbf{B}=\left\{b_{ij}\right\}\in\R^{N\times M}$ 
\begin{equation}
\begin{aligned}
   b_{ij} &= 
   \begin{cases}
   1, \quad \text{if\quad $\x_i \in {\N_{\y_j,K}^{\mathbf{X}}}$}\\
   0, \quad \text{otherwise}.
   \end{cases}
\end{aligned}\nonumber
\end{equation}
For either ${\CD}_{\Yset2\Xset}$ or ${\PED}_{\Yset2\Xset}$, we have
$\sum_{i}b_{ij} = K$,
that is, the column sum of $\mathbf{B}$ indicates the neighborhood size $K$. In contrast to $\Xset2\Yset$ part, the row sum of $\mathbf{B}$ can be written as $\sum_{j}b_{ij}=\left| \N_{\Yset2 \x_i}\right|$, where
$\N_{\Yset2\x_i}=\left\{\y_j|\x_i \in \N_{\y_j,K}^{\Xset} \right\}$.

The $i$-th row sum indicates the number of points in $\mathbf{Y}$ that can find $\mathbf{x}_i$ in their $K$-nearest neighborhoods. Fig. \ref{fig:incidence matrix} (b) shows an example of the incidence matrix of $\CD_{\mathbf{Y}2\mathbf{X}}$.
 In Fig. \ref{fig:incidence matrix} (b), we can see the column sum of $\mathbf{B}_{\CD}$ is equal to $1$. Meanwhile, the row sum of $\mathbf{B}_{\CD}$ is uncertain.

{\bf Derivative Calculation.}
Here we explore the derivatives of both CD and PED. For a reconstructed point $\hat{\mathbf{y}}$ in $\mathbf{Y}$, the partial derivative of $\CD_{\mathbf{Y}2\mathbf{X}}$ w.r.t. $\hat{\mathbf{y}}$ is  
\begin{equation}\label{eq:grad_cd_y2x}
        \frac{\partial \CD_{\Yset2\Xset}}{\partial{\yhat}}=
     2(\yhat-\x_i),\quad \x_i= \N_{\yhat,1}^{\Xset}.
\end{equation}

For the partial derivative of $\rm PED_{\Yset2\Xset}$ w.r.t. $\yhat$, we set the spatial field function $g(\cdot)=1$. Considering that $\yhat$ can be found as neighbor of other points in $\N_{\Yset2\yhat}$ (note as $R$), or as the neighborhood center itself (note as T), we divide the partial derivative into two parts as follows,
\begin{small}
\begin{equation}\label{eq:grad_ped_y2x}
    \begin{aligned}
        \frac{\partial \PED_{\Yset2\Xset}}{\partial{\yhat}} &= R + T \\
       R &= 2\sum_{\y_j \in \N_{\Yset2\yhat}} {\rm sgn}( E_{\N_{\y_j,K}^{\Yset}}-E_{\N_{\y_j,K}^{\Xset}})\left( \yhat - \y_j\right)  \\
       &= 2\sum_{\y_j \in \N_{\Yset2 \yhat}} \left( \yhat - \y_j^{pse}\right) \\
       T & =\frac{\partial\left|E_{\N_{\yhat,K}^{\Xset}}-E_{\N_{\yhat,K}^{\Yset}}\right|}{\partial\yhat}\\
       & = 2{\rm sgn}(E_{\N_{\yhat,K}^{\mathbf{X}}}-E_{N_{\yhat,K}^{\Yset}})(\sum_{\y_j \in \N_{\yhat,K}^{\Yset}}\y_j-\sum_{\x_i \in \N_{\yhat,K}^{\Xset}}\x_i),\\
    \end{aligned}
\end{equation}
\end{small}

\noindent when $K=1$, we have $ {\partial {\PED_{\mathbf{Y}2 \mathbf{X}}}/{\partial{\hat{\mathbf{y}}}}=\partial \CD_{\mathbf{Y}2 \mathbf{X}}}/{\partial{\hat{\mathbf{y}}}}$ because $R = 0$ and $T =2(\hat{\mathbf{y}}-\mathbf{x}_i)$.

{\bf Analysis and Validation.}
According to Eq. (\ref{eq:grad_cd_y2x}), we can know why point-collapse usually happens when using CD as the loss function. Specifically, points of $\mathbf{Y}$  corresponding to the same $\mathbf{x}_i$ (e.g., $\mathbf{x}_1$ in Fig. \ref{fig:incidence matrix} (b)) will move to the location of $\mathbf{x}_i$, which leads to denser distribution around $\mathbf{x}_i$ in the target. Meanwhile, those locations around points that can not be found (e.g., $\mathbf{x}_2$ in Fig. \ref{fig:incidence matrix} (b)), will be sparser than the ground truth. In fact, if we only use $\CD_{\mathbf{Y}2\mathbf{X}}$ as the loss function, all points in the target  may converge towards one location, which can be regarded as an extreme point-collapse.      

For the derivative of PED, two terms in Eq. (\ref{eq:grad_ped_y2x}), $R$ and $T$, play different roles in the optimization. Specificallly, $T$ pushes $\yhat$ to converge in the direction that minimizes the energy discrepancy between $\N_{\yhat,K}^{\Xset}$ and $\N_{\yhat,K}^{\Yset}$. Therefore, the information of those points ignored by CD can be introduced. Utilizing the mutual-incidence matrix $\mathbf{B}$, we formulate the possibility that $\x_i$ can not be found in $\Yset2\Xset$ neighbor searching as $p(\sum_{j}b_{ij}=0)$. Then we have
{\setlength\abovedisplayskip{1pt}
\setlength\belowdisplayskip{1pt}
\begin{equation}\label{eq:y2xproof}
    \begin{aligned}
    p_{\CD}(\sum_{j}b_{ij}=0)&=(\frac{N-1}{N})^M\\
    \geqslant (\frac{N-K}{N})^M&=p_{\PED}(\sum_{j}b_{ij}=0).
    \end{aligned}
\end{equation}}
Eq. (\ref{eq:y2xproof}) shows that PED can better avoid the existence of over-sparse or over-populated regions in the target, which reduces the possibility that point-collapse happens.

As for term $R$, it can serve as a refinement term to smooth the generated samples during the training. Considering $\Yset^{pse}=\Yset$, we have
\begin{equation}\label{eq:graphfilter}
\begin{aligned}
    \Yset-\alpha\cdot R = \left[\mathbf{I}-2\alpha(\mathbf{D}- \mathbf{A})\right]\Yset 
    =\left[\mathbf{I}-2\alpha\mathbf{L}\right]\mathbf{Y},
\end{aligned}
\end{equation}
where $\alpha$ is the learning rate; $\mathbf{A}=\left\{a_{ij}\right\}\in\R^{M\times M}$ represents an adjacent matrix, which have
\begin{equation}
\begin{aligned}
   a_{ij} &= 
   \begin{cases}
   1, \quad \text{if\quad $\y_j \in {\N_{\y_i,K}^{\mathbf{Y}}}$}\\
   0, \quad \text{otherwise}.
   \end{cases}
\end{aligned}\nonumber
\end{equation}
$\mathbf{D}=\rm{diag}(\mathbf{A}\mathbf{1})$ and $\mathbf{L}=\mathbf{D}-\mathbf{A}$ represents the graph Laplacian matrix on a directed graph. Considering $\Yset$ as the signal on the directed graph, according to graph signal processing theory \cite{7746675}, $h(\mathbf{L})=\mathbf{I}-2\alpha\mathbf{L}$ in Eq. (\ref{eq:graphfilter}) can  serves as a low-pass graph filter, which makes the point distribution of generated samples more uniform.

Fig. \ref{fig:Y2X_case} shows the generated samples of LatentNet after 500 epochs of training. We can see that, if we only use $\CD_{\mathbf{Y}2\mathbf{X}}$ as the loss function to train the network, all points in the generated sample converge to the same location in the blue box, which may be caused by an unsuccessful initialization. In contrast, $\PED_{\Yset2\Xset}$ prevents generated samples from such extreme collapse and keeps a passable shape.

From the above analysis for both human and machine perception tasks, we can see PED shows potential superiority over CD and other popular distortion quantifications. As the multiscale version of PED, MPED inherits all properties of PED and has better distortion distinguishability,  which provides a theorem basis for the application of MPED in Section \ref{sec:imple_hv} and \ref{sec:imple_cv}.

\begin{figure}[pt]
	\centering
    \includegraphics[width=0.9\linewidth]{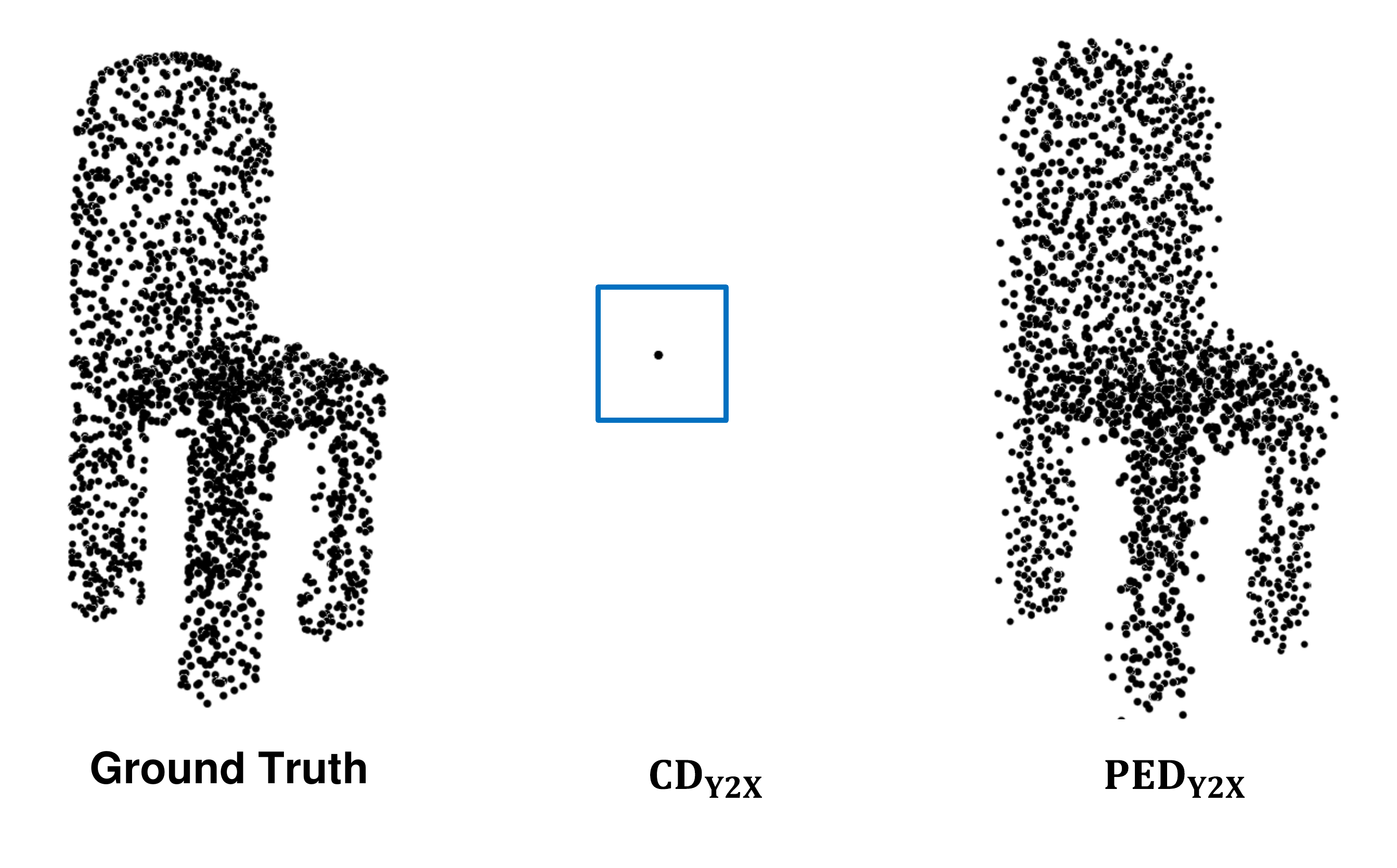}%
	\caption{An example of point cloud reconstruction task with $\CD_{\Yset2\Xset}$ and $\PED_{\Yset2\Xset}$ as loss functions. Note that all points almost converge to the same location bounded in the blue box when using $\CD_{\Yset2\Xset}$ as the loss function.}
	\label{fig:Y2X_case}
\end{figure}

\section{Experiment} \label{sec:experiment}
In this section, we present the experiment results of the proposed MPED on both human and machine perception tasks. 
\subsection{Experimental Evaluations on Human Perception Tasks}\label{sec:exp_hv}
For human perception tasks, we test MPED on three fairly large database, i.e., SJTU-PCQA \cite{yang2020predicting}, LS-PCQA\cite{liu2022point} and WPC \cite{su2019wpc}\cite{new10-wpc3.0} . 

\begin{table*}[pt]
	\caption{Model performance (PLCC, SROCC and RMSE) for point clouds samples in SJTU-PCQA database in terms of different impairments. } \label{Table:single-sjtupcqa}
	\centering
	\begin{scriptsize}
	\setlength{\tabcolsep}{0.7mm}{
		\begin{tabular}{|c|c|c|c|c|c|c|c|c|c|c|c|c|c|c|c|c|c|c|c|c|c|c|c|c|c|c|}
			\hline
			\multicolumn{3}{|c|}{} & \multicolumn{8}{|c|}{PLCC} & \multicolumn{8}{|c|}{SROCC} & \multicolumn{8}{|c|}{RMSE}  \\ \hline
			\multicolumn{3}{|c|}{quantification:}&OT&CN&GGN&DS&D+C&D+G&C+G&{\bf ALL} & OT&CN&GGN&DS&D+C&D+G&C+G&{\bf ALL} & OT&CN&GGN&DS&D+C&D+G&C+G&{\bf ALL} \\  \hline
			\multirow{11}{*}{\rotatebox{90}{SJTU-PCQA}} & \multirow{2}{*}{M}  & p2po &0.75&-&0.88&0.73&0.50&0.86&0.85&0.73&0.81&-&0.81&0.80&0.85&0.84&0.81&0.80&1.32&-&1.23&1.76&2.06&1.27&1.32&1.72 \\ \cline{3-27}
			&  & p2pl 
			&0.78&	-&	0.88&	0.45&	0.42&	0.87&	0.85&	0.65&
            0.81&	-&	0.80&	0.47&	0.53&	0.84&	0.82&	0.66&
            1.26&	-&	1.25&	1.95&	2.15&	1.26&	1.31&	1.89
  \\ \cline{2-27}
			& \multirow{2}{*}{H} & p2po &
			0.73&	-&	0.88&	0.46&	0.44&	0.80&	0.86&	0.63&
            0.78&	-&	0.81&	0.58&	0.49&	0.80&	0.82&	0.66&
            1.38&	-&	1.22&	1.93&	2.12&	1.53&	1.29&	1.92
  \\ \cline{3-27}
			&  & p2pl &
			0.75&	-&	0.88&	0.43&	0.44&	0.86&	0.86&	0.65&
            0.80&	-&	0.81&	0.54&	0.50&	0.83&	0.82&	0.65&
            1.33&	-&	1.22&	1.96&	2.13&	1.3&	1.3&	1.89
  \\ \cline{2-27}
			& \multicolumn{2}{|c|}{$\mathrm{PSNR_{YUV}}$}   &
			0.50&	0.81&	0.7&	0.53&	0.86&	0.65&	0.90&	0.65&
            0.35&	0.77&	0.68&	0.54&	0.87&	0.62&	0.85&	0.65&
            1.74&	0.89&	1.86&	1.84&	1.20&	1.91&	1.09&	1.84

 \\ \cline{2-27}
			& \multicolumn{2}{|c|}{PCQM}   &
			0.80&	0.84&	0.93&	0.84&	0.93&	0.90&	0.95&	0.86&
            0.76&	0.84&	0.91&	0.81&	0.92&	0.88&	0.92&	0.85&
            1.20&	0.83&	0.98&	1.17&	0.86&	1.07&	0.78&	1.24
 \\ \cline{2-27}
			& \multicolumn{2}{|c|}{GraphSIM}   &
			0.80&	0.79&	0.94&	0.9&	0.89&	0.93&	0.95&	0.86&
            0.69&	0.78&	0.92&	0.87&	0.89&	0.89&	0.94&	0.84&
            1.19&	0.92&	0.92&	0.94&	1.06&	0.9&	0.76&	1.25
 \\ \cline{2-27}
			& \multicolumn{2}{|c|}{EPES}   &
			0.82&	0.85&	0.93&	0.92&	0.96&	0.94&	0.96&	0.89&
            0.75&	0.82&	0.89&	0.92&	0.95&	0.91&	0.94&	0.88&
            1.15&	0.79&	0.77&	0.90&	0.69&	0.92&	0.73&	1.12
  \\ \cline{2-27}

            & \multicolumn{2}{|c|}{MPED($p=1$)}   &
            0.81&	0.85&	0.89&	0.94&	0.95&	0.95&	0.98&	\color{red}\textbf{0.90}&
            0.68&	0.82&	0.88&	0.94&	0.93&	0.93&	0.97&   
            {0.88}&
            1.17&	0.80&	0.97&	0.81&	0.82&	0.83&   0.50&	\color{red}\textbf{1.08}
 \\ \cline{2-27}
			& \multicolumn{2}{|c|}{MPED($p=2$)}   &
			0.83&	0.84&	0.92&	0.94&	0.97&	0.95&	0.98&	\color{red}\textbf{0.90}&
            0.76&	0.81&	0.90&	0.94&	0.95&	0.93&	0.97&	\color{red}\textbf{0.89}&
            1.11&	0.82&	0.85&	0.79&	0.66&	0.78&	0.53&	\color{red}\textbf{1.08}
 \\
\hline
	\end{tabular}}
	\end{scriptsize}

\end{table*}

{\bf Subjective Point Cloud Assessment Databases.}

{$\bullet$ SJTU-PCQA database.}
There are 9 high-quality point cloud samples in SJTU-PCQA.
Each native point cloud sample is augmented with 7 different types of impairments under 6 levels, including four individual distortions, Octree-based compression (OT), Color noise (CN), Geometry Gaussian noise (GGN), Downsampling (DS), and three superimposed distortions, such as Downsampling and Color noise (D+C), Downsampling and Geometry Gaussian noise (D+G), Color noise and Geometry Gaussian noise (C+G). 
In all 378 samples in SJTU-PCQA are provided with mean opinion scores (MOS).

{$\bullet$ LS-PCQA database.}
There are 104 high-quality point cloud samples in LS-PCQA, and each reference sample is processed with 34 types of impairments under 7 levels. In all there are 24,752 samples in LS-PCQA, 1,020 of them provide MOS and used in our experiments. The distortion types include Quantization noise, Contrast change, V-PCC, G-PCC, Local rotation, Luminance noise, and so on. 

{$\bullet$ WPC database.}
There are 20 voxelized point cloud samples in WPC with an average of 1.35 million points. 740 distorted point clouds with corresponding MOS are generated from the references under five types of distortions, including Downsampling, Gaussian noise contamination, G-PCC(Trisoup), G-PCC(Octree) and V-PCC.

{\bf Parameters of MPED.}
i) $\Psi$. For human perception tasks, we set $\Psi$ as [10, 5] for both $p =1$ and $p =2$;  ii)  $L$ and $\beta$. Refer to \cite{yang2020inferring}, we set $L=N/10000$ and $
\beta=4$;
iii) $\sigma$. we simply set $\sigma=1$; iv) $k_j$. We first using RGB color space to calculate $m_{\x_i}$, therefore, we set $k_R:k_G:k_B=1:2:1$.

{\bf Performance Evaluation.} 
We compare our MPED with another 7 state-of-the-art distortion quantifications, i.e.,
\begin{itemize}
    \item PSNR-MSE-P2point (M-p2po)
    \item PSNR-MSE-P2plane (M-p2pl)
    \item PSNR-Hausdorff-P2point (H-p2po)
    \item PSNR-Hausdorff-P2plane (H-p2pl)
    \item $\mathrm{PSNR_{YUV}}$
    \item PCQM
    \item GraphSIM
    \item EPES
\end{itemize}

Note EPES is a correlational research who also use energy-based features to measure dense point cloud distortion. The differences between EPES and MPED are manifold. For example, EPES is designed for human vision tasks solely, it pools global and local features refer to the form of SSIM and uses "similarity" to normalize the final scores into [0, 1]. While MPED propose to use multiscale features and the final scores are normalized into "distance" which is unified with the utilization in machine vision tasks. The design of features in EPES is obviously different from MPED, please check \cite{new1-xu2021epes} for more details.

To ensure the consistency between subjective scores (e.g., mean opinion scores) and objective predictions from various models, we map the objective predictions of different models to the same dynamic range following the recommendations suggested by the video quality experts group (VQEG) \cite{video2003final,sheikh2006statistical}, to derive popular PLCC for prediction accuracy, SROCC for prediction monotonicity, and RMSE for prediction consistency for evaluating the model performance. The larger PLCC or SROCC comes with better model performance. On the contrary, the lower RMSE is better.
More details can be found in~\cite{video2003final}. To map the dynamic range of the scores from objective quality assessment models into a common scale, the logistic regression recommended by VQEG is used. 

{$\bullet$ SJTU-PCQA database.}
Table \ref{Table:single-sjtupcqa} presents the performance of MPED and other state-of-the-art distortion quantifications on SJTU-PCQA. We see that MPED presents best overall performance on SJTU-PCQA. Specifically, the overall PLCC, SROCC and RMSE of $\mathrm{MPED}(p=1)$ is (0.90, 0.89, 1.07) and $\mathrm{MPED}(p=2)$ is (0.90, 0.88, 1.07). While M-p2po is (0.73, 0.80, 1.72), M-p2pl is (0.65, 0.66, 1.89), H-p2po is (0.63, 0.66, 1.82), H-p2pl is (0.65, 0.65, 1.89),  $\mathrm{PSNR_{YUV}}$ is (0.65, 0.65, 1.84), PCQM is (0.86, 0.85, 1.24),  GraphSIM is (0.86, 0.84, 1.25) and EPES is (0.89, 0.88, 1.12).

For different distortion quantifications in terms of distortion types. We see that: i) M-p2po, M-p2pl, H-p2po, and H-p2pl cannot handle CN because they only consider point-wise geometrical features; ii) $\mathrm{PSNR_{YUV}}$ presents reliable performance on CN, while the worst performance on OT. Referring to Section \ref{sec:analysis for human}, $\mathrm{PSNR_{YUV}}$ first matches two points via nearest neighbor searching, then uses the color difference of point pairs as distortion measurement. Essentially, OT uses a central point to replace all the points within a spatial cube. The central points usually share similar attributes with replaced points, and can be used multiple times during point matching. Therefore, $\mathrm{PSNR_{YUV}}$ are not sensitive to OT.

{$\bullet$ LS-PCQA and WPC database.}
\begin{table}[pt]
	\caption{Model performance (PLCC, SROCC and RMSE) for point clouds samples in LS-PCQA and WPC databases. } \label{Table:LS-PCQA}
	\centering
	\begin{scriptsize}
	\setlength{\tabcolsep}{0.7mm}{
		\begin{tabular}{|c|c|c|c|c|c|c|c|}
			\hline
		    \multicolumn{2}{|c|}{} & \multicolumn{3}{|c|}{LS-PCQA} &\multicolumn{3}{|c|}{WPC} \\ \hline
			\multicolumn{2}{|c|}{quantification} & {PLCC} &{SROCC} &{RMSE} & {PLCC} &{SROCC} &{RMSE}  \\ \hline
		    \multirow{2}{*}{M}  & p2po &0.46&0.26&0.73 &0.58 &0.57 &18.71\\ \cline{2-8}
			 & p2pl &0.42&0.24&0.75 &0.49&0.45&20.02 \\ \hline
			\multirow{2}{*}{H} & p2po &0.36&0.21&0.77 &0.40 &0.26 &20.98  \\ \cline{2-8}
			  & p2pl &0.36&0.21&0.77 &0.39 &0.31 &21.12 \\ \hline
			\multicolumn{2}{|c|}{$\mathrm{PSNR_{YUV}}$}   &0.50&0.48&0.72 &0.55&0.54&19.13\\ \hline
			\multicolumn{2}{|c|}{PCQM}   &0.32&0.42&0.75 &\color{red}\textbf{0.75}&\color{red}\textbf{0.74}&\color{red}\textbf{15.13} \\ \hline
			\multicolumn{2}{|c|}{GraphSIM}   &0.33&0.31&0.78 &0.69&0.68&16.50  \\ \hline
			\multicolumn{2}{|c|}{EPES}   &0.61&0.55&0.65 &0.52&0.49&19.57 \\ \hline
            \multicolumn{2}{|c|}{MPED($p=1$)}   &0.63&0.57&0.64 &0.70&0.68&16.37\\ \hline
		    \multicolumn{2}{|c|}{MPED($p=2$)}   &{\color{red}\textbf{0.66}}&{\color{red}\textbf{0.61}}&{\color{red}\textbf{0.62}} &0.64&0.63&17.58\\
\hline
	\end{tabular}}

	\end{scriptsize}
\end{table}
Table \ref{Table:LS-PCQA} presents the overall performance of MPED and other state-of-the-art distortion quantifications on LS-PCQA and WPC. We see that: i) MPED provides the best performace on LS-PCQA and comparable performance on WPC. Specifically, the PLCC and SROCC of $\mathrm{MPED} (p=2)$ on LS-PCQA and WPC are above 0.6, while other distortion quantifications cannot perform well on both two databases; ii) the SROCCs of M-p2po, M-p2pl, H-p2po and H-p2pl are lower than 0.3 on LS-PCQA. The reason is that there are several types of geometry lossless but color lossy distortion, such as color quantization dither (CQD). These four distortion quantifications cannot detect color distortion.

{\bf Ablation Study.}\label{sec:ablation_hv}

{$\bullet$ Color Space}

We have exemplified the effectiveness of MPED based on the RGB-based color channel decomposition. In this part, we test the performance of MPED in terms of other color channel space, e.g., YUV and Gaussian color model (GCM)\cite{geusebroek2000color}. YUV and GCM are two color channel spaces that consist of one luminance component and two chrominance components. Given that human visual system is more sensitive to the luminance component~\cite{torlig2018novel}, we set $k_l=6$, $k_{c1}=k_{c2}=1$~\cite{yang2020inferring}. $k_l$ represents the weighting factors of luminance component, and $k_{c1}$, $k_{c2}$ for chrominance components. We use SJTU-PCQA as the test database and the results shown in Table \ref{tab:color_space}.

\begin{table}[t]
	\caption{Model performance with various color spaces on SJTU-PCQA.} \label{tab:color_space}
	\centering
	\begin{tabular}{|c|c|c|c|}
		\hline
		Color Space & PLCC & SROCC & RMSE \\ \cline{1-4}
		RGB & 0.8954 & 0.8877 & 1.0808 \\ \cline{1-4}
		YUV & 0.8940 & 0.8853 & 1.0872 \\ \cline{1-4}
		GCM & 0.8935 & 0.8849 & 1.0899 \\ \cline{1-4}
	\end{tabular}
\end{table}

In Table \ref{tab:color_space}, we see that the performance of MPED is very close for multiple color spaces, which proves the robustness of MPED.

\subsection{Experimental Evaluation on Machine Perception Tasks} 

 For machine perception, we test MPED on three typical unsupervised learning tasks, i.e., point cloud reconstruction, shape completion, and upsampling.

{\bf Parameters of MPED.}
i) $\sigma$. We set $\sigma=10^{-10}$; ii) $\Psi$. For machine perception, we set $\Psi=[10, 5, 1]$ for $p =2$;
\begin{figure}[pt]
	\centering
    \includegraphics[width=0.9\linewidth]{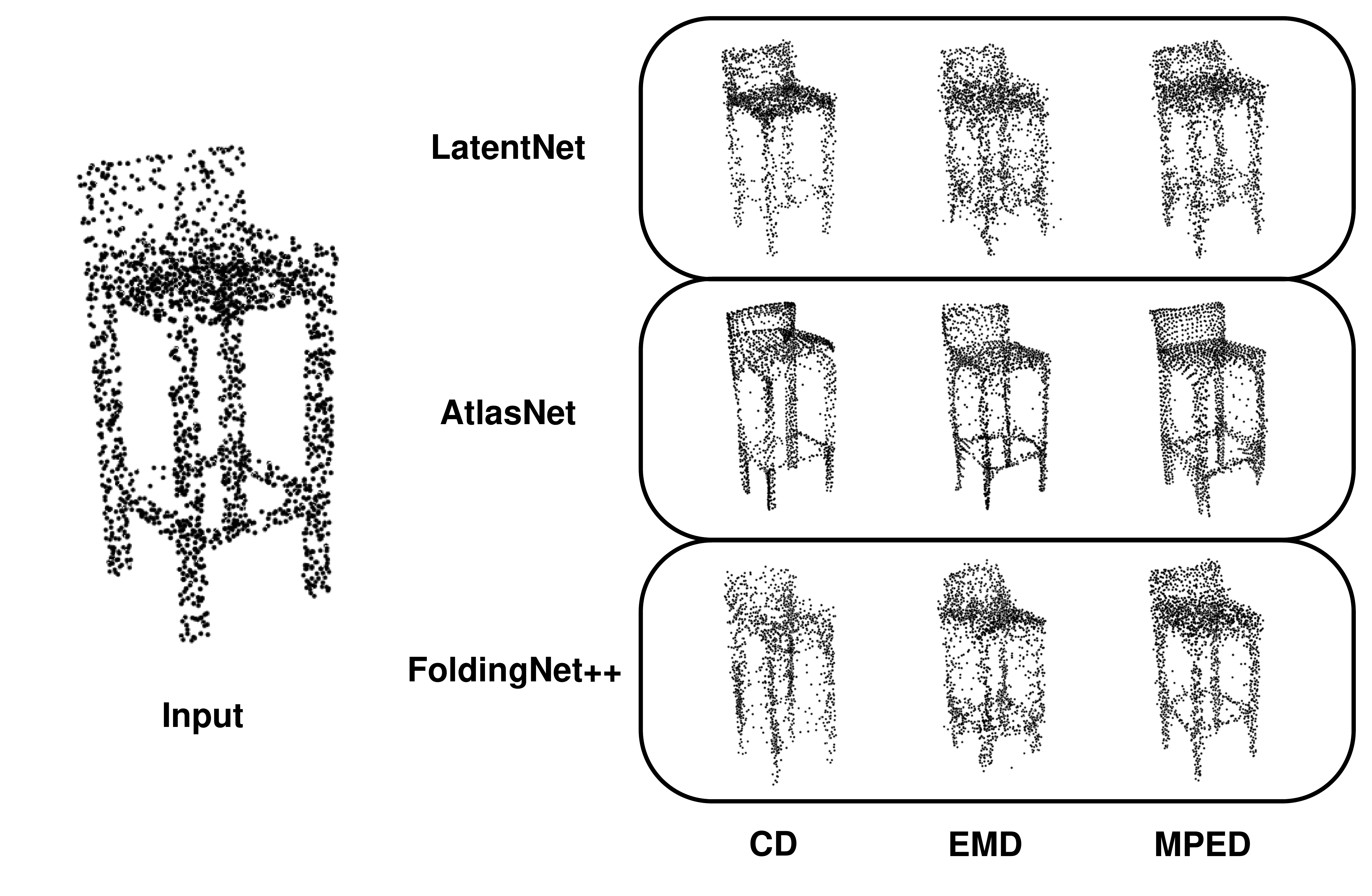}%
	\caption{Illustration of point cloud reconstruction. The proposed MPED presents the best reconstruction quality. }
	\label{fig:generation}
\end{figure}

{\bf Performance Evaluation.}
To validate the proposed MPED as a loss function for machine perception tasks, we compare it with CD\cite{ravi2020pytorch3d} and EMD\cite{PytorchEMD}. For each task, with various supervisions, we test several state-of-the-art networks under the same training settings, e.g., same network, same epochs, and same network parameters. To compare the final performance, we use CD, EMD, and JSD as objective quantitative evaluation. JSD measures the marginal distributions between the generated and reference samples \cite{achlioptas2018learning}. To distinguish CD as loss function and quantitative metrics, we use \textbf{CD} as loss function while CD as the quantitative metric. The EMD in the same way.

{$\bullet$ 3D Point Cloud Reconstruction}

\begin{table*}
\caption{Point cloud reconstruction on ShapeNet. The proposed MPED grants most of the top-level performance on different networks. }\label{TABLE:generation_table}
\begin{center}
\setlength{\tabcolsep}{0.4mm}{
\begin{tabular}{|c|c|c|c|c|c|c|c|c|c|c|c|c|c|c|c|c|}
\hline
\multirow{2}{*}{Model} & \multirow{2}{*}{Loss}&\multicolumn{5}{|c|}{CD($\times$10e-4)} &\multicolumn{5}{|c|}{EMD($\times$10e-1)}  &\multicolumn{5}{|c|}{JSD($\times$10e-2)}  \\ \cline{3-17}
& &Chair &Air &Table &Rifle & Ave &Chair &Air &Table &Rifle & Ave &Chair &Air &Table &Rifle & Ave \\
\hline\hline
\multirow{3}{*}{{LatentNet}}  & CD &\textcolor{red}{\textbf{7.18}}	&\textcolor{red}{\textbf{2.57}}	&\textcolor{red}{\textbf{6.65}}	&\textcolor{red}{\textbf{2.83}}	&\textcolor{red}{\textbf{4.81}}	&9.46	&4.38	&11.37	&9.38	&8.65	&2.91	&2.12	&3.05	&3.44	&2.88\\

 & EMD &9.40  &3.66	&9.37	&4.3	&6.68	&\textcolor{red}{\textbf{3.47}}	&\textcolor{red}{\textbf{1.56}}	&\textcolor{red}{\textbf{3.80}}	&\textcolor{red}{\textbf{3.87}}	&\textcolor{red}{\textbf{3.18}}	&1.10	&1.50	&1.14	&2.35	&1.52 \\

 & MPED  &7.49	&2.72	&7.58	&3.25	&5.26	&5.90	&3.03	&5.38	&6.60	&5.23	&\textcolor{red}{\textbf{0.31}}	&\textcolor{red}{\textbf{0.65}}	&\textcolor{red}{\textbf{0.26}}	&\textcolor{red}{\textbf{0.67}}	&\textcolor{red}{\textbf{0.47}}\\
 \hline \hline
 \multirow{3}{*}{{AtlasNet}} &CD &\textcolor{blue}{\textbf{7.79}}	&\textcolor{blue}{\textbf{3.33}}	&\textcolor{blue}{\textbf{7.20}}	&\textcolor{blue}{\textbf{3.47}}	&\textcolor{blue}{\textbf{5.45}}	&9.46	&4.42	&6.67	&9.60	&7.54	&2.91	&3.59	&1.36	&12.63	&5.12\\

 & EMD &9.40	&3.66	&9.37	&4.30	&6.68	&\textcolor{blue}{\textbf{3.47}}	&\textcolor{blue}{\textbf{1.56}}	&3.80	&\textcolor{blue}{\textbf{3.87}}	&\textcolor{blue}{\textbf{3.18}}	&1.10	&\textcolor{blue}{\textbf{1.50}}	&1.14	&2.35	&1.52\\ 

 & MPED &11.33	&3.93	&8.53	&19.13	&10.73	&3.99	&2.04	&\textcolor{blue}{\textbf{3.73}}	&4.87	&3.66	&\textcolor{blue}{\textbf{0.92}}	&1.74	&\textcolor{blue}{\textbf{0.45}}	&\textcolor{blue}{\textbf{1.49}}	&\textcolor{blue}{\textbf{1.15}}
\\
 \hline \hline
 \multirow{3}{*}{FoldingNet++}&CD  &9.08	&3.08	&7.72	&3.48	&5.84	&10.08	&5.88	&12.86	&10.8	&9.91	&7.21	&6.07	&4.74	&13.30	&7.83\\

 & EMD &10.87	&4.40	&12.73	&6.16	&8.54	&\textcolor{cyan}{\textbf{3.49}}	&\textcolor{cyan}{\textbf{2.29}}	&4.03	&5.17	&\textcolor{cyan}{\textbf{3.75}}	&2.32	&3.67	&2.00	&4.61	&3.15\\

 & MPED &\textcolor{cyan}{\textbf{6.77}}	&\textcolor{cyan}{\textbf{2.39}}	&\textcolor{cyan}{\textbf{6.55}}	&\textcolor{cyan}{\textbf{2.49}}	&\textcolor{cyan}{\textbf{4.55}}	&4.58	&2.39	&\textcolor{cyan}{\textbf{3.98}}	&\textcolor{cyan}{\textbf{5.14}}	&4.02	&\textcolor{cyan}{\textbf{0.44}}	&\textcolor{cyan}{\textbf{0.87}}	&\textcolor{cyan}{\textbf{0.33}}	&\textcolor{cyan}{\textbf{0.64}}	&\textcolor{cyan}{\textbf{0.57}}
\\
\hline
\end{tabular}}
\end{center}

\end{table*}
Here we adopt ShapeNet dataset used in \cite{achlioptas2018learning} and validate the proposed MPED on three typical  networks for point cloud reconstruction with default network parameter setting expect training epochs, e.g., LatentNet \cite{achlioptas2018learning}, AtlasNet\cite{groueix2018papier}, and FoldingNet++ \cite{chen2019deep}. For a fair comparison, we randomly set training epochs of LatentNet, AtlesNet and FoldingNet++ as 500, 300, and 100 due to the default training epochs selected by presenting best CD value on the test sequences. Four sub-categories from ShapeNet are used as test sequences, e.g., ``Chair'', ``Airplane'', ``Table'' and ``Rifle''. Specifically, for each sub-category, we set 75\% as training set and 25\% as testing set. Table \ref{TABLE:generation_table} shows the reconstruction performance of three networks with various supervisions. We use {\textcolor{red}{\textbf{red}}}, {\textcolor{blue}{\textbf{blue}}} and {\textcolor{cyan}{\textbf{cyan}}} to highlight best performance for three networks, respectively.  Besides the individual performance, we also present average performance treating four categories as a whole.

We see that: i) for LatentNet and AtlasNet, \textbf{CD} presents best CD, worst EMD and JSD; \textbf{EMD} presents best EMD, worst CD, and passable JSD; MPED presents best JSD, while passable CD and EMD; ii) for FoldingNet++, MPED presents dominated performance while only several EMDs are awarded by \textbf{EMD}; iii) considering JSD as the third party metric that was not involved in training, the proposed MPED is more capable. When using \textbf{CD} as the supervision, it is less objective to prove its superiority by providing CD is best.

Fig. \ref{fig:generation} illustrates several reconstruction results for three networks under various supervisions.  We see that: i) the results of \textbf{CD} present structure losing or blurred, especially finer parts, e.g., the leg of a chair; ii) \textbf{EMD} presents some ``noise'' points around sample surface, which means the coordinate of a single point is not accurate enough. iii) the proposed MPED can reconstruct each part of samples with high quality.

{$\bullet$ 3D Shape Completion}

 \begin{table*}
 \caption{Shape completion on ShapeNet. The proposed MPED grants most of the top-level performance on different networks.  }\label{TABLE:competion_table}
\begin{center}
\setlength{\tabcolsep}{0.4mm}{
\begin{tabular}{|c|c|c|c|c|c|c|c|c|c|c|c|c|c|c|c|c|}
\hline
\multirow{2}{*}{Model} & \multirow{2}{*}{Loss}&\multicolumn{5}{|c|}{CD($\times$10e-3)} &\multicolumn{5}{|c|}{EMD($\times$10e1)}  &\multicolumn{5}{|c|}{JSD($\times$10e-1)} \\ \cline{3-17}
& &Chair &Air &Table &Car &Ave &Chair &Air &Table &Car &Ave &Chair &Air &Table &Car &Ave \\
\hline\hline
\multirow{3}{*}{{LatentNet}}  & CD &\textcolor{red}{\textbf{7.56}}	&3.52	&5.04	&4.72	&5.21	&3.73	&3.51	&3.63	&2.65	&3.38	&5.41	&4.00	&4.19	&3.67	&4.32\\

& EMD &7.58	&4.03	&5.03	&7.58	&6.05	&\textcolor{red}{\textbf{3.71}}	&3.59	&3.60	&3.71	&3.65	&\textcolor{red}{\textbf{5.33}}	&4.07	&\textcolor{red}{\textbf{4.16}}	&5.33	&4.72\\ 

 & MPED &7.73	&\textcolor{red}{\textbf{3.37}}	&\textcolor{red}{\textbf{4.87}}	&\textcolor{red}{\textbf{4.52}}	&\textcolor{red}{\textbf{5.12}}	&3.75	&\textcolor{red}{\textbf{3.30}}	&\textcolor{red}{\textbf{3.59}}	&\textcolor{red}{\textbf{2.61}}	&\textcolor{red}{\textbf{3.31}}	&5.37	&\textcolor{red}{\textbf{3.93}}	&4.20  &\textcolor{red}{\textbf{3.60}}	&\textcolor{red}{\textbf{4.28}}
\\
 \hline \hline

 \multirow{3}{*}{{PF-Net}}   & CD &\textcolor{blue}{\textbf{3.33}}	&\textcolor{blue}{\textbf{1.77}}	&\textcolor{blue}{\textbf{3.73}}	&\textcolor{blue}{\textbf{3.94}}	&\textcolor{blue}{\textbf{3.19}}	&3.33	&1.89	&3.28	&3.72	&3.05	&4.93	&3.66	&3.91	&3.62	&4.03\\
 & EMD  &4.78	&4.83	&4.83	&5.58	&5.01	&\textcolor{blue}{\textbf{1.26}}	&1.48	&\textcolor{blue}{\textbf{1.48}}	&\textcolor{blue}{\textbf{1.39}}	&\textcolor{blue}{\textbf{1.40}}	&5.13	&4.08	&4.08	&3.49	&4.20\\ 

 & MPED &3.60	&1.91	&4.04	&4.25	&3.45	&1.48	&\textcolor{blue}{\textbf{1.13}}	&1.96	&1.44	&1.50	&\textcolor{blue}{\textbf{4.60}}	&\textcolor{blue}{\textbf{3.22}}	&\textcolor{blue}{\textbf{3.58}}	&\textcolor{blue}{\textbf{3.17}}	&\textcolor{blue}{\textbf{3.64}}\\
  \hline \hline 
  
 \multirow{3}{*}{{PF-GAN}} & CD &3.82	&2.13	&4.56	&\textcolor{cyan}{\textbf{4.07}}	&3.64	&2.12	&1.66	&2.49	&3.67	&2.48	&4.78	&3.72	&3.94	&3.63	&4.02\\
 
 & EMD &4.67	&2.81	&4.84	&5.60	&4.48	&\textcolor{cyan}{\textbf{1.27}}	&\textcolor{cyan}{\textbf{1.01}}	&\textcolor{cyan}{\textbf{1.53}}	&\textcolor{cyan}{\textbf{1.36}}	&\textcolor{cyan}{\textbf{1.29}}	&5.12	&3.82	&4.15	&3.49	&4.14
\\ 
 & MPED  &\textcolor{cyan}{\textbf{3.80}}	&\textcolor{cyan}{\textbf{1.94}}	&\textcolor{cyan}{\textbf{4.32}}	&4.30	&\textcolor{cyan}{\textbf{3.59}}	&1.53	&1.19	&1.81	&1.43	&1.49	&4.60	&\textcolor{cyan}{\textbf{3.25}}	&\textcolor{cyan}{\textbf{3.61}}	&\textcolor{cyan}{\textbf{3.20}}	&\textcolor{cyan}{\textbf{3.67}}\\

\hline
\end{tabular}}
\end{center}

\end{table*}

Same to reconstruction, we use the ShapeNet dataset proposed in \cite{achlioptas2018learning}. Four sub-categories from ShapeNet are used as the test sequences, e.g., ``Chair'', ``Airplane'', ``Table'' and ``Car''. The incomplete samples are generated via the method proposed by \cite{huang2020pf}. Specifically, for each sub-category, we set 75\% as training set and 25\% as testing set. We use LatentNet \cite{achlioptas2018learning}, PF-Net and PF-GAN \cite{huang2020pf} with default network parameter setting to test the empirical performances of various loss functions on the task of shape completion.  Besides the individual performance, we also present average performance treating four categories as a whole. All the results are shown in Table \ref{TABLE:competion_table}. We see that: i) for LatentNet and PF-GAN, MPED provide dominated performances in \textbf{CD} and JSD while  EMD in PF-GAN is preferred by \textbf{EMD}; ii) for PF-Net, MPED presents best JSD, while passable CD and EMD; \textbf{CD} and \textbf{EMD} respectively give best CD and EMD.

The qualitative examples of shape completion are illustrated in Fig. \ref{fig:completion}. For LatentNet, the output is the whole sample, while PF-GAN and PF-Net only generate the missing part. We use black points to represent the input samples, grey points represent the hollow part (e.g., airplane tail), and orange points to highlight generated points. We see that: i) the tail of \textbf{CD} is quite blurred, while \textbf{EMD} is too tight with several scattered points far away from the reconstructed surface; and ii) the proposed MPED generates the most realistic hollow part.
\begin{figure}[pt]
	\centering
    \includegraphics[width=1\linewidth]{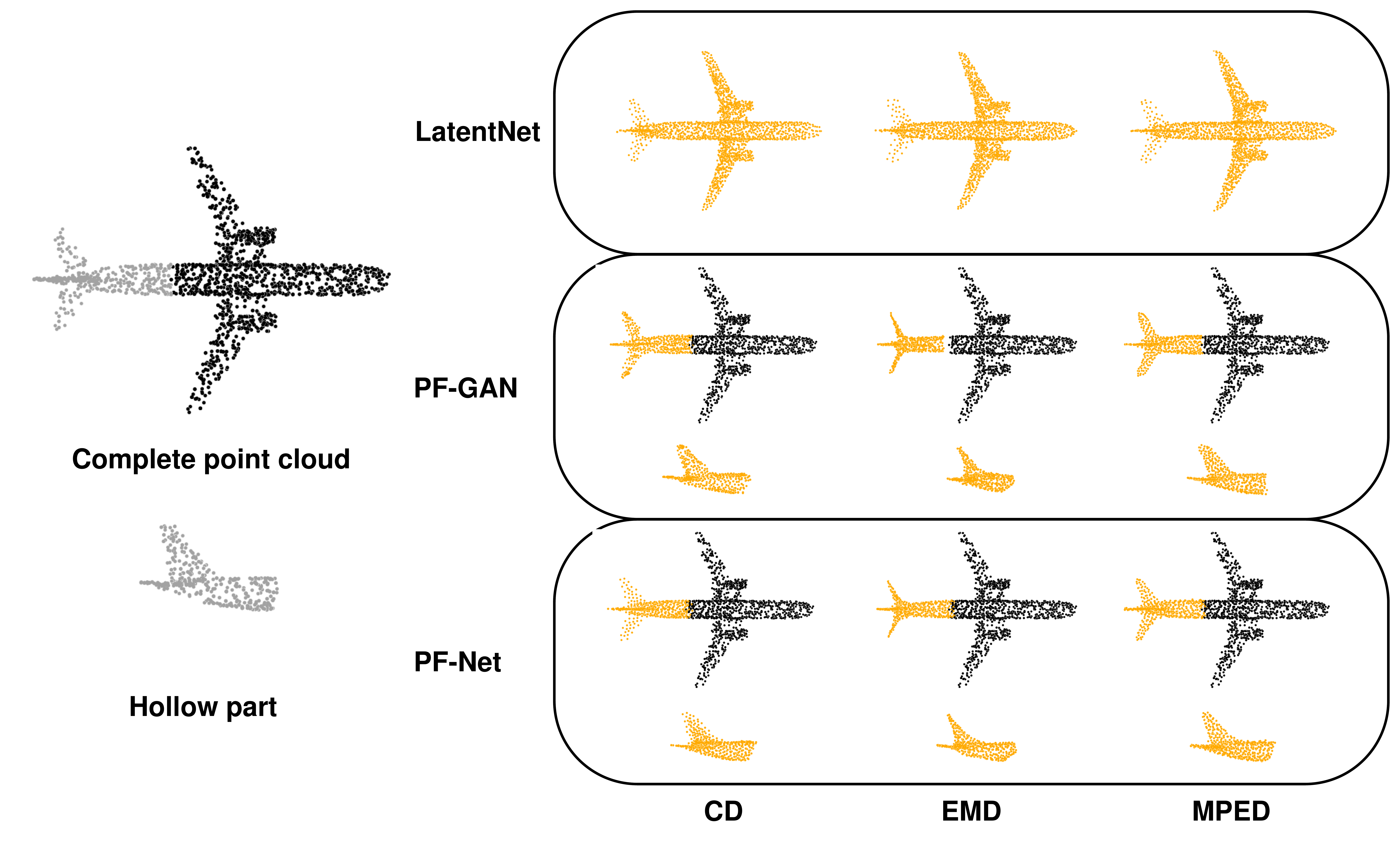}%
	\caption{Illustration of point cloud completion. The proposed MPED presents the best completion quality.}
	\label{fig:completion}
\end{figure}

\begin{figure*}[pt]
	\centering
    \includegraphics[width=0.9\linewidth]{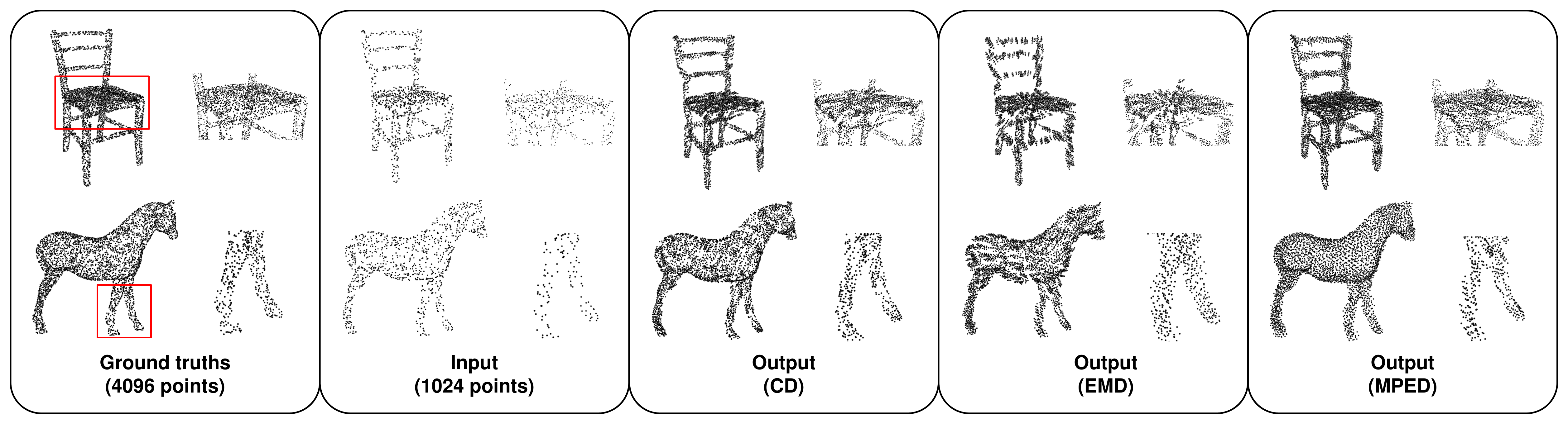}%
	\caption{Illustration of point cloud upsampling. The proposed MPED presents the best upsampling results.}
	\label{fig:upsampling}
\end{figure*}

{$\bullet$ 3D Point Cloud Upsampling}

\begin{table}
\caption{Upsampling results on PU-Net (CD ($\times$10e-4), EMD ($\times$10e1), and JSD ($\times$10e-2)). The proposed MPED presents most of the top-level performance.}\label{TABLE:sampling}
\begin{center}
\setlength{\tabcolsep}{0.4mm}{
\begin{tabular}{|c|c|c|c|c|c|}
\hline
\multirow{2}{*}{Model} &\multirow{2}{*}{Ratio}&\multirow{2}{*}{Loss} & \multicolumn{3}{|c|}{Upsampling} \\ \cline{4-6}
&  &   &CD &EMD &JSD   \\
\hline
\multirow{6}{*}{{PU-Net}} & \multirow{3}{*}{2}&CD  &6.50 &5.93 &10.12      \\
 &  &EMD &9.17 &4.92 &13.53    \\ 

 &  &MPED &\textcolor{blue}{\textbf{6.49}} &\textcolor{blue}{\textbf{4.59}} &\textcolor{blue}{\textbf{9.33}}    \\ \cline{2-6}
 & \multirow{3}{*}{4} &CD  &\textcolor{blue}{\textbf{4.96}} &9.62 &8.10      \\
 &  &EMD &7.47 &9.48 &11.20    \\ 
 &  &MPED &5.32 &\textcolor{blue}{\textbf{9.36}} &\textcolor{blue}{\textbf{7.43}}    \\ \cline{2-6}
 \hline
\end{tabular}}
\end{center}
\end{table}

We use PU-Net \cite{yu2018pu} to test the performance of three loss functions for the upsampling task. We vary the upsampling ratio as 2 and 4, respectively. And the we use the database provided by the \cite{yu2018pu}, all the experiment parameters are default parameters refer to the implementation proposed by \cite{PUGANpytorch}. The results are shown in Table \ref{TABLE:sampling}. Fig. \ref{fig:upsampling} illustrates some upsampling examples under $\mathbf{ratio=4}$. We zoom in some details for better observation.

We see that: i) for both $\mathbf{ratio=2}$ and $\mathbf{4}$, the MPED presents dominant performance; and ii) compared with MPED, \textbf{CD} presents more discontinuities, such as holes in horse's leg and chair's seat in Fig. \ref{fig:upsampling}; \textbf{EMD} generates some uneven texture, which damages the viewing experience to some extent.

\begin{table}
\begin{center}
\caption{Point cloud reconstruction on PED and MPED with different scales (CD ($\times$10e-4), EMD($\times$10e-1), JSD($\times$10e-2)).}\label{TABLE:neighbor_size_table}
\setlength{\tabcolsep}{0.5mm}{
\begin{tabular}{|c|c|c|c|c|}
\hline
\multirow{2}{*}{Model} & \multirow{2}{*}{$\Phi$} & \multicolumn{3}{|c|}{Reconstruction}   \\ \cline{3-5}
& &CD &EMD &JSD   \\
\hline\hline
\multirow{6}{*}{{LatentNet}}  & [1] &6.55 &9.64 &3.13       \\
  &[3]  &6.35 &7.20 &0.75    \\
  &[5] &6.56 &6.50 &0.54     \\
  &[10] &7.30 &5.75 &0.44      \\
  &[15] &7.68 &5.50 &0.46    \\ 
  &[10,5,1] &7.49 &5.90 &0.31 \\
 \hline
\end{tabular}}
\end{center}
\end{table}%

{\bf Ablation Study.}\label{sec:ablation_cv}

{$\bullet$ Size of Neighborhood}

We use point cloud reconstruction as a typical task.
For comparison, we test PED at different single scales ($\Psi=[1], [3], [5], [10], [15]$) and one MPED ($\Psi=[10,5,1]$). ``Chair'' is selected as test sequence of point cloud reconstruction, and LatentNet is used as the network. The results are shown in Table  \ref{TABLE:neighbor_size_table}. We see that: i) with the increase of neighbor size, the value of CD present an increasing tendency, while EMD and JSD become smaller; ii) when $K$ increased from 1 to 3, the EMD and JSD present an obvious gap, which means the point-collapse is significantly alleviated by injecting contextual information; iii) compared to PED at different single scales, MPED presents best JSD, which states that introducing multiscale characteristics contributes to the point distribution of generated samples.

{$\bullet$ Convergence rate}

To illustrate the effectiveness of the proposed MPED, Fig. \ref{fig:rate} (a) shows the tradeoff between performance and computational cost. We use LatentNet as an example. The y-axis shows the performance of JSD under different loss functions, and the x-axis shows the training time. We see that MPED achieves lower JSD while reducing the computational time by $3$ times compared to~\textbf{EMD}. Fig. \ref{fig:rate} (b)-(d) show the reconstruction loss of CD, EMD and JSD under different training epochs.

\begin{figure}[t]
  \centering
  \subfigure[]{\includegraphics[width=0.49\linewidth]{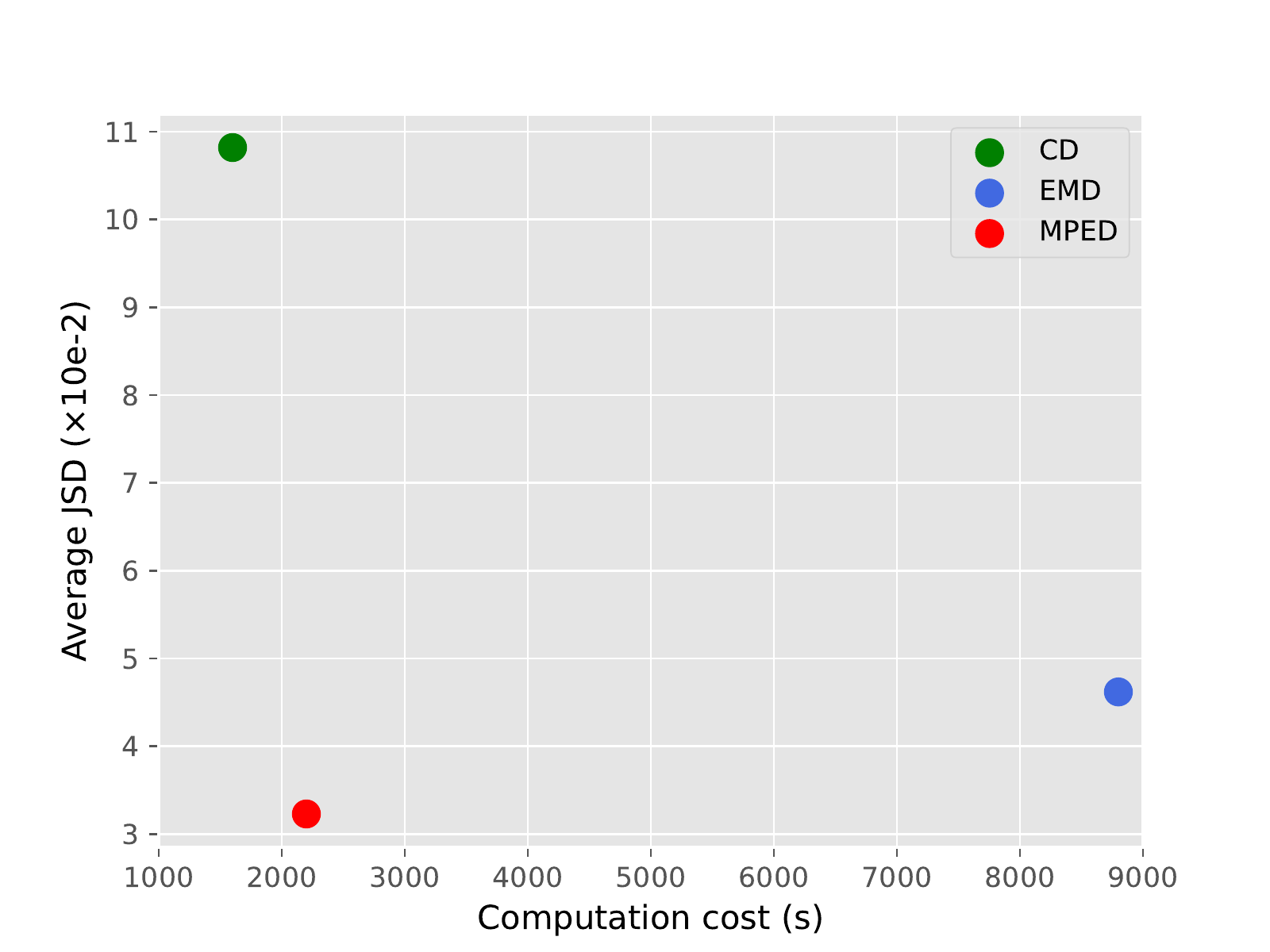}}
  \subfigure[]{\includegraphics[width=0.49\linewidth]{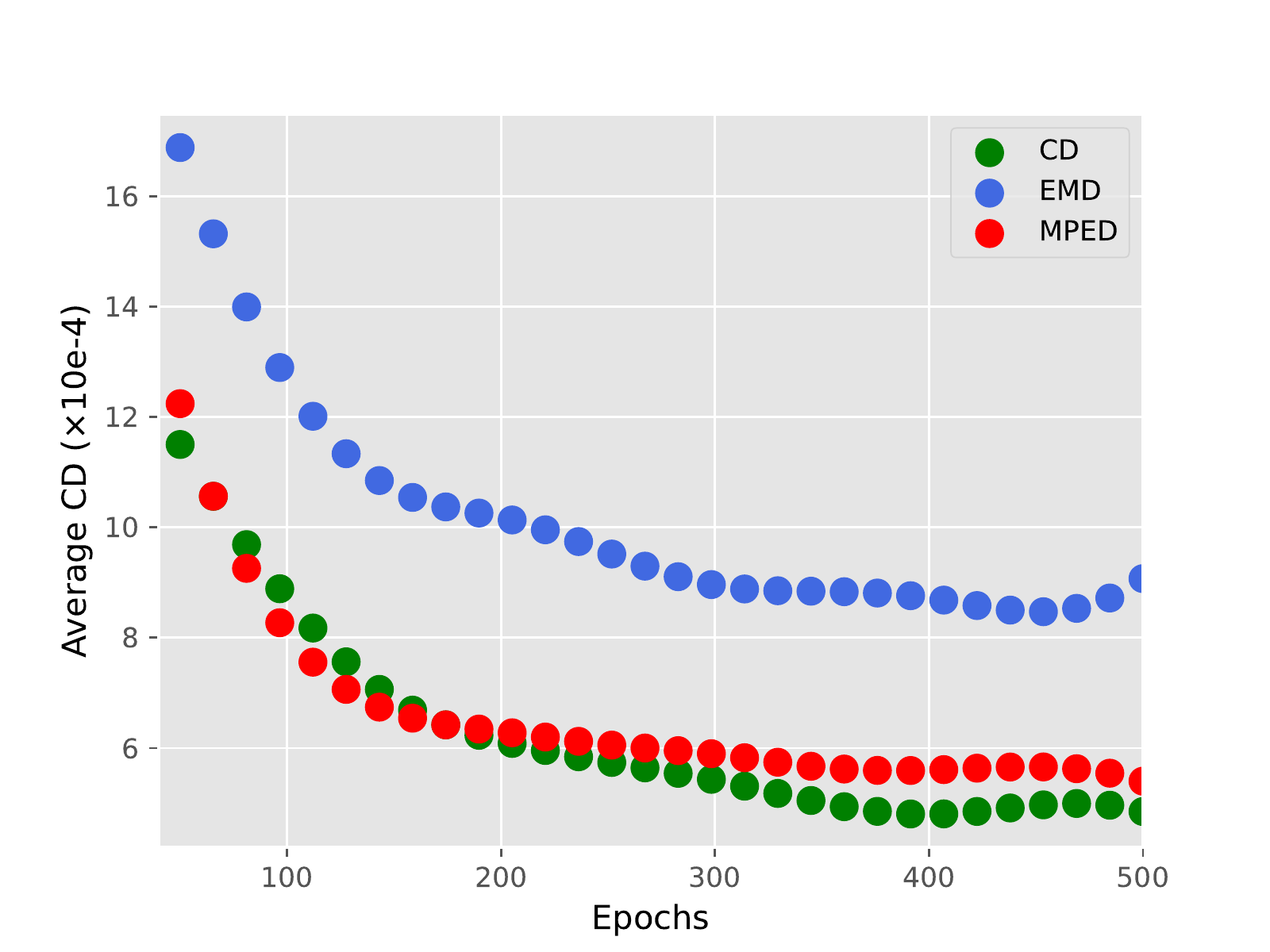}}\\
  \subfigure[]{\includegraphics[width=0.49\linewidth]{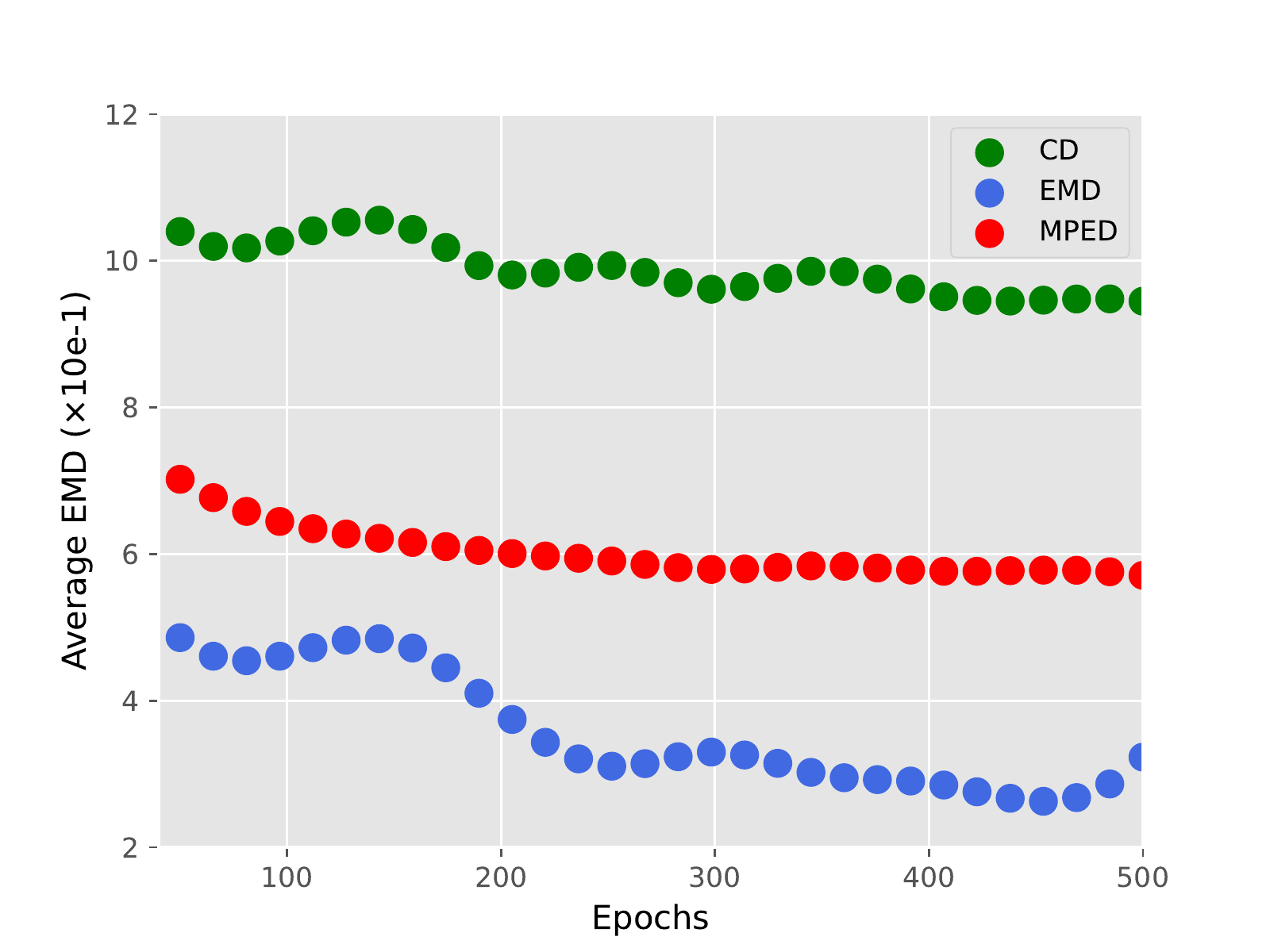}}
  \subfigure[]{\includegraphics[width=0.49\linewidth]{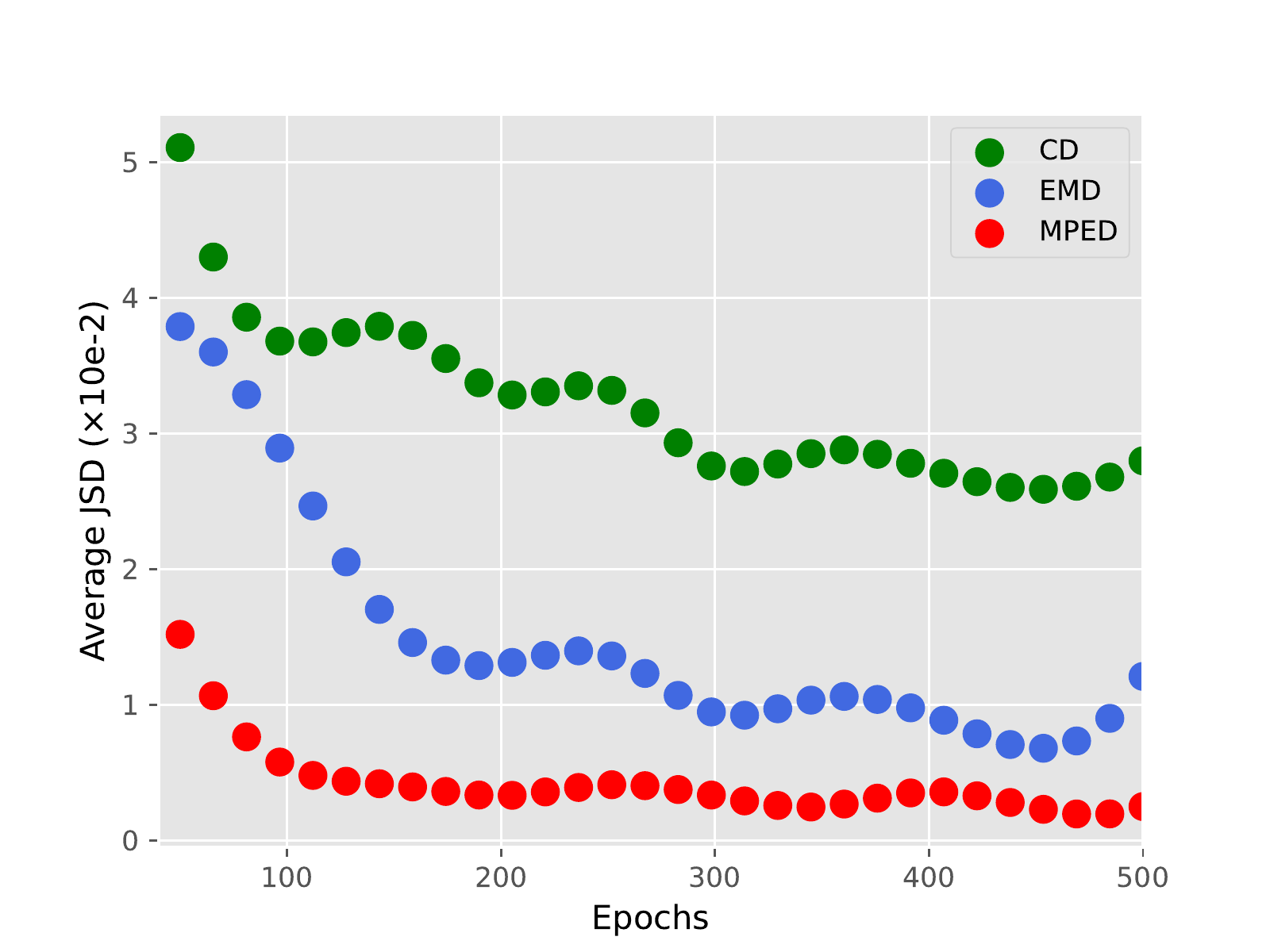}}\\
  \centering
  \caption{(a) Performance vs. time. The proposed MPED presents best performance with low time consuming; (b)-(d): reconstruction loss under different training epochs (b) CD; (c) EMD; (d) JSD. }
  \label{fig:rate}
  \end{figure}

 For CD, i.e., Fig. \ref{fig:rate} (b), MPED presents obvious faster convergence rate in the early stage, after which it realizes close performance with \textbf{CD}; for EMD, i.e., Fig. \ref{fig:rate} (c), MPED curve shows steady downward trend and costs around 300 epochs to reach stable and best results; for JSD, i.e., Fig. \ref{fig:rate} (d), 200 epochs is enough for MPED to converge to stable results, while CD and EMD need more than 400 epochs, respectively.

\section{Conclusions}
\label{sec:conclusion}
In this paper, we propose a universal point cloud distortion quantification named multiscale potential energy discrepancy (MPED). The proposed MPED is differentiable, has low computational complexity, and can discriminate distortions.  MPED presents robust performances for both human perception and machine perception tasks. Specifically, for human perception tasks, MPED reveals the best performance on two fairly large databases, i.e., SJTU-PCQA and LS-PCQA; for machine perception tasks, MPED shows better performance than Chamfer distance and Earth mover's distance on three typical unsupervised tasks, i.e., point cloud reconstruction, shape completion and upsampling. We further demonstrate the robustness of MPED in ablation study, the results show that the proposed MPED exhibits stable and reliable performance in terms of various color spaces, and presents high optimization efficiency as loss function.

\section*{Acknowledgments}
The authors would like to thank anonymous reviewers for their constructive comments to improve this manuscript.
This work was supported by the National Key R\&D Project of China (2018YFE0206700) and National Natural Science Foundation of China (61971282, U20A20185).

\ifCLASSOPTIONcaptionsoff
  \newpage
\fi

\bibliography{sample-base}

\begin{thebibliography}{10}
\providecommand{\url}[1]{#1}
\csname url@samestyle\endcsname
\providecommand{\newblock}{\relax}
\providecommand{\bibinfo}[2]{#2}
\providecommand{\BIBentrySTDinterwordspacing}{\spaceskip=0pt\relax}
\providecommand{\BIBentryALTinterwordstretchfactor}{4}
\providecommand{\BIBentryALTinterwordspacing}{\spaceskip=\fontdimen2\font plus
\BIBentryALTinterwordstretchfactor\fontdimen3\font minus
  \fontdimen4\font\relax}
\providecommand{\BIBforeignlanguage}[2]{{%
\expandafter\ifx\csname l@#1\endcsname\relax
\typeout{** WARNING: IEEEtran.bst: No hyphenation pattern has been}%
\typeout{** loaded for the language `#1'. Using the pattern for}%
\typeout{** the default language instead.}%
\else
\language=\csname l@#1\endcsname
\fi
#2}}
\providecommand{\BIBdecl}{\relax}
\BIBdecl

\bibitem{lim2020Augmented}
S.~{Lim}, M.~{Shin}, and J.~{Paik}, ``Point cloud generation using deep local
  features for augmented and mixed reality contents,'' in \emph{IEEE Int. Conf.
  Consumer Electronics (ICCE'20)}, 2020, pp. 1--3.

\bibitem{ChenLFGW:20}
S.~Chen, B.~Liu, C.~Feng, C.~Vallespi{-}Gonzalez, and C.~Wellington, ``3{D}
  point cloud processing and learning for autonomous driving,'' \emph{IEEE
  Signal Processing Magazine}, 2020.

\bibitem{Rusu2011PCL}
R.~B. {Rusu} and S.~{Cousins}, ``3{D} is here: Point cloud library
  (\textrm{PCL}),'' in \emph{IEEE Int. Conf. Robotics and Automation
  (ICRA'11)}, 2011, pp. 1--4.

\bibitem{Dore2012HBIM}
C.~{Dore} and M.~{Murphy}, ``Integration of historic building information
  modeling (hbim) and \textrm{3D GIS} for recording and managing cultural
  heritage sites,'' in \emph{Int. Conf. Virtual Systems and Multimedia
  (ICVSM'12)}, 2012, pp. 369--376.

\bibitem{yang2020inferring}
Q.~Yang, Z.~Ma, Y.~Xu, Z.~Li, and J.~Sun, ``Inferring point cloud quality via
  graph similarity,'' \emph{IEEE Trans. Pattern Analysis and Machine
  Intelligence}, pp. 1--1, 2020.

\bibitem{javaheri2019point}
A.~Javaheri, C.~Brites, F.~M.~B. Pereira, and J.~M. Ascenso, ``Point cloud
  rendering after coding: Impacts on subjective and objective quality,''
  \emph{IEEE Trans. Multimedia}, pp. 1--1, 2020.

\bibitem{achlioptas2018learning}
P.~Achlioptas, O.~Diamanti, I.~Mitliagkas, and L.~Guibas, ``Learning
  representations and generative models for 3d point clouds,'' in \emph{Int.
  conf. Machine Learning}.\hskip 1em plus 0.5em minus 0.4em\relax PMLR, 2018,
  pp. 40--49.

\bibitem{zhang2019unsupervised}
L.~{Zhang} and Z.~{Zhu}, ``Unsupervised feature learning for point cloud
  understanding by contrasting and clustering using graph convolutional neural
  networks,'' in \emph{2019 Int. Conf. 3D Vision (3DV'19)}, 2019, pp. 395--404.

\bibitem{chen2019deep}
S.~Chen, C.~Duan, Y.~Yang, D.~Li, C.~Feng, and D.~Tian, ``Deep unsupervised
  learning of 3{D} point clouds via graph topology inference and filtering,''
  \emph{IEEE Trans. Image Processing}, vol.~29, pp. 3183--3198, 2019.

\bibitem{liu2020morphing}
M.~Liu, L.~Sheng, S.~Yang, J.~Shao, and S.-M. Hu, ``Morphing and sampling
  network for dense point cloud completion,'' in \emph{Proceedings of the AAAI
  conference on artificial intelligence}, vol.~34, no.~07, 2020, pp.
  11\,596--11\,603.

\bibitem{huang2020pf}
Z.~Huang, Y.~Yu, J.~Xu, F.~Ni, and X.~Le, ``\textrm{PF-N}et: Point fractal
  network for 3{D} point cloud completion,'' in \emph{Proc. IEEE/CVF Conf.
  Computer Vision and Pattern Recognition (CVPR'20)}, 2020, pp. 7662--7670.

\bibitem{qian2019pu}
G.~Qian, A.~Abualshour, G.~Li, A.~Thabet, and B.~Ghanem, ``\textrm{Pu-gcn}:
  Point cloud upsampling using graph convolutional networks,'' in \emph{Proc.
  IEEE/CVF Conference on Computer Vision and Pattern Recognition (CVPR'21)},
  2021, pp. 11\,683--11\,692.

\bibitem{kuhn1955hungarian}
H.~W.Kuhn, ``The hungarian method for the assignment problem,'' \emph{Naval
  Research Logistics Quarterly}, pp. 83--97, 1955.

\bibitem{torlig2018novel}
E.~M. Torlig, E.~Alexiou, T.~A. Fonseca, R.~L. de~Queiroz, and T.~Ebrahimi, ``A
  novel methodology for quality assessment of voxelized point clouds,'' in
  \emph{Applications of Digital Image Processing XLI}, vol. 10752.\hskip 1em
  plus 0.5em minus 0.4em\relax Int. Soc. Optics and Photonics, 2018, p.
  107520I.

\bibitem{pang2020tearingnet}
J.~Pang, D.~Li, and D.~Tian, ``Tearing\textrm{n}et: Point cloud autoencoder to
  learn topology-friendly representations,'' in \emph{Proc. IEEE/CVF Conference
  on Computer Vision and Pattern Recognition (CVPR'21)}, 2021, pp. 7453--7462.

\bibitem{meynet2020pcqm}
G.~Meynet, Y.~Nehm{\'e}, J.~Digne, and G.~Lavou{\'e}, ``\textrm{PCQM}: A
  full-reference quality metric for colored 3{D} point clouds,'' in \emph{Int.
  Conf. Quality of Multimedia Experience (QoMEX'20)}, 2020.

\bibitem{Valsera1997reconstruction}
M.~Balsera, S.~Stepaniants, S.~Izrailev, Y.~Oono, and K.~Schulten,
  ``Reconstructing potential energy functions from simulated force-induced
  unbinding processes,'' \emph{Biophysical Journal}, vol.~73, no.~3, pp.
  1281--1287, 1997.

\bibitem{wang2003multiscale}
Z.~Wang, E.~P. Simoncelli, and A.~C. Bovik, ``Multiscale structural similarity
  for image quality assessment,'' in \emph{IEEE Asilomar Conf. Signals, Systems
  \& Computers (ACSS'2003)}, vol.~2, 2003, pp. 1398--1402.

\bibitem{yang2019modeling}
Q.~Yang, Z.~Ma, Y.~Xu, L.~Yang, W.~Zhang, and J.~Sun, ``Modeling the screen
  content image quality via multiscale edge attention similarity,'' \emph{IEEE
  Trans. Broadcasting}, vol.~66, no.~2, pp. 310--321, 2019.

\bibitem{zhang2021ms}
Y.~Zhang, Q.~Yang, and Y.~Xu, ``\textrm{MS-G}raph\textrm{SIM}: Inferring point
  cloud quality via multiscale graph similarity,'' in \emph{Proc. ACM Int.
  Conf. Multimedia (ACMMM'21)}, 2021, pp. 1230--1238.

\bibitem{new11-Yang_2022_CVPR}
Q.~Yang, Y.~Liu, S.~Chen, Y.~Xu, and J.~Sun, ``No-reference point cloud quality
  assessment via domain adaptation,'' in \emph{Proc. IEEE/CVF Conference on
  Computer Vision and Pattern Recognition (CVPR'22)}, June 2022, pp.
  21\,179--21\,188.

\bibitem{yang2020predicting}
Q.~{Yang}, H.~{Chen}, Z.~{Ma}, Y.~{Xu}, R.~{Tang}, and J.~{Sun}, ``Predicting
  the perceptual quality of point cloud: A \textrm{3D}-to-\textrm{2D}
  projection-based exploration,'' \emph{IEEE Trans. Multimedia}, 2020.

\bibitem{liu2022point}
Y.~Liu, Q.~Yang, Y.~Xu, and L.~Yang, ``Point cloud quality assessment: Dataset
  construction and learning-based no-reference metric,'' \emph{ACM Transactions
  on Multimedia Computing, Communications, and Applications (TOMM)}, 2022.

\bibitem{su2019perceptual}
H.~Su, Z.~Duanmu, W.~Liu, Q.~Liu, and Z.~Wang, ``Perceptual quality assessment
  of 3{D} point clouds,'' in \emph{IEEE Int. Conf. on Image Processing
  (ICIP'19)}, 2019, pp. 3182--3186.

\bibitem{alexiou2017subjective}
E.~Alexiou, ``On subjective and objective quality evaluation of point cloud
  geometry,'' in \emph{IEEE Int. Conf. Quality of Multimedia Experience
  (QoMEX'17)}, 2017, pp. 1--3.

\bibitem{alexiou2017performance}
E.~Alexiou and T.~Ebrahimi, ``On the performance of metrics to predict quality
  in point cloud representations,'' in \emph{Applications of Digital Image
  Processing XL}, vol. 10396, 2017, p. 103961H.

\bibitem{da2019point}
L.~A. da~Silva~Cruz, E.~Dumi{\'c}, E.~Alexiou, J.~Prazeres, R.~Duarte,
  M.~Pereira, A.~Pinheiro, and T.~Ebrahimi, ``Point cloud quality evaluation:
  Towards a definition for test conditions,'' in \emph{IEEE Int. Conf. Quality
  of Multimedia Experience (QoMEX'19)}, 2019, pp. 1--6.

\bibitem{alexious2018point}
E.~Alexious, A.~M. Pinheiro, C.~Duarte, D.~Matkovi{\'c}, E.~Dumi{\'c}, L.~A.
  da~Silva~Cruz, L.~G. Dmitrovi{\'c}, M.~V. Bernardo, M.~Pereira, and
  T.~Ebrahimi, ``Point cloud subjective evaluation methodology based on
  reconstructed surfaces,'' in \emph{Applications of Digital Image Processing
  XLI}, vol. 10752, 2018, p. 107520H.

\bibitem{alexiou2018pointt}
E.~Alexiou, T.~Ebrahimi, M.~V. Bernardo, M.~Pereira, A.~Pinheiro, L.~A. D.~S.
  Cruz, C.~Duarte, L.~G. Dmitrovic, E.~Dumic, D.~Matkovics \emph{et~al.},
  ``Point cloud subjective evaluation methodology based on 2d rendering,'' in
  \emph{IEEE Int. Conf. Quality of Multimedia Experience (QoMEX'18)}, 2018, pp.
  1--6.

\bibitem{alexiou2019exploiting}
E.~Alexiou, ``Exploiting user interactivity in quality assessment of point
  cloud imaging,'' in \emph{IEEE Int. Conf. Quality of Multimedia Experience
  (QoMEX'19)}, 2019, pp. 1--6.

\bibitem{wang2004ssim}
Z.~Wang, A.~C. Bovik, H.~R. Sheikh, and E.~P. Simoncelli, ``Image quality
  assessment: from error visibility to structural similarity,'' \emph{IEEE
  Trans. Image Processing}, vol.~13, no.~4, pp. 600--612, 2004.

\bibitem{tian2017geometric}
D.~Tian, H.~Ochimizu, C.~Feng, R.~Cohen, and A.~Vetro, ``Geometric distortion
  metrics for point cloud compression,'' in \emph{IEEE Int. Conf. Image
  Processing (ICIP'17)}, 2017, pp. 3460--3464.

\bibitem{new2-alexiou2018point}
E.~Alexiou and T.~Ebrahimi, ``Point cloud quality assessment metric based on
  angular similarity,'' in \emph{2018 IEEE Int. Conf. Multimedia and Expo
  (ICME'18)}, 2018, pp. 1--6.

\bibitem{new3-javaheri2020mahalanobis}
A.~Javaheri, C.~Brites, F.~Pereira, and J.~Ascenso, ``Mahalanobis based point
  to distribution metric for point cloud geometry quality evaluation,''
  \emph{IEEE Signal Processing Letters}, vol.~27, pp. 1350--1354, 2020.

\bibitem{new4-diniz2020local}
R.~Diniz, P.~G. Freitas, and M.~C. Farias, ``Local luminance patterns for point
  cloud quality assessment,'' in \emph{IEEE Int. Workshop on Multimedia Signal
  Processing (MMSPW'20)}, 2020, pp. 1--6.

\bibitem{9450013}
R.~Diniz, P.~G. Freitas, and M.~C.~Q. Farias, ``Color and geometry texture
  descriptors for point-cloud quality assessment,'' \emph{IEEE Signal
  Processing Letters}, vol.~28, pp. 1150--1154, 2021.

\bibitem{alexiou2020towards}
E.~Alexiou and T.~Ebrahimi, ``Towards a point cloud structural similarity
  metric,'' in \emph{IEEE Int. Conf. Multimedia \& Expo Workshops (ICMEW'20)},
  2020, pp. 1--6.

\bibitem{new1-xu2021epes}
Y.~Xu, Q.~Yang, L.~Yang, and J.-N. Hwang, ``\textrm{EPES}: Point cloud quality
  modeling using elastic potential energy similarity,'' \emph{IEEE Trans.
  Broadcasting}, vol.~68, no.~1, pp. 33--42, 2021.

\bibitem{new6-viola2020color}
I.~Viola, S.~Subramanyam, and P.~Cesar, ``A color-based objective quality
  metric for point cloud contents,'' in \emph{Int. Conf. Quality of Multimedia
  Experience (QoMEX'20)}, 2020, pp. 1--6.

\bibitem{new7-viola2020reduced}
I.~Viola and P.~Cesar, ``A reduced reference metric for visual quality
  evaluation of point cloud contents,'' \emph{IEEE Signal Processing Letters},
  vol.~27, pp. 1660--1664, 2020.

\bibitem{liu2021reduced}
Q.~Liu, H.~Yuan, R.~Hamzaoui, H.~Su, J.~Hou, and H.~Yang, ``Reduced reference
  perceptual quality model with application to rate control for video-based
  point cloud compression,'' \emph{IEEE Trans. Image Processing}, vol.~30, pp.
  6623--6636, 2021.

\bibitem{groueix2018papier}
T.~Groueix, M.~Fisher, V.~G. Kim, B.~C. Russell, and M.~Aubry, ``A
  papier-m{\^a}ch{\'e} approach to learning \textrm{3d} surface generation,''
  in \emph{Proc. IEEE conf. Computer Vision and Pattern Recognition (CVPR'18)},
  2018, pp. 216--224.

\bibitem{wu2021density}
T.~Wu, L.~Pan, J.~Zhang, T.~WANG, Z.~Liu, and D.~Lin, ``Balanced chamfer
  distance as a comprehensive metric for point cloud completion,'' in
  \emph{Advances in Neural Information Processing Systems}, M.~Ranzato,
  A.~Beygelzimer, Y.~Dauphin, P.~Liang, and J.~W. Vaughan, Eds., vol.~34.\hskip
  1em plus 0.5em minus 0.4em\relax Curran Associates, Inc., 2021, pp.
  29\,088--29\,100.

\bibitem{urbach2020dpdist}
D.~Urbach, Y.~Ben-Shabat, and M.~Lindenbaum, ``\textrm{DPD}ist: Comparing point
  clouds using deep point cloud distance,'' in \emph{European Conference on
  Computer Vision}.\hskip 1em plus 0.5em minus 0.4em\relax Springer, 2020, pp.
  545--560.

\bibitem{new9-liuqiPCC}
Q.~Liu, H.~Su, T.~Chen, H.~Yuan, and R.~Hamzaoui, ``No-reference
  bitstream-layer model for perceptual quality assessment of v-pcc encoded
  point clouds,'' \emph{IEEE Trans. Multimedia}, pp. 1--1, 2022.

\bibitem{BT500}
R.~BT.500-13, ``Methodology for the subjective assessment of the quality of
  television pictures,'' Jan. 2012.

\bibitem{teghtsoonian1971exponents}
R.~Teghtsoonian, ``On the exponents in \textrm{S}tevens' law and the constant
  in \textrm{E}kman's law.'' 1971.

\bibitem{wang2004image}
Z.~Wang, A.~C. Bovik, H.~R. Sheikh, and E.~P. Simoncelli, ``Image quality
  assessment: from error visibility to structural similarity,'' \emph{IEEE
  Trans. Image Processing}, vol.~13, no.~4, pp. 600--612, 2004.

\bibitem{chen2017fast}
S.~Chen, D.~Tian, C.~Feng, A.~Vetro, and J.~Kova{\v{c}}evi{\'c}, ``Fast
  resampling of three-dimensional point clouds via graphs,'' \emph{IEEE Trans.
  Signal Processing}, vol.~66, no.~3, pp. 666--681, 2017.

\bibitem{huang2015power}
M.~Huang, G.~Cui, M.~Melgosa, M.~S{\'a}nchez-Mara{\~n}{\'o}n, C.~Li, M.~R. Luo,
  and H.~Liu, ``Power functions improving the performance of color-difference
  formulas,'' \emph{Optics Express}, vol.~23, no.~1, pp. 597--610, 2015.

\bibitem{java2020resolution}
A.~Javaheri, C.~Brites, F.~Pereira, and J.~Ascenso, ``Improving psnr-based
  quality metrics performance for point cloud geometry,'' in \emph{IEEE Int.
  Conf. Image Processing (ICIP'20)}, 2020, pp. 3438--3442.

\bibitem{chang2015shapenet}
A.~X. Chang, T.~Funkhouser, L.~Guibas, P.~Hanrahan, Q.~Huang, Z.~Li,
  S.~Savarese, M.~Savva, S.~Song, H.~Su \emph{et~al.}, ``Shape\textrm{N}et: An
  information-rich 3{D} model repository,'' \emph{arXiv preprint
  arXiv:1512.03012}, 2015.

\bibitem{wu20153d}
Z.~Wu, S.~Song, A.~Khosla, F.~Yu, L.~Zhang, X.~Tang, and J.~Xiao, ``3{D}
  {S}hape{N}ets: A deep representation for volumetric shapes,'' in \emph{Proc.
  IEEE Conf. Computer Vision and Pattern Recognition (CVPR'15)}, 2015, pp.
  1912--1920.

\bibitem{comaniciu2003kernel}
D.~Comaniciu, V.~Ramesh, and P.~Meer, ``Kernel-based object tracking,''
  \emph{IEEE Transactions on pattern analysis and machine intelligence},
  vol.~25, no.~5, pp. 564--577, 2003.

\bibitem{7746675}
R.~Singh, A.~Chakraborty, and B.~S. Manoj, ``Graph {F}ourier transform based on
  directed {L}aplacian,'' in \emph{Int. Conf. Signal Processing and
  Communications (SPCOM'16)}, 2016, pp. 1--5.

\bibitem{su2019wpc}
H.~Su, Z.~Duanmu, W.~Liu, Q.~Liu, and Z.~Wang, ``Perceptual quality assessment
  of 3d point clouds,'' in \emph{IEEE Int. Conf. Image Processing (ICIP'19)},
  2019, pp. 3182--3186.

\bibitem{new10-wpc3.0}
Q.~Liu, H.~Su, Z.~Duanmu, W.~Liu, and Z.~Wang, ``Perceptual quality assessment
  of colored 3{D} point clouds,'' \emph{IEEE Trans. Visualization and Computer
  Graphics}, pp. 1--1, 2022.

\bibitem{video2003final}
VQEG, ``Final report from the video quality experts group on the validation of
  objective models of video quality assessment,'' \emph{[online].
  http://www.its.bldrdoc.gov/vqeg/vqeg-home.aspx}.

\bibitem{sheikh2006statistical}
H.~R. Sheikh, M.~F. Sabir, and A.~C. Bovik, ``A statistical evaluation of
  recent full reference image quality assessment algorithms,'' \emph{IEEE
  Trans. Image Processing}, vol.~15, no.~11, pp. 3440--3451, 2006.

\bibitem{geusebroek2000color}
J.-M. Geusebroek, R.~Van Den~Boomgaard, A.~W. Smeulders, and A.~Dev, ``Color
  and scale: The spatial structure of color images,'' in \emph{European Conf.
  Computer Vision (ECCV'00)}.\hskip 1em plus 0.5em minus 0.4em\relax Springer,
  2000, pp. 331--341.

\bibitem{ravi2020pytorch3d}
N.~Ravi, J.~Reizenstein, D.~Novotny, T.~Gordon, W.-Y. Lo, J.~Johnson, and
  G.~Gkioxari, ``Accelerating 3{D} deep learning with pytorch3{D},''
  \emph{arXiv:2007.08501}, 2020.

\bibitem{PytorchEMD}
H.~Fan, K.~Mo, and J.~Gu, ``Pytorch wrapper for point-cloud
  earth-mover-distance ({EMD}),'' 2019, [online].
  https://github.com/daerduoCarey/PyTorchEMD.

\bibitem{yu2018pu}
L.~Yu, X.~Li, C.-W. Fu, D.~Cohen-Or, and P.-A. Heng, ``{PU-N}et: Point cloud
  upsampling network,'' in \emph{Proc. IEEE Conf. Computer Vision and Pattern
  Recognition (CVPR'18)}, 2018, pp. 2790--2799.

\bibitem{PUGANpytorch}
\BIBentryALTinterwordspacing
Pytorch unofficial implementation of {PU-N}et and {PU-GAN}. [Online].
  Available: \url{https://github.com/UncleMEDM/PUGAN-pytorch}
\BIBentrySTDinterwordspacing

\end{thebibliography}

\bibliographystyle{IEEEtran}

\begin{IEEEbiography}
[{\includegraphics[width=1in,height=1.25in,clip,keepaspectratio]{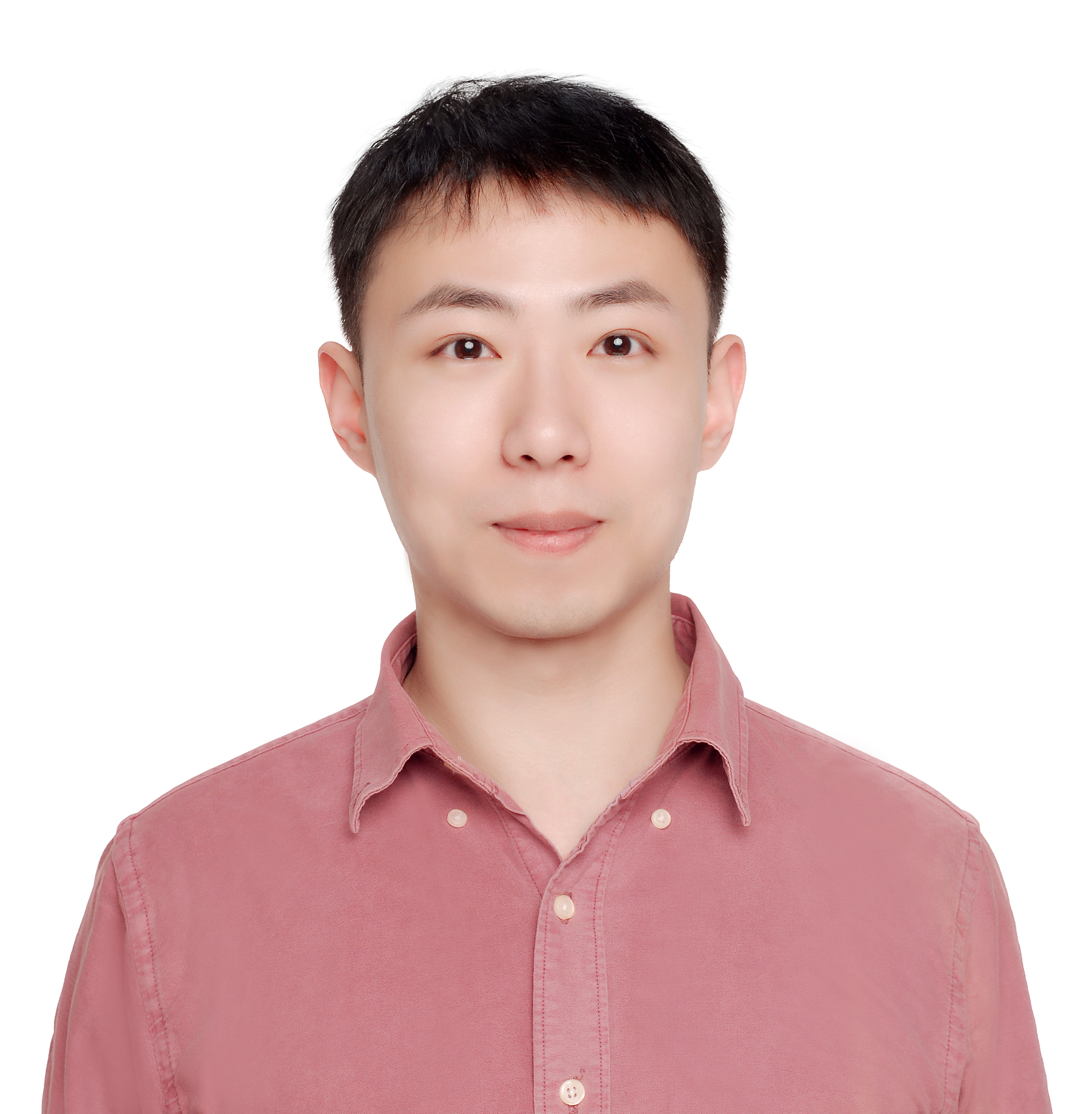}}]{Qi Yang}
received the B.S. degree in communication engineering from Xidian University, Xi’an, China, in 2017, and Ph.D degree in information and communication engineering at Shanghai Jiao Tong University, Shanghai, China, 2022. Now, he is a researcher in Tencent MediaLab. His research interests include image processing, 3D point cloud / mesh quality assessment and reconstruction. .
\end{IEEEbiography}

\begin{IEEEbiography}
[{\includegraphics[width=1in,height=1.25in,clip,keepaspectratio]{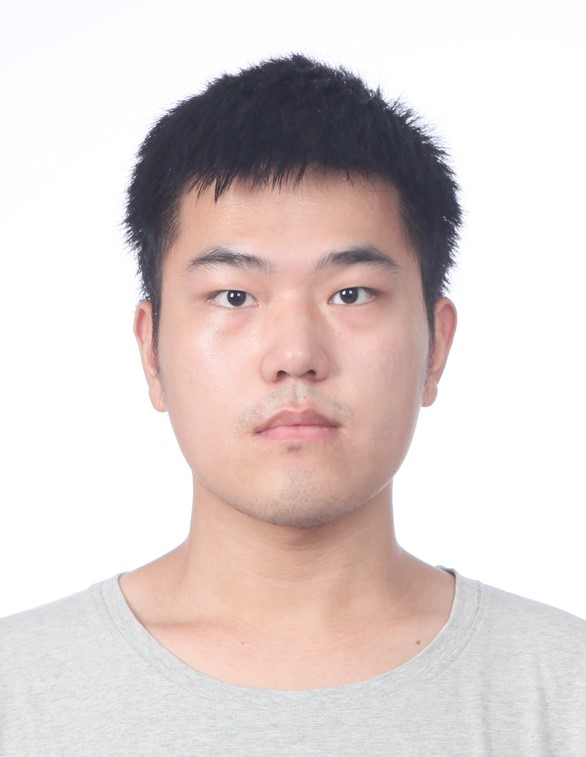}}]{Yujie Zhang}
received the B.S. degree in electronic and information engineering from Beihang University, Beijing, China, in 2020. He is currently working toward the Ph.D degree in information and communication engineering at Shanghai Jiao Tong University, Shanghai, China. His research interests include point cloud quality assessment and quality enhancement.
\end{IEEEbiography}

\begin{IEEEbiography}
[{\includegraphics[width=1in,height=1.25in,clip,keepaspectratio]{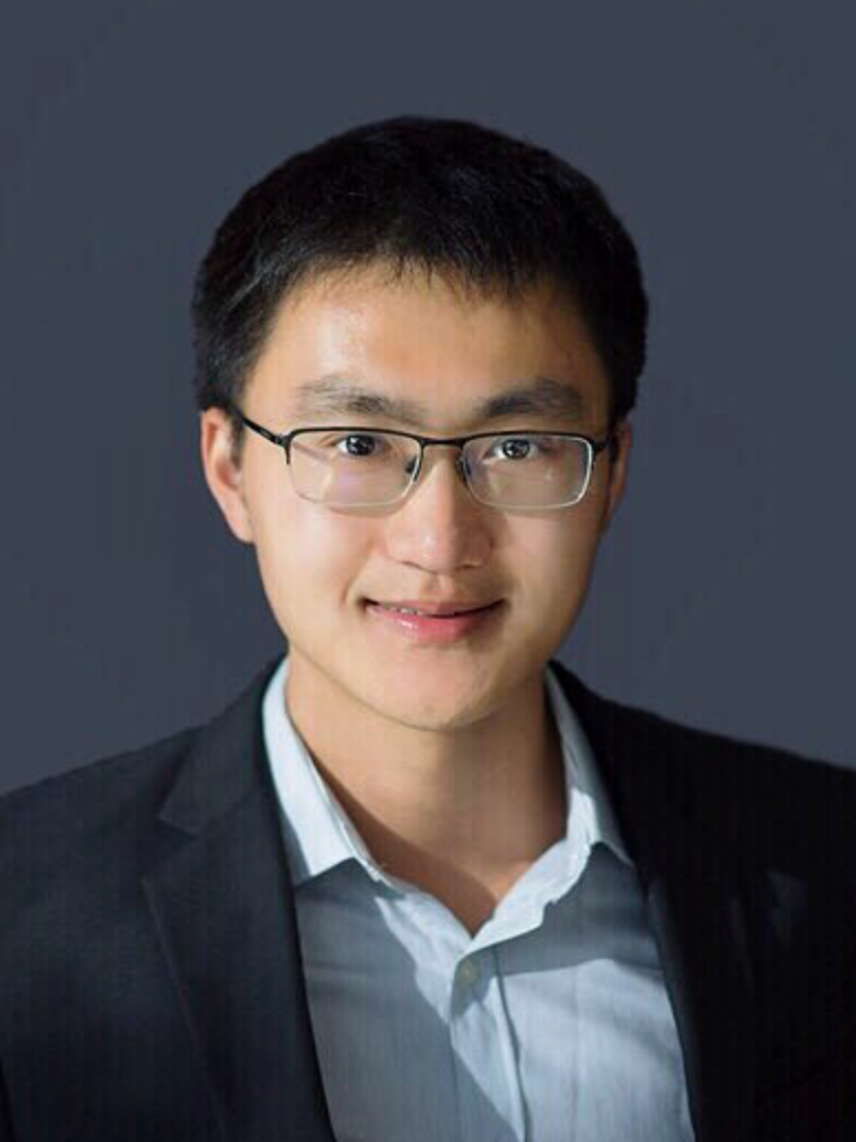}}]{Siheng Chen}
is a tenure-track associate professor of Shanghai Jiao Tong University. Before joining Shanghai Jiao Tong University, he was a research scientist at Mitsubishi Electric Research Laboratories (MERL), and an autonomy engineer at Uber Advanced Technologies Group (ATG), working on the perception and prediction systems of self-driving cars. Before joining industry, Dr. Chen was a postdoctoral research associate at Carnegie Mellon University. Dr. Chen received his doctorate in Electrical and Computer Engineering from Carnegie Mellon University, where he also received two master degrees in Electrical and Computer Engineering (College of Engineering) and Machine Learning (School of Computer Science), respectively. Dr. Chen's work on sampling theory of graph data received the 2018 IEEE Signal Processing Society Young Author Best Paper Award. His co-authored paper on structural health monitoring received ASME SHM/NDE 2020 Best Journal Paper Runner-Up Award and another paper on 3D point cloud processing received the Best Student Paper Award at 2018 IEEE Global Conference on Signal and Information Processing. Dr. Chen contributed to the project of scene-aware interaction, winning MERL President's Award. His research interests include graph machine learning, collective intelligence and autonomous driving.
\end{IEEEbiography}

\begin{IEEEbiography}
[{\includegraphics[width=1in,height=1.25in,clip,keepaspectratio]{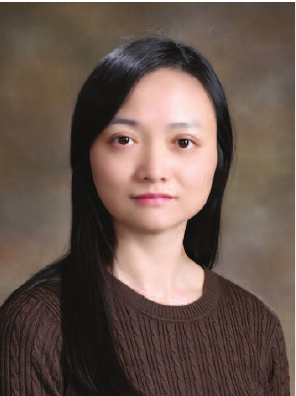}}]{Yiling Xu}
received the B.S., M.S., and Ph.D. degrees from the University of Electronic Science
and Technology of China in 1999, 2001, and 2004 respectively. From 2004 to 2013, she was
a Senior Engineer with the Multimedia Communication Research Institute, Samsung Electronics
Inc., South Korea. She joined Shanghai Jiao Tong University, where she is currently a Professor
in the areas of multimedia communication, 3-D point cloud compression and assessment, system
design, and network optimization. She is the Associate Editor of the IEEE TRANSACTIONS ON
BROADCASTING. She is also an active member in standard organizations, including MPEG, 3GPP,
and AVS.
\end{IEEEbiography}
\vspace{-10 mm}

\begin{IEEEbiography}
[{\includegraphics[width=1in,height=1.25in,clip,keepaspectratio]{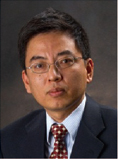}}]{Jun Sun}
is currently a professor and Ph.D. advisor of Shanghai Jiao Tong University. He received his B.S. in 1989 from University of Electronic Sciences and technology of China, Chengdu, China, and a Ph.D. degree in 1995 from Shanghai Jiao Tong University, all in electrical engineering. In 1996, he was elected as the member of HDTV Technical Executive Experts Group (TEEG) of China. Since then, he has been acting as one of the main technical experts for the Chinese government in the field of digital television and multimedia communications. In the past five years, he has been responsible for several national projects in DTV and IPTV fields. He has published over 50 technical papers in the area of digital television and multimedia communications and received 2nd Prize of National Science and  Technology Development Award in 2003, 2008. His research interests include digital television, image communication, and video encoding.
\end{IEEEbiography}

\vspace{-100mm}
\begin{IEEEbiography}
[{\includegraphics[width=1in,height=1.25in,clip,keepaspectratio]{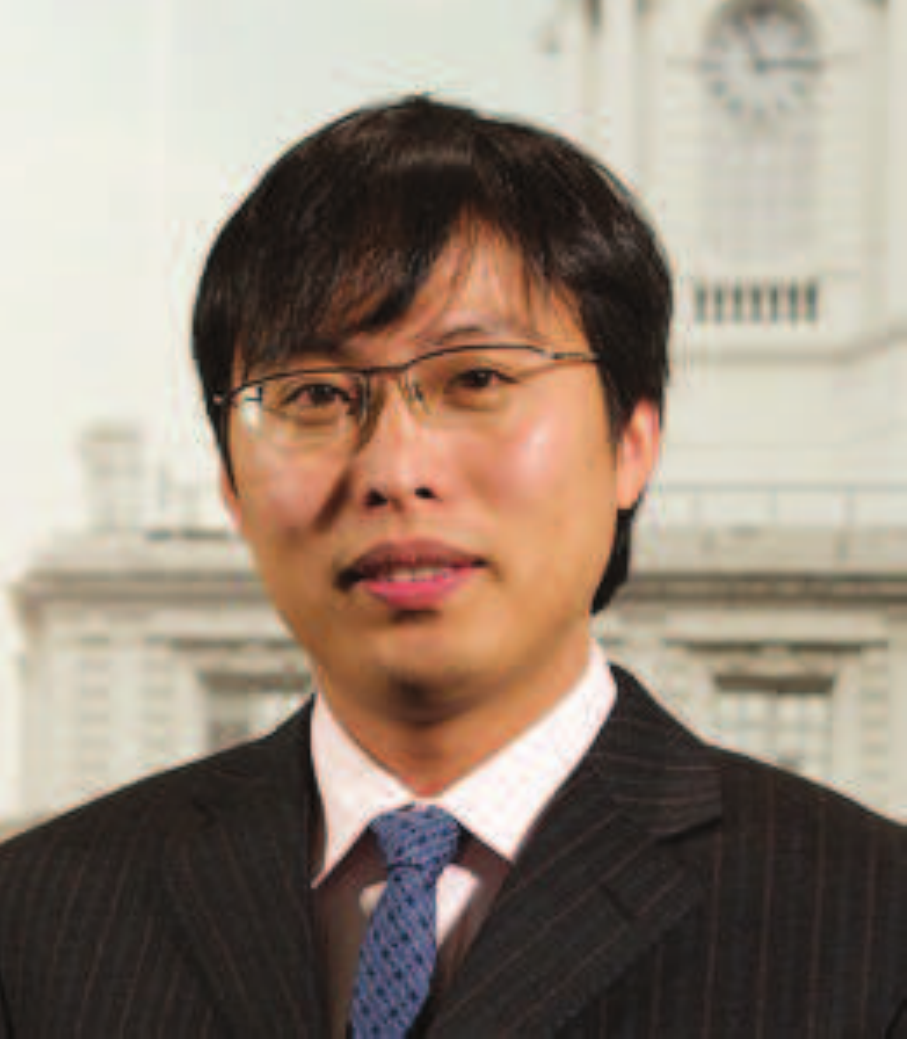}}]{Zhan Ma}
received the B.S. and M.S. degrees from Huazhong University of Science and Technology (HUST), Wuhan, China, in 2004 and 2006 respectively, and the Ph.D. degree from the New York University, New York, in 2011. He is now on the faculty of Electronic Science and Engineering School, Nanjing University, Jiangsu, 210093, China. From 2011 to 2014, he has been with Samsung Research America, Dallas TX, and  Futurewei Technologies, Inc., Santa Clara, CA, respectively. His current research focuses on the next-generation video coding, energy-efficient communication, gigapixel streaming and deep learning. He is a co-recipient of 2018 ACM SIGCOMM Student Research Competition Finalist, 2018 PCM Best Paper Finalist, and 2019 IEEE Broadcast Technology Society Best Paper Award.
\end{IEEEbiography}

\newpage
\appendices
\section{Additional Toy Example of PED in Human Perception Tasks}

In this section we provide additional toy samples to validate the geometrical and attributive distortion sensitivity of PED for human perception tasks. These samples focus more on the situations that PED has more than one centers. 

\begin{figure}[htbp]
	\centering
    \includegraphics[width=1\linewidth]{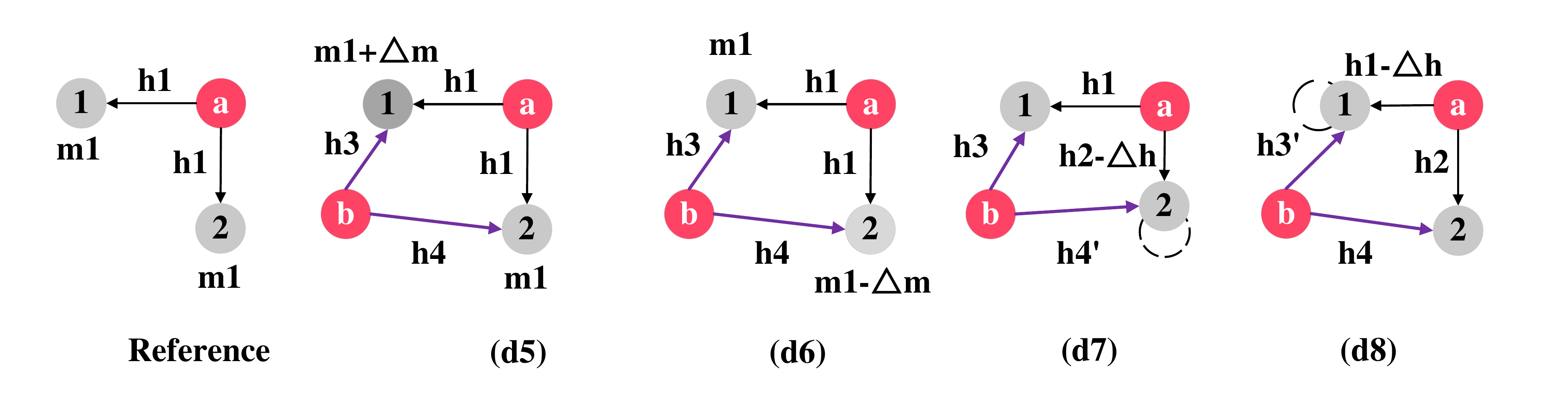}%
    \caption{Additional toy examples of geometrical and attributive distortion sensitivity.}
	\label{fig:case}
\end{figure}

{\bf Additional Case 1: Sensitivity to attributive distortions at various locations.} As illustrated in Fig. \ref{fig:case} d5 and d6,

$\diamond$ $\mathrm{CD_{d5}}=0 =\mathrm{CD_{d6}}$;

$\diamond$ $\mathrm{EMD_{d5}}=0 =\mathrm{EMD_{d6}}$;

$\diamond$ $\mathrm{PSNR_{YUV}^{d_5}}=\Delta m=\mathrm{PSNR_{YUV}^{d_6}}$;

$\diamond$ For neighborhood $a$,  $\mathrm{PED}^{a, d_5}=\mathrm{PED}^{a, d_6}$; For neighborhood $b$, because of $h_3 \neq h_4$, $\mathrm{PED}^{b, d_5}\neq\mathrm{PED}^{b, d_6}$, therefore,  $\mathrm{PED}^{ d_5}=\mathrm{PED}^{a, d_5}+ \mathrm{PED}^{b, d_5}\neq \mathrm{PED}^{a, d_6}+ \mathrm{PED}^{b, d_6}=\mathrm{PED}^{d_6}$.

{\bf Additional Case 2: Sensitivity to geometrical distortions at various locations.} As illustrated in Fig. \ref{fig:case} d7 and d8,

$\diamond$ $d_\mathrm{CD_{d5}}=\Delta h =d_\mathrm{CD_{d6}}$;

$\diamond$ $d_\mathrm{EMD_{d5}}=\Delta h =d_\mathrm{EMD_{d6}}$;

$\diamond$ $\mathrm{PSNR_{YUV}^{d_5}}=0=\mathrm{PSNR_{YUV}^{d_6}}$;

$\diamond$ For neighborhood $a$,  $\mathrm{PED}^{a, d_7}=\mathrm{PED}^{a, d_8}$; For neighborhood $b$, because of $h_3 \neq h_4$, $\mathrm{PED}^{b, d_7}\neq\mathrm{PED}^{b, d_8}$, therefore,  $\mathrm{PED}^{ d_7}=\mathrm{PED}^{a, d_7}+ \mathrm{PED}^{b, d_7}\neq \mathrm{PED}^{a, d_8}+ \mathrm{PED}^{b, d_8}=\mathrm{PED}^{d_8}$.

\begin{figure}[pt]
	\centering
     \subfigure[PLCC]{\includegraphics[width=0.8\linewidth]{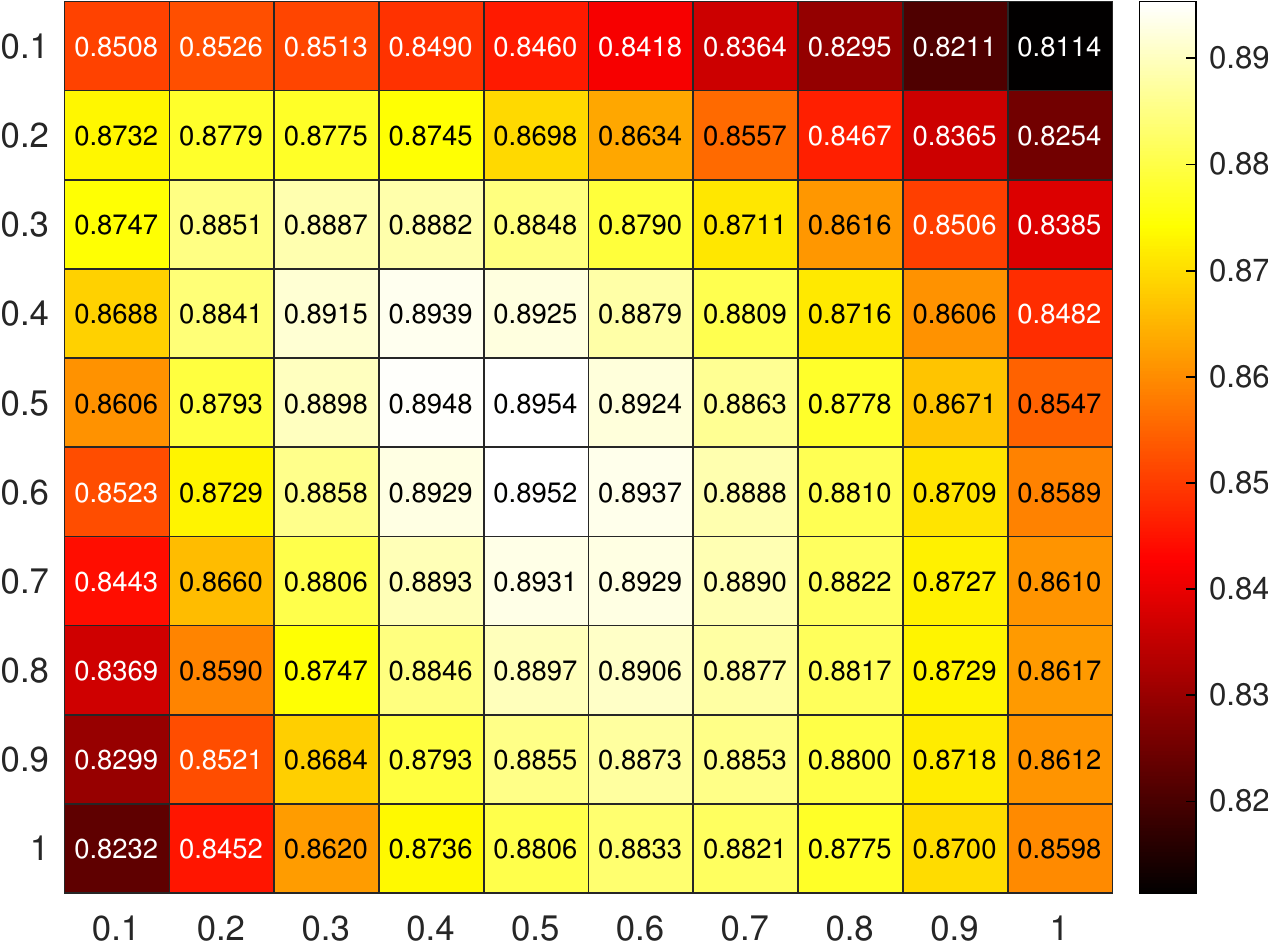}}
    \subfigure[SROCC]{\includegraphics[width=0.8\linewidth]{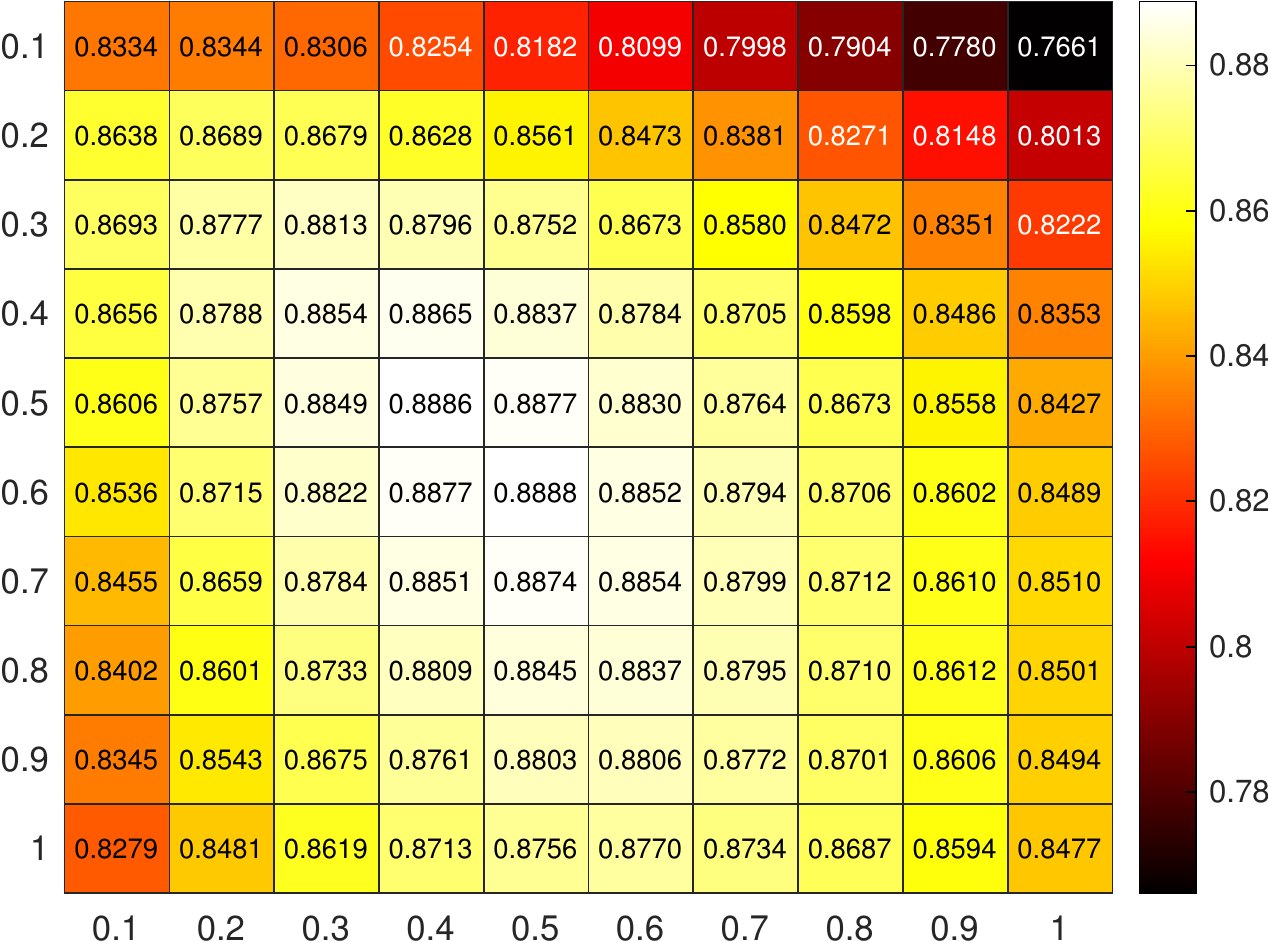}}
    \caption{Model performance with different power on SJTU-PCQA. The x and y axis are respectively $\beta$ and $\alpha$.}
	\label{fig:power comparison}
\end{figure}

\begin{table*}[t]
  \centering
  \caption{Performance comparison for FR-PCQA metrics on each individual distortion type in term of SROCC}
\begin{tabular}{|p{8em}|c|c|c|c|c|c|c|c|c|c|}
    \hline
     LS-PCQA  & {M-p2po} & {M-p2pl} & {H-p2po} & {H-p2pl} & {PSNR-YUV} & {PCQM} & {GraphSIM} & {EPES} & {MPED(p=1)} & {MPED(p=2)} \\
    \hline

    ColorNoise & {-} & {-} & {-} & {-} & 0.8340 & 0.8153 & 0.6868 & 0.7087 & \red{0.8389} & \red{0.8585} \\
    \hline
    Color QuantizationDither & {-} & {-} & {-} & {-} & \red{0.8260} & 0.7694 & 0.5168 & 0.5965 & 0.7592 & \red{0.8191} \\
    \hline
    ContrastDistortion & {-} & {-} & {-} & {-} & 0.6894 & \red{0.7440} & 0.6785 & 0.3106 & 0.6823 & \red{0.6903} \\
    \hline
    Correlated GaussianNoise & {-} & {-} & {-} & {-} & \red{0.9392} & 0.8507 & 0.5891 & 0.5373 & 0.8747 & \red{0.8747} \\
    \hline
    DownSample & \red{0.8813} & 0.6265 & \red{0.8415} & 0.8120 & 0.0149 & 0.5249 & 0.8426 & 0.7781 & 0.4410 & 0.5533 \\
    \hline
    GammaNoise & {-} & {-} & {-} & {-} & 0.7496 & 0.7077 & 0.5884 & 0.6783 & \red{0.8816} & \red{0.8613} \\
    \hline
    GaussianNoise & {-} & {-} & {-} & {-} & \red{0.645} & \red{0.7698} & 0.4826 & 0.3907 & 0.5830 & 0.5866 \\
    \hline
    GaussianShifting & 0.7414 & 0.7190 & \red{0.8298} & \red{0.8342} & 0.7552 & 0.8162 & 0.7428 & 0.5414 & 0.7862 & 0.7432 \\
    \hline
    HighFrequency Noise & {-} & {-} & {-} & {-} & 0.8363 & \red{0.9151} & 0.7629 & 0.6665 & \red{0.8902} & 0.8879 \\
    \hline
    LocalLoss & 0.5362 & 0.4979 & 0.0370 & 0.3030 & 0.6894 & 0.7708 & \red{0.8710} & 0.4964 & \red{0.6932} & 0.6890 \\
    \hline
    LocalOffset & \red{0.9373} & 0.9347 & 0.7705 & 0.7707 & 0.6671 & 0.8516 & 0.9062 & 0.8799 & 0.9320 & \red{0.9331} \\
    \hline
    LocalRotation & \red{0.8197} & 0.7126 & \red{0.8311} & 0.7345 & 0.3271 & 0.6572 & 0.7239 & 0.7466 & 0.6894 & 0.6385 \\
    \hline
    LumaNoise & {-} & {-} & {-} & {-} & 0.7727 & 0.7484 & \red{0.8178} & 0.6906 & \red{0.7943} & 0.7860 \\
    \hline
    MeanShift & {-} & {-} & {-} & {-} & 0.4223 & 0.6147 & 0.7062 & \red{0.7256} & \red{0.7634} & 0.6450 \\
    \hline
    Multiplicativ eGaussianNoise & {-} & {-} & {-} & {-} & 0.7512 & 0.7543 & 0.6484 & 0.7074 & \red{0.7795} & \red{0.7939} \\
    \hline
    PoissonNoise & {-} & {-} & {-} & {-} & \red{0.6824} & \red{0.6626} & 0.4218 & 0.4122 & 0.4751 & 0.4693 \\
    \hline
    QuantizationNoise & {-} & {-} & {-} & {-} & 0.7805 & 0.8480 & 0.6180 & 0.6048 & 0.\red{8534} & \red{0.8511} \\
    \hline
    RayleighNoise & {-} & {-} & {-} & {-} & \red{0.8938} & \red{0.8377} & 0.7069 & 0.6141 & 0.8015 & 0.8146 \\
    \hline
    SaltpepperNoise & {-} & {-} & {-} & {-} & 0.3950 & 0.6375 & 0.5601 & 0.6822 & \red{0.7187} & \red{0.7018} \\
    \hline
    Saturation Distortion & {-} & {-} & {-} & {-} & 0.7393 & \red{0.8506} & 0.7030 & 0.6322 & 0.7184 & \red{0.7150} \\
    \hline
    UniformNoise & {-} & {-} & {-} & {-} & \red{0.8982} & 0.6857 & 0.6455 & 0.5235 & \red{0.7754} & 0.7747 \\
    \hline
    UniformShifting & 0.8517 & 0.8517 & 0\red{.8575} & 0.8497 & 0.7969 & 0.6389 & \red{0.8697} & 0.8421 & 0.7480 & 0.7894 \\
    \hline
    V-PCC(lossless-geom-lossy-attrs) & 0.2347 & 0.1625 & 0.1399 & 0.0203 & 0.1084 & 0.1939 & 0.3564 & 0.4650 & \red{0.4292} & \red{0.6206} \\
    \hline
    V-PCC(lossy-geom-lossless-attrs) & 0.4141 & 0.4150 & 0.3231 & 0.4737 & 0.2844 & 0.5202 & 0.4858 & \red{0.7559} & 0.6457 & \red{0.7499} \\
    \hline
    V-PCC(lossy-geom-lossy-attrs) & 0.7534 & 0.7530 & 0.6409 & 0.6328 & 0.3721 & 0.7955 & 0.8082 & 0.8458 & \red{0.9008} & \red{0.9010} \\
    \hline
    AVS(limitlossyG-lossyA) & 0.9347 & \red{0.9458} & 0.9396 & 0.9294 & 0.8824 & - & 0.7439 & 0.9336 & 0.9169 & \red{0.9220} \\
    \hline
    AVS(losslessG-limitlossyA) & {-} & {-} & {-} & {-} & 0.8378 & - & \red{0.8772} & 0.7448 & 0.8636 & \red{0.8663} \\
    \hline
    AVS(losslessG-lossyA) & {-} & {-} & {-} & {-} & \red{0.9164} & - & 0.7958 & 0.7482 & \red{0.8928} & 0.8923 \\
    \hline
    G-PCC(lossless-geom-lossy-attrs & {-} & {-} & {-} & {-} & 0.5679 & 0.8621 & 0.6645 & 0.7319 & \red{0.8779} & \red{0.8839} \\
    \hline
    G-PCC(lossless-geom-nearlossless-attrs) & {-} & {-} & {-} & {-} & 0.8783 & \red{0.9377} & 0.8994 & 0.7908 & \red{0.9085} & 0.9059 \\
    \hline
    G-PCC(lossy-geom-lossy-attrs) & 0.9554 & 0.9255 & \red{0.9605} & \red{0.9469} & 0.7308 & 0.8450 & 0.8674 & 0.9402 & 0.9413 & 0.9436 \\
    \hline
    Octree & 0.7790 & 0.7882 & 0.8199 & 0.7521 & 0.5238 & 0.6762 & 0.7575 & 0.7557 & 0.8356 & \red{0.8603} \\
    \hline
    Reconstruction & 0.8470 & \red{0.8354} & \red{0.8125} & 0.8114 & 0.2131 & 0.7206 & 0.6475 & 0.5146 & 0.3075 & 0.5522 \\
    \hline

   \multicolumn{1}{r}{} & \multicolumn{1}{r}{} & \multicolumn{1}{r}{} & \multicolumn{1}{r}{} & \multicolumn{1}{r}{} & \multicolumn{1}{r}{} & \multicolumn{1}{r}{} & \multicolumn{1}{r}{} & \multicolumn{1}{r}{} & \multicolumn{1}{r}{} & \multicolumn{1}{r}{} \\
    
    \hline
    WPC   & {M-p2po} & {M-p2pl} & {H-p2po} & {H-p2pl} & {PSNR-yuv} & {PCQM} & {GraphSIM} & {EPES} & {MPED(p=1)} & {MPED(p=2)} \\
    \hline
    DownSample & \red{0.9005} & 0.8499 & 0.9049 & 0.8614 & 0.7069 & 0.8745 & 0.8975 & 0.7847 & 0.8970 & \red{0.9018} \\
    \hline
    GaussianNoise Contamination & 0.7286 & 0.7373 & 0.6888 & 0.6925 & 0.7767 & \red{0.8860} & 0.8402 & 0.7675 & \red{0.8797} & 0.8539 \\
    \hline
    G-PCC(Octree)  & {-} & {-} & {-} & {-} & 0.8259 & \red{0.8944} & 0.8553 & 0.7752 & \red{0.8687} & 0.8390 \\
    \hline
    G-PCC(Trisoup) & 0.4648 & 0.4626 & 0.2932 & 0.3550 & 0.6462 & \red{0.8212} &\red{0.8158} & 0.4248 & 0.5511 & 0.4933 \\
    \hline
    V-PCC  & \red{0.6977} & \red{0.7056} & 0.4450 & 0.5589 & 0.3445 & 0.6431 & 0.6124 & 0.3355 & 0.4755 & 0.4520 \\
    \hline
    
    \end{tabular}
  \label{tab:test for distortionl}%
\end{table*}%

\section{Additional Experiment for Human Perception Task} \label{FirstAppendix}
In this section we provides additional experiment results of MPED on human perception tasks, including evaluation on individual distortion type, size of neighborhood and power coefficients.
\subsection{Evaluation on Individual Distortion Type}
We compare the performance of FR-PCQA metrics towards different point cloud distortions on the LS-PCQA and WPC databases. SROCC scores are shown as the only evaluation measure and the results are shown in Table \ref{tab:test for distortionl}. For each distortion type in each database, we use \textbf{boldface} to highlight the algorithm with the top two SROCC among all competing metrics.

{$\bullet$ LS-PCQA database.}

There exist 34 distortion types in LS-PCQA and each distortion has 30 samples with MOS. In general, we can divide them into three categories: 17 types of color distortions which do not distort any geometry information, including as a wide range of color noises; 10 types of compression distortions caused by point cloud compression algorithms, including V-PCC, G-PCC (Octree-Predlift), Octree and  AVS compression algorithms; and 7 types of others distortions, including geometry shifting, downsampling and local transformation. From Table \ref{tab:test for distortionl}, we can see that: i) MPED ($p=1$ or $p=2$) is among top 2 models for 24 types of distortion, followed by 10 types for PCQM and 7 types for PSNR-YUV. It demonstrates that our method performs well on individual distortion type. ii) MPED is more capable of these color distortions and compressed distortions. Specifically, MPED becomes the top 2 models for 12 color distortion types and 6 compression distortion types.

{$\bullet$ WPC database.}
WPC database consists of 5 types of distortions, including downsample (60 samples), Gaussian Noise Contamination (180 samples), G-PCC (Octree) (80 samples), G-PCC (Trisoup) (240 samples) and V-PCC (180 samples).  From Table \ref{tab:test for distortionl}, we can see that: i) MPED shows competitive performance for downsample, Gaussian Noise Contamination and G-PCC (Octree). Specifically, MPED becomes the top 2 models for the above 3 distortion types, followed by 2 types for PCQM. ii) MPED does not work well on G-PCC (Trisoup) and V-PCC. The common characteristic of these two distortions is that they can both increase the point number greatly after compression, which usually does not lead to severe visual quality degradation. Therefore, one reasonable hypothesis is that, in the desigh of MPED, we use K-nearest-neighbor searching for the neighborhood construction, which is sensitive to the increase of point density. As a comparison, GraphSIM uses radius searching for the neighborhood construction and extracts average color gradient as quality index, which shows better performance on the above two distortions than MPED.  Therefore, one possible improvement for MPED is to adjust the method of neighborhood construction and average the potential energy discrepancy.

\subsection{Evaluation on Size of Neighborhood}
We test the performance of PED ($p=2$) on SJTU-PCQA database under different size of neighborhood to highlight the influence of scales. Considering the points of samples used in human perception are usually have more points, we set the neighborhood size $\Phi$ as $\Phi = [5], [10], [30], [50], [100]$ and show the results in Table \ref{TABLE:scale_table}. We see that, with the size of neighborhood increasing, model performance shows a downward trend. One possible reason for such phenomenon is that human perception has a marginal effect, those points farther from the center will influence the final result and lead to unreliable performance. Therefore, it is advisable to control the neighborhood size when applying MPED for human perception tasks.

\begin{table}[pt]
\begin{center}
\caption{Model Performance with different scales on
SJTU-PCQA.}\label{TABLE:scale_table}
\setlength{\tabcolsep}{0.5mm}{
\begin{tabular}{|c|c|c|c|c|}
\hline
\multirow{2}{*}{Database} &\multirow{2}{*}{$\Phi$}& \multicolumn{3}{|c|}{Quantification}   \\ \cline{3-5}  & &PLCC &SROCC &RMSE   \\
\hline\hline
\multirow{5}{*}{{SJTU-PCQA}}  & [5] &0.9017 &0.8934 &1.0491       \\
  &[10]  &0.8892 &0.8829 &1.1101    \\
  &[20]  &0.8600 &0.8560 &1.2383    \\
  &[50]  &0.8057 &0.8026 &1.4375    \\
  &[100] &0.7531 &0.7339 &1.5965    \\ 
 \hline
\end{tabular}}
\end{center}
\end{table}%

\subsection{Evaluation on Power Coefficients}
$\alpha$ and $\beta$ in Section 5.2 are used as the power coeffecients of two stimuli (i.e., color difference and geometry shifting) to derive the formulation of mass and spatial field  based on Steven's power law. In this part we test the performance of MPED ($p=2$) under different $[\alpha, \beta]$ and the show the results in Fig. \ref{fig:power comparison}. We see that: i) $[\alpha, \beta]= [0.5, 0.5]$ provides the best PLCC and the third best SROCC, which supports our parameter selection in human perception task. ii) Two extreme parameter setting, $[\alpha, \beta]=[1,0.1],[0.1,1]$, provide the worst performance. In these cases, the potential energy discrepancy is dominated by the stimulus with high power coefficient because too low power coefficient weakens the influence of the other stimulus. 

\end{document}